\newtheorem{example}{Example} 
\newtheorem{theorem}{Theorem}
\newtheorem{lemma}[theorem]{Lemma} 
\newtheorem{proposition}[theorem]{Proposition}
\newtheorem{remark}[theorem]{Remark}
\newtheorem{corollary}[theorem]{Corollary}
\newtheorem{definition}[theorem]{Definition}
\newcommand{\vast}{\bBigg@{4}}
\newcommand{\Vast}{\bBigg@{5}}
\newcommand{\stateSpace}{\mathcal{X}}
\newcommand{\stateOne}{x}
\newcommand{\stateTwo}{y}
\newcommand{\stateThree}{z}
\newcommand{\actionSpace}{\mathcal{A}}
\newcommand{\transitionFn}{\mathcal{P}}
\newcommand{\piTransitionFn}{\transitionFn^{\pi}}
\newcommand{\piStateOneTransitionFn}{\piTransitionFn_{\stateOne}}
\newcommand{\piStateTwoTransitionFn}{\piTransitionFn_{\stateTwo}}
\newcommand{\rewardFn}{\mathcal{R}}
\newcommand{\piRewardFn}{\rewardFn^{\pi}}
\newcommand{\piStateOneRewardFn}{r^{\pi}_{\stateOne}}
\newcommand{\piStateOneRewardDist}{\piRewardFn_{\stateOne}}
\newcommand{\probabilitySpace}{\mathscr{P}}
\newcommand{\Epi}{\mathbb{E}_\pi}
\newcommand{\kernelRkhs}{\mathcal{H}_{k^\pi}}
\newcommand{\W}{\mathcal{W}}
\newcommand{\ksme}{d^{\pi}_{ks}}
\newcommand{\given}{\,\bigg|\,}
\newcommand{\R}{\mathbb{R}}
\DeclareMathOperator*{\E}{\mathbb{E}}
\newcommand{\cH}{\mathcal{H}}
\newcommand{\xPrime}{X'}
\newcommand{\yPrime}{Y'}
\newcommand{\lk}{d_{\text{ŁK}}}
\begin{document}

\title{A Kernel Perspective on Behavioural Metrics for Markov Decision Processes}

\author{\name Pablo Samuel Castro$^*$ \email psc@google.com \\
       \addr Google DeepMind
       \AND
       \name Tyler Kastner$^{*,\dagger}$ \email tyler.kastner@mail.mcgill.ca \\
       \addr University of Toronto
       \AND
       \name Prakash Panangaden$^*$ \email prakash@cs.mcgill.ca \\
       \addr McGill University
       \AND
       \name Mark Rowland$^*$ \email markrowland@google.com \\
       \addr Google DeepMind 
       }

\maketitle

\def\thefootnote{*}\footnotetext{Authors listed in alphabetical order. See below for details of contributions.}
\def\thefootnote{$\dagger$}\footnotetext{Work done while a student at McGill University.}

\begin{abstract}
    {\color{black} Behavioural metrics have been shown to be an effective mechanism for constructing representations in reinforcement learning.}
    We present a novel perspective on behavioural metrics for Markov decision processes via the use of positive definite kernels. {\color{black}We leverage this new perspective to} define a new metric that is provably equivalent to the recently introduced MICo distance \citep{castro21mico}. The kernel perspective {\color{black} further} enables us to provide new theoretical results, {\color{black} which has so far eluded prior work. These include bounding value function differences by means of our metric, and the demonstration that our metric can be {\em provably} embedded into a finite-dimensional Euclidean space with low distortion error. These are two} crucial {\color{black} properties} when using behavioural metrics for reinforcement learning representations. We complement our theory with strong empirical results that demonstrate the effectiveness of these methods in practice.
\end{abstract}

\renewcommand{\thefootnote}{\arabic{footnote}}

\section{Introduction}
As tabular methods are insufficient for most state spaces, function approximation in reinforcement learning is a well-established paradigm for estimating functions of interest, such as the expected sum of discounted returns $V$ \citep{Sutton1998}. A general value function approximator can be seen as a composition of a ${m}$-dimensional {\em embedding} $\phi: \mathcal{X} \rightarrow\mathbb{R}^{m}$, which maps a state space $\mathcal{X}$ to a ${m}$-dimensional space $\mathbb{R}^{m}$,
and a {\em function approximator} $\psi:\mathbb{R}^{m}\rightarrow\mathbb{R}$ (e.g. $V\approx \psi\circ\phi$). While $\phi$ can be fixed, as is the case in linear function approximation \citep{Baird1995ResidualAR,konidaris2011fourier}
an increasingly popular approach is to learn both $\psi$ and $\phi$ concurrently, typically with the use of deep neural networks. This raises the question of what constitutes a good embedding $\phi$, which forms the basis of the field of representation learning.

Previous approaches to learning $\phi$ include spectral decompositions of transition and reward operators \citep{dayan93,Mahadevan2007}
and implicit learning through the use of auxiliary tasks \citep{lange2010deep,finn2015learning,jaderberg2016reinforcement,shelhamer2016loss,hafner2019learning,lin2019,bellemare2019geometric,yarats2021improving};
{
such work has shown that learning embeddings plays a crucial role in determining the success of deep reinforcement learning algorithms.
This motivates the development of further methods for learning embeddings, and raises the central question as to what properties of an embedding are beneficial in reinforcement learning.
}

{
A core role played by the embedding map $\phi$ is to mediate generalisation of value function predictions across states; if the embeddings $\phi(x)$ and $\phi(y)$ of two states $x,y$ are close in $\mathbb{R}^m$, then by virtue of the regularity (or more precisely, Lipschitz continuity) of the function approximator $\psi$, the states $x$ and $y$ will be predicted to have similar values~\citep{gelada2019deepmdp,lelan21metrics}. Thus, a good embedding should cluster together states which are known to have similar behavioural properties, whilst keeping dissimilar states apart.
}

{
When learning value functions for reinforcement learning, how can one determine whether a representation allows learning generalization between similar states, but avoids {\em over-}generalization between dissimilar states? Metrics over the environment state space (\emph{state metrics} for short) afford us a natural way to do so via value function upper-bounds, by taking $\phi$ to be a metric embedding of $d$ in $\mathbb{R}^m$. To illustrate the effect of metrics on generalisation, consider two metrics $d_1$ and $d_2$ with the property that $|V(x) - V(y)| \leq d_1(x, y) \ll d_2(x, y)$, where $V:\mathcal{X}\rightarrow\mathbb{R}$ is a value function. The metric $d_2$ offers less information about the similarity between $x$ and $y$ than $d_1$ and may suffer from under-generalization\footnote{An extreme example of this is to take $d_2(x, y) = \infty \mathbbm{1}[x \not= y]$. It is vacuously an upper-bound, but completely uninformative, and will encourage state representations to be mapped far apart from each other.}. Thus, $d_1$ is a more useful metric for representation learning by virtue of being a {\em tighter upper-bound}. State metrics satisfying such bounds have accordingly played an important role in representation learning in deep RL \citep{Castro20,zhang2021invariant, castro21mico}. Contrastingly, a metric $d$ that violates said upper bound (e.g. for some states $x$ and $y$, $d(x, y) < |V(x) - V(y)|$) may encourage dissimlar states to be mapped near each other, and can thus suffer from {\em over}-generalization.
}

{
Bisimulation metrics \citep{Desharnais99b,vanBreugel01a,Ferns04} are a well established class of state metrics that provide such {\em value function upper bounds}: $|V(x)-V(y)|\leq d(x,y)$. This translates into a guarantee in the function approximation setting under the assumption that $d$ {\em can} be embedded into $\mathbb{R}^m$, and that, for example, $\psi$ is 1-Lipschitz continuous, yielding
\begin{align*}
    |V(x)-V(y)|\approx |\psi(\phi(x))-\psi(\phi(y))|\leq \|\phi(x) - \phi(y)\|\approx d(x, y) \, .
\end{align*}}

Bisimulation metrics, however, are expensive to compute, even for tabular systems, due to the fact that one needs to find an optimal coupling between the next-state distributions
\citep{villani2008optimal}. \citet{castro21mico} overcame this difficulty by replacing the optimal coupling with an independent one, resulting in the MICo distance $U^{\pi}$, which still satisfies the {\color{black} above-mentioned} value function upper bound. Interestingly, this distance is not a proper metric (nor pseudo-metric), but rather a {\em diffuse metric} {\color{black}(a notion defined by \citet{castro21mico})} that admits non-zero self-distances. The consequences of this {\color{black} are} that, when co-learned with $\psi$, the resulting feature map $\phi: \mathcal{X} \rightarrow \mathbb{R}^{m}$
is approximating a {\em reduced} version of {\color{black} the MICo distance} $U^{\pi}$, denoted as $\Pi U^{\pi}$. Unfortunately, {\color{black} this} reduced MICo does not satisfy the value function upper bound, nor was it demonstrated to be embeddable in $\mathbb{R}^{m}$ {\color{black}(the space spanned by neural network representations)}. Despite demonstrating strong empirical performance, it remained unclear whether the resulting method was theoretically well-founded.

{
Thus, while there have been empirical successes in using the object $\Pi U^\pi$ as a means of shaping learnt representations in RL, there are two open questions that naturally arise from the preceding discussion:
\begin{enumerate}
    \item Is it reasonable to assume that $\Pi U^{\pi}$ is embeddable into a finite-dimensional Euclidean space?
    \item Does $\Pi U^{\pi}$ satisfy some form of continuity, thus potentially explaining its utility in learning representations for RL?
\end{enumerate}
}

{
In this work we address these questions directly} by taking an alternate view of state similarity: via the use of {\em kernels}. More precisely, we introduce the concept of a positive-definite kernel on Markov decision processes as a measure of behavioural similarity between states.  This new perspective is valuable in itself, as it enables us to leverage reproducing kernel Hilbert space (RKHS) theory for a better understanding of state metrics. We extract a distance from this kernel, and prove its equivalence to the reduced MICo distance. Through this perspective our work provides the following contributions:
\begin{itemize}
    \item We define a new state distance, $\ksme$ that is constructed from the  Hilbert space distance of the RKHS  (Section~\ref{sec:ksmeDefinition}).
    \item We demonstrate the equality of this new distance to the reduced MICo $\Pi U^\pi$ of \citet{castro21mico} (Section~\ref{sec:equality}).
    \item We derive a novel result demonstrating that $\ksme$ (and $\Pi U^{\pi}$ by extension) do serve as an upper bound to value function differences, with an additive component (Section~\ref{sec:ValueFunctionUpper}).
    \item We prove that $\ksme$ can be embedded into a finite-dimensional Euclidean space with low distortion error (Section~\ref{sec:euclideanEmbedding}).
    \item We demonstrate empirically that $\ksme$ performs comparably to MICo when used in a deep reinforcement learning setting (Section~\ref{sec:empirical}).
\end{itemize}

{
Before doing so, we provide a review of some background material in the next section.}

\section{Background}
\label{sec:background}
We provide a brief overview of some of the concepts used throughout this work, and provide a more detailed background in the appendix.

\subsection{Markov decision processes}

We consider a Markov decision process (MDP) given by $(\stateSpace, \actionSpace, \transitionFn, \rewardFn, \gamma)$, where $\stateSpace$ is a
finite state space, $\actionSpace$ a set of actions,
$\transitionFn:\stateSpace\times \actionSpace\to \mathscr{P}(\stateSpace)$ a transition
kernel, and
$\rewardFn:\stateSpace\times \actionSpace\to \probabilitySpace(\R)$ a reward kernel ($\mathscr{P}(\mathcal{Z})$ is the set of probability distributions on a measurable set $\mathcal{Z}$). We will write $\transitionFn^a_x\coloneqq \transitionFn(x,a)$ for the transition distribution from taking action
$a$ in state $x$,
$\rewardFn^a_x\coloneqq \rewardFn(x,a)$ for the reward distribution from taking action
$a$ in state $x$, and write $r^a_x$ for the expectation of this distribution.
A policy $\pi$ is a mapping $\stateSpace \to \probabilitySpace(\actionSpace)$.  We use the
notation
$\piStateOneTransitionFn=\sum_{a\in\actionSpace}\pi(a|\stateOne)\,\transitionFn^a_{\stateOne}$
to indicate the state distribution obtained by following one step of a policy $\pi$ while
in state $\stateOne$.
We use
$\piStateOneRewardDist=\sum_{a\in\actionSpace}\pi(a|\stateOne)\,\rewardFn^a_{\stateOne}$
to represent the reward distribution from $\stateOne$ under $\pi$, and
$\piStateOneRewardFn$ to indicate the expected value of this distribution.
Finally, $\gamma\in [0,1)$ is the discount factor used to compute the discounted long-term return.

The \emph{value} of a policy $\pi$ is the expected total return an agent attains from following $\pi$, and is described by a function $V^\pi:\stateSpace\to \R$, such that for each $\stateOne\in\stateSpace$, 
\[ V^\pi(x) = \Epi \left[\sum_{t\geq 0}\gamma^t R_t \given X_0=x \right].\]
{\color{black}Here, $R_t$ is the random variable representing the reward received at time step $t$ when following policy $\pi$ starting at state $x$.}

An optimal policy $\pi^*$ is a policy which achieves the maximum value function at each state, which we will denote $V^*$. It satisfies the Bellman optimality recurrence:
\begin{equation}
V^*(x) = \max_{a\in\actionSpace} \mathbb{E} \left[ R_0 + \gamma V^*(X_1) \given X_0 = x, A_0 = a\right].
\label{eqn:bellman}
\end{equation}

\subsection{The MICo distance}

A common approach to dealing with very large, or even infinite, state spaces is via the use of an embedding $\phi:\stateSpace\rightarrow\mathbb{R}^{m}$, which maps the original state space into a lower-dimensional representation space. State values can then be learned on top of these representations: e.g. $\hat{V}(x)\approx\psi(\phi(x))$, where $\psi$ is a function approximator.

A desirable property of representations is that they can bound differences with respect to the function of interest (e.g. value functions). \citet{lelan21metrics} argued state behavioural metrics are a useful mechanism for this, as they can often bound differences in values: $d(x, y)\geq |V^*(x) - V^*(y)|$. The structure of the metric impacts its effectiveness: picking $d(x, y) = (R_\text{max} - R_\text{min}) (1-\gamma)^{-1}$ allows one to satisfy the same upper bound, but in a rather uninformative way.

Bisimulation metrics $d_{\sim}$ \citep{Ferns04} are appealing metrics to use, as they satisfy the above upper bound, while still capturing behavioural similarity that goes beyond simple value equivalence (see Appendix~\ref{sec:appendixMetricsBackground} for a more in-depth discussion of bisimulation metrics). {\color{black} Indeed, various prior works have built reinforcement learning methods based on the notion of bisimulation metrics \citep{gelada2019deepmdp,agarwal2021contrastive,hansen22bisimulation}.}
\citet{Castro20} demonstrated that, with some simplifying assumptions, these metrics are learnable with neural networks; \citet{zhang2021invariant}, \citet{castro21mico}{\color{black}, \citet{kemertas2021towards} and \citet{kemertas2022approximate}} demonstrated they are learnable without the assumptions.

\citet{castro21mico} introduced the {\em MICo distance}, along with a corresponding differentiable loss, which can be added to any deep reinforcement learning agent without extra parameters. The added loss showed statistically significant performance improvements on the challenging Arcade Learning Environment \citep{bellemare13arcade}, as well as on the DeepMind control suite \citep{tassa2018deepmind}.

Our paper builds on the MICo distance, and as such, we present a brief overview of the main definitions and results in this section. We begin with a result that introduces the MICo operator ($T^{\pi}_M$), defines the MICo distance as its unique fixed point ($U^{\pi}$), and shows this distance provides an upper bound on value differences between two states. {\color{black}Before doing so, we define the \emph{Łukaszyk–Karmowski distance} (LK-distance), which is used in the definition of the MICo distance. A more thorough discussion of the LK-distance is provided in 
\cref{sec:lkDistance}.

\begin{definition}
Given two probability measures $\mu$ and $\nu$ on a set $\stateSpace$ and a metric $d$ on $\stateSpace$, the \emph{Łukaszyk–Karmowski distance} $\lk$ \citep{LKmetric} is defined as  
\[\lk(d)(\mu,\nu)=\int d(x,y)\,\mathrm{d}(\mu\times \nu)(x,y),\]
or equivalently
\[\lk(d)(\mu,\nu)=\E_{X\sim \mu,Y\sim\nu}[d(X,Y)].\]
\end{definition}
}

\begin{theorem}[\citet{castro21mico}]
Given a policy $\pi$, the MICo operator $T^\pi_M:\R^{\stateSpace\times\stateSpace}\to
\R^{\stateSpace\times\stateSpace}$, given by $T^\pi_M(U)(x,y)=|r^\pi_x-r^\pi_y|+\gamma\,\lk(U)(\transitionFn^\pi_x,\transitionFn^\pi_y)$,
has a unique fixed point $U^\pi$, referred to as the MICo distance.\footnote{The distance $\lk$ is defined in Appendix~\ref{sec:lkDistance}.}
Further, the MICo distance upper bounds the absolute difference between policy-value functions.  That is, for $\stateOne,\stateTwo\in\stateSpace$, we have $|V^\pi(\stateOne)-V^\pi(\stateTwo)|\leq U^\pi(\stateOne,\stateTwo)$.
\end{theorem}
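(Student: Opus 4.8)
The theorem has two parts. The plan is to address them separately.

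First, for the existence and uniqueness of the fixed point $U^\pi$, I would invoke the Banach fixed point theorem. This requires showing that $T^\pi_M$ is a contraction with respect to the supremum norm on $\R^{\stateSpace \times \stateSpace}$ (which is complete since $\stateSpace$ is finite). So I would take two functions $U, U' \in \R^{\stateSpace\times\stateSpace}$ and bound $\|T^\pi_M(U) - T^\pi_M(U')\|_\infty$. The reward terms $|r^\pi_x - r^\pi_y|$ cancel, leaving $\gamma\,|\lk(U)(\transitionFn^\pi_x,\transitionFn^\pi_y) - \lk(U')(\transitionFn^\pi_x,\transitionFn^\pi_y)|$. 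By the definition of the LK-distance as an expectation, this difference equals $\gamma\,|\E_{X\sim\transitionFn^\pi_x, Y\sim\transitionFn^\pi_y}[U(X,Y) - U'(X,Y)]|$, which is at most $\gamma\,\E[\,|U(X,Y)-U'(X,Y)|\,] \leq \gamma\,\|U-U'\|_\infty$. Taking the supremum over $(x,y)$ gives the contraction factor $\gamma < 1$, so Banach yields a unique fixed point $U^\pi$.

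Second, for the value-function upper bound $|V^\pi(x) - V^\pi(y)| \leq U^\pi(x,y)$, I would define an auxiliary function $\Delta(x,y) \coloneqq |V^\pi(x) - V^\pi(y)|$ and show it is dominated by the fixed point. The natural strategy is to show $\Delta$ is itself a pre-fixed point, i.e.\ $\Delta \leq T^\pi_M(\Delta)$ pointwise, and then use monotonicity of $T^\pi_M$ together with the contraction to conclude $\Delta \leq U^\pi$ (since iterating a monotone contraction from a pre-fixed point converges monotonically up to the fixed point). To establish $\Delta \leq T^\pi_M(\Delta)$, I would expand $V^\pi$ via its Bellman equation $V^\pi(x) = r^\pi_x + \gamma\,\E_{X'\sim\transitionFn^\pi_x}[V^\pi(X')]$, giving
\[
|V^\pi(x) - V^\pi(y)| \leq |r^\pi_x - r^\pi_y| + \gamma\,\bigl|\E_{X'\sim\transitionFn^\pi_x}[V^\pi(X')] - \E_{Y'\sim\transitionFn^\pi_y}[V^\pi(Y')]\bigr|.
\]
The key manipulation is to rewrite the difference of single expectations as an expectation over the product (independent coupling): since the two expectations are over independent variables, $\E_{X'}[V^\pi(X')] - \E_{Y'}[V^\pi(Y')] = \E_{X'\sim\transitionFn^\pi_x, Y'\sim\transitionFn^\pi_y}[V^\pi(X') - V^\pi(Y')]$, whose absolute value is bounded by $\E[\,|V^\pi(X')-V^\pi(Y')|\,] = \lk(\Delta)(\transitionFn^\pi_x,\transitionFn^\pi_y)$. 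Combining these gives exactly $\Delta(x,y) \leq T^\pi_M(\Delta)(x,y)$.

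The main obstacle, and the conceptual heart of the argument, is the use of the \emph{independent} coupling in that last step. Unlike the bisimulation case where one must control an optimal coupling via Kantorovich duality, here the product measure $\mu \times \nu$ makes the expectation split cleanly, which is precisely what makes the inequality $|\E[f(X)] - \E[f(Y)]| \leq \E[|f(X)-f(Y)|]$ (Jensen applied to the independent joint) go through directly. I would need to be careful that the monotone-iteration argument is valid—verifying $T^\pi_M$ is monotone (immediate from monotonicity of the LK-integral in its argument $U$) and that $(T^\pi_M)^n(\Delta) \to U^\pi$, so that $\Delta \leq (T^\pi_M)^n(\Delta)$ for all $n$ passes to the limit $\Delta \leq U^\pi$.
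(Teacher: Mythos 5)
Your proof is correct, but it is worth noting that this theorem is stated in the paper as cited background from \citet{castro21mico}; the paper itself gives no proof of it, so the natural comparison is with the original MICo proof and with the closely analogous arguments this paper does carry out. Your first part (Banach fixed point via the $\gamma$-contraction of $T^\pi_M$ in $\|\cdot\|_\infty$, using linearity of the independent-coupling expectation so the reward terms cancel) is essentially identical to the argument the paper gives for the kernel operator in \cref{lem:kernelOperatorContraction}. Your second part, however, takes a genuinely different route: you show that $\Delta(x,y)=|V^\pi(x)-V^\pi(y)|$ is a pre-fixed point, $\Delta \leq T^\pi_M(\Delta)$, and then combine monotonicity of $T^\pi_M$ with convergence of $(T^\pi_M)^n(\Delta)$ to $U^\pi$ to conclude $\Delta \leq U^\pi$. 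The original proof (mirrored in this paper's proof of \cref{prop:MMDUpperBound}) instead runs a simultaneous induction on the Picard iterates $U_{n+1}=T^\pi_M(U_n)$, $V_{n+1}=T^\pi(V_n)$ started from zero, proving $|V_n(x)-V_n(y)|\leq U_n(x,y)$ for all $n$ and passing to the limit. The two arguments use the same key inequality --- Jensen applied to the independent coupling, $\bigl|\E[V^\pi(X')]-\E[V^\pi(Y')]\bigr| \leq \E\bigl[|V^\pi(X')-V^\pi(Y')|\bigr]$, which is exactly what the LK-distance makes available without any Kantorovich duality, as you correctly emphasize. Your sub-solution argument is arguably cleaner, since it works directly with the Bellman equation for $V^\pi$ and avoids tracking two approximating sequences, at the modest cost of having to verify monotonicity of $T^\pi_M$ (immediate here) and the order-theoretic fact that a pre-fixed point of a monotone contraction lies below its fixed point; the iterate-induction approach is more self-contained and is the template this paper reuses when the statement must be strengthened with additive correction terms, as in \cref{prop:MMDUpperBound}, where no exact pre-fixed-point inequality is available.
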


{\color{black} The MICo distance was based on the $\pi$-bisimulation metric ($d^{\pi}_{\sim}$) introduced by \citet{Castro20}; the key difference is that the update operator for $d^{\pi}_{\sim}$ uses the Kantorovich distance between probability measures instead of $\lk$ (see Appendix~\ref{sec:kantorovich} for more details on the Kantorovich metric).}

\citet{castro21mico} adapted the MICo distance to be used for learning state feature maps $\phi$. However, the authors demonstrated the features used for control were actually a ``reduction'' of the diffuse metric approximant; this distance was dubbed the {\em reduced MICo distance} $\Pi U^{\pi}$.

\begin{definition}[Reduced MICo]
The reduced MICo distance $\Pi U^\pi$ is defined by 
\[\Pi U^\pi(x,y) = U^\pi(x,y) -\frac12(U^\pi(x,x)+U^\pi(y,y)).\]
\end{definition}

Two immediate properties of $\Pi U^\pi$ are that it is symmetric, and satisfies $\Pi U^\pi(x,x) = 0$. However, it was unknown whether $\Pi U^\pi$ satisfied the triangle inequality, as well as whether it was positive in general. One negative result shown by \citet{castro21mico} was that the value function upper bound does not hold for $\Pi U^\pi$. 

\begin{proposition}[\citet{castro21mico}]\label{reducedUpperBreak}
There exists an MDP with $x,y\in\mathcal{X}$, and policy $\pi$ where $|V^{\pi}(x)-V^{\pi}(y)| > \Pi U^{\pi}(x, y)$.
\end{proposition}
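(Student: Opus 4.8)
The plan is to exploit the one structural difference between $U^\pi$ and $\Pi U^\pi$: the latter subtracts half of the self-distances $U^\pi(x,x)$ and $U^\pi(y,y)$. Since $U^\pi$ is a diffuse metric, these self-distances need not vanish. Indeed, evaluating the fixed-point equation on the diagonal gives $U^\pi(x,x) = \gamma\,\E_{X,Y\sim\transitionFn^\pi_x}[U^\pi(X,Y)]$, which is strictly positive whenever $\transitionFn^\pi_x$ is non-degenerate and places mass on states that are far apart under $U^\pi$. The value bound $|V^\pi(x)-V^\pi(y)|\le U^\pi(x,y)$ holds for $U^\pi$ and in general has no slack to spare, so subtracting a positive quantity should suffice to break it. I would therefore look for a witnessing MDP in which one of the two compared states carries a large self-distance.

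Concretely, I would take a single-action MDP (so the policy is forced and $\pi$ plays no genuine role) with two absorbing states carrying distinct deterministic rewards $r_1\neq r_2$, together with a branching state $x$ whose transition distribution $\transitionFn^\pi_x$ splits its mass between the two absorbing states. The absorbing states have deterministic dynamics, so their self-distances vanish, while the cross-distance between them is the geometric sum $|r_1-r_2|/(1-\gamma)$. The branching state then inherits a strictly positive self-distance, because the independent coupling in $\lk$ produces off-diagonal cross-terms pairing the two absorbing states. Taking $y$ to be one of the absorbing states gives $U^\pi(y,y)=0$ but a sizeable $U^\pi(x,x)$.

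Because every transition is either absorbing or lands in an absorbing state, the fixed-point equations close into a small, explicitly solvable linear system; I would solve it for $U^\pi(x,y)$, $U^\pi(x,x)$, and $U^\pi(y,y)$, and separately compute $V^\pi(x)$ and $V^\pi(y)$ from the Bellman equations. Substituting these into $\Pi U^\pi(x,y)=U^\pi(x,y)-\tfrac12(U^\pi(x,x)+U^\pi(y,y))$ and comparing against $|V^\pi(x)-V^\pi(y)|$, I expect the subtracted self-distance to shrink $\Pi U^\pi(x,y)$ to roughly half of $U^\pi(x,y)$ while the value gap stays at the level of $U^\pi(x,y)$, yielding the desired strict inequality.

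The main obstacle is calibration rather than technique: one must choose the rewards, the branching probabilities, and $\gamma$ so that $U^\pi(x,x)$ is large relative to $U^\pi(x,y)$ while the value gap $|V^\pi(x)-V^\pi(y)|$ stays comparable to $U^\pi(x,y)$. This is delicate because self-distance is generated only through the off-diagonal cross-terms of the independent coupling, so the next-state distribution must concentrate on states that are themselves maximally separated under $U^\pi$; the absorbing-state design is precisely what makes this effect both large and exactly computable.
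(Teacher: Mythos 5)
Your construction is correct, and a preliminary remark is in order: the paper does not prove this statement at all---\cref{reducedUpperBreak} is imported from \citet{castro21mico}, so the paper's ``proof'' is the citation itself. Your proposal must therefore stand as a self-contained counterexample, and it does. Completing the computation you outline, with branching probability $p=\tfrac12$ and the branching state $x$ given the same reward as the absorbing state $y$ (the two absorbing states $y,z$ having rewards $r_y\neq r_z$): one gets $U^\pi(y,y)=U^\pi(z,z)=0$, $U^\pi(y,z)=\tfrac{|r_y-r_z|}{1-\gamma}$, and hence
\begin{align*}
U^\pi(x,x)=\frac{\gamma}{2}\cdot\frac{|r_y-r_z|}{1-\gamma},\qquad
U^\pi(x,y)=\frac{\gamma}{2}\cdot\frac{|r_y-r_z|}{1-\gamma},\qquad
\Pi U^\pi(x,y)=U^\pi(x,y)-\frac12 U^\pi(x,x)=\frac{\gamma}{4}\cdot\frac{|r_y-r_z|}{1-\gamma},
\end{align*}
while $|V^\pi(x)-V^\pi(y)|=\tfrac{\gamma}{2}\cdot\tfrac{|r_y-r_z|}{1-\gamma}$, a violation by exactly a factor of two. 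Your closing worry about ``delicate calibration'' is unfounded: the inequality holds for every $\gamma\in(0,1)$, every $r_y\neq r_z$, and every branching probability $p\in(0,1)$ (the comparison is $(1-p)$ versus $(1-p)^2$), precisely because in this MDP the unreduced bound $|V^\pi(x)-V^\pi(y)|\leq U^\pi(x,y)$ holds with \emph{equality}, so subtracting any positive self-distance breaks it---your ``no slack to spare'' heuristic is exact here, not approximate. As for comparison with the cited source: the counterexample in \citet{castro21mico} rests on the same mechanism you identify (a stochastically transitioning state acquires positive self-distance $U^\pi(x,x)>0$ through the independent coupling, which the reduction $\Pi$ then subtracts, while the comparison state is deterministic and has zero self-distance); their witnessing MDP is wired slightly differently (a stochastic state that can self-loop rather than one branching into two absorbing states), but the argument is essentially the same, with your variant having the virtue of a fully closed-form linear system.
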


While \citet{castro21mico} demonstrated the strong empirical performance yielded by the reduced MICo distance in combination with deep reinforcement learning, \cref{reducedUpperBreak} highlights that important properties for general state similarity metrics remain unknown for the reduced MICo. We pause here to take stock of what is unknown regarding $\Pi U^\pi$:
\begin{itemize}
    \item From a purely metric perspective, it is unknown whether $\Pi U^\pi$ is always positive (in general, applying the reduction operator $\Pi$ to a positive function $f$ does not result in $\Pi f$ being positive), or whether $\Pi U^\pi$ satisfies the triangle inequality. These are important to understand so as to guarantee that the learned representations are well-behaved.
    \item From the behavioural similarity perspective, it is not known whether $\Pi U^\pi$ has any quantitative relationship to the value function $V^\pi$, as \cref{reducedUpperBreak} demonstrates the standard value function upper bound does not hold.
    \item From a practical perspective, it is unknown $\Pi U^{\pi}$ is even embeddable in $\mathbb{R}^{m}$.
\end{itemize}

In this work we seek to resolve these mysteries through the lens of reproducing kernel Hilbert spaces, which we introduce next (a more extensive discussion is provided in \cref{sec:appendixKernelBackground}).

\subsection{Reproducing kernel Hilbert spaces}
\label{sec:rkhs}

Let $\stateSpace$ be a finite
\footnote{\color{black}The general theory of reproducing kernel Hilbert spaces holds for much more general classes of sets \citep{aronszajn50reproducing}, but as  mentioned earlier in the paper, our work focuses on MDPs with finite state spaces, so we present RKHS theory in the context of finite sets, allowing us to avoid some mathematical technicalities.}
set, and define a function $k:
\stateSpace\times\stateSpace\to\R$ to be a positive definite kernel if it is symmetric and positive definite:\footnote{We remark that the definition of positive definite is not consistent across the literature. We follow the convention of the kernel methods community, and define a function to be strictly positive definite if the inequality is strict unless $c_1=\dots=c_n=0$. In the linear algebra and optimization communities however, this is referred to as positive definite, and the definition provided is referred to as positive semidefinite.} for any $\{x_1,\dots,x_n\}\in\stateSpace$, $\{c_1,\dots,c_n\}\in\R$, we have that 
\[\sum_{i=1}^n\sum_{j=1}^nc_ic_jk(x_i,x_j)\geq 0.\]
We will often use \emph{kernel} as a shorthand for positive definite kernel. Given a kernel $k$ on $\stateSpace$, one can construct a { space of functions $\mathcal{H}_k$ referred to as a reproducing kernel Hilbert space (RKHS),} through the following steps:\footnote{{ An RKHS can be alternately defined by choosing a suitable set of functions and constructing the kernel $k$ from this set, we review this approach in \cref{sec:rkhsDefinition}.}}
\begin{enumerate}[label=(\roman*)]
    \item Construct a vector space of real-valued functions on $\stateSpace$ of the form $\{k(x,\cdot): \, x\in\stateSpace\}$.
    \item Equip this space with an inner product given by $\langle k(x,\cdot), k(y,\cdot) \rangle_{\mathcal{H}_k}=k(x,y)$.
    \item Take the completion of the vector space with respect to the inner product $\langle \cdot,\cdot\rangle_{\cH_{k}}$.
\end{enumerate}
The Hilbert space obtained at the end of step (iii) is the reproducing kernel Hilbert space for $k$.

It is common to introduce the notation $\varphi(x)\coloneqq k(x,\cdot)$, where $\varphi:\stateSpace\to\cH$ is often called the \emph{feature map}, and $\varphi(x)$ is understood as the \emph{embedding} of $x$ in $\cH$. Note that we are using $\varphi$ to represent the mapping of states onto a Hilbert space (e.g. $\varphi:\stateSpace\to\cH_k$), which is distinct from the symbol $\phi$ which we use to represent the mapping of states onto a Euclidean space (e.g. $\phi:\stateSpace\to\R^{m}$). One can also embed probability distributions on $\stateSpace$ in $\cH_k$. Given a probability distribution $\mu$ on $\stateSpace$, one can define the embedding of $\mu$, $\Phi(\mu)\in\cH_k$ as 
\[\Phi(\mu)=\E_{X\sim \mu}[\varphi(X)]=\int_\stateSpace \varphi(x) d\mu(x),\]
 where the integral taken is a Bochner integral,\footnote{A generalization of the Lebesgue integral to functions taking values in a Banach space, further details can be found in \citet{Arendt01}.} as we are integrating over $\cH_k$-valued functions. The embeddings of measures into $\cH_k$ allow one to easily compute integrals, as one can show using the Riesz representation theorem that for $f\in \cH_k$, one has
\[\int_\stateSpace fd\mu=\langle f, \Phi(\mu)\rangle_{\cH_k}.\]

These embeddings also allow us to define metrics on $\stateSpace$ and $\mathscr{P}(\stateSpace)$ by looking at the Hilbert space distance of their embeddings. 
\begin{definition}\label{def:inducedDist}
Given a positive definite kernel $k$, define $\rho_k$ as its induced distance:
\[\rho_k(x, y) := \|\varphi(x)-\varphi(y)\|_{\cH_k}.\]
By expanding the inner product, the squared distance can be written solely in terms of the kernel $k$:
\[\rho_k^2(x,y)=k(x,x)+k(y,y)-2k(x,y).\]
\end{definition}
We can perform the same process to construct a metric on $\mathscr{P}(\stateSpace)$ using $\Phi$:

\begin{restatable}[\citet{gretton12a}]{definition}{mmdDefn}
Let $k$ be a kernel on $\stateSpace$, and $\Phi:\mathscr{P}(\stateSpace)\to \mathcal{H}_k$ be as defined above. Then the Maximum Mean Discrepancy (MMD) is a pseudometric on $\mathscr{P}(\stateSpace)$ defined by
\[\text{MMD}(k)(\mu,\nu) = \|\Phi(\mu)-\Phi(\nu)\|_{\cH_k}.\]
\end{restatable}

A \emph{semimetric} is a distance function which respects all metric axioms save for the triangle inequality. A semimetric space $(\stateSpace,\rho)$ is of \emph{negative type} if for all $x_1,\dots,x_n\in\stateSpace$, $c_1,\dots,c_n\in \R$ such that $\sum_{i=1}^nc_i=0$, we have
\[\sum_{i=1}^n\sum_{j=1}^nc_ic_j \rho(x_i,x_j)\leq 0.\]

Given a semimetric of negative type $\rho$ on $\stateSpace$, we can define a distance on $\mathscr{P}(\stateSpace)$ known as the \emph{energy distance}, defined as 
\[\mathcal{E}(\rho)(\mu,\nu)=\E_{X\sim \mu,Y\sim\nu}[\rho(X,Y)]-\frac12\left(\E_{X_1,X_2\sim \mu}[\rho(X_1,X_2)]+\E_{Y_1,Y_2\sim \nu}[\rho(Y_1,Y_2)]\right),\]
where the pairs of random variables in each expectation are independent.

If $\rho$ is of negative type, this guarantees that we have $\mathcal{E}(\rho)(\mu,\nu)\geq 0$ for all $\mu,\nu \in \mathscr{P}(\mathcal{X})$. Semimetrics of negative type have a connection to positive definite kernels, as shown in \citet{sejdinovic2013equivalence}: the induced distance squared $\rho^2_k$ is a semimetric of negative type, which we say is induced by $k$. Conversely, a semimetric of negative type $\rho$ induces a family of positive definite kernels $K_\rho$ parametrised by a chosen base point $x_0\in\stateSpace$:
\[K_\rho^{x_0}(x,x')=\frac12(\rho(x,x_0)+\rho(x',x_0)-\rho(x,x')).\]

The relationship is symmetric, so that each kernel $k\in K_\rho$ has $\rho$ as its induced semimetric. With this symmetry in mind, we call a kernel $k$ and a semimetric of negative type an \emph{equivalent pair} if they induce one another through the above construction. This equivalence does not only live in $\stateSpace$ however, as the following proposition shows that it lifts into $\mathscr{P}(\stateSpace)$ as well.

\begin{proposition}[\citet{sejdinovic2013equivalence}]
Let $(k,\rho)$ be an equivalent pair, and let $\mu,\nu\in \mathscr{P}(\stateSpace)$. Then we have the equivalence
\[\text{MMD}^2(k)(\mu,\nu)=\mathcal{E}(\rho)(\mu,\nu).\]
\end{proposition}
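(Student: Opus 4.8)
The plan is to expand both sides into expectations of the kernel $k$ over independent draws from $\mu$ and $\nu$, and then verify that the two resulting expressions coincide term by term. The single structural fact I will use is that, for an equivalent pair, $\rho$ is exactly the squared induced distance of $k$, so that $\rho(x,y) = k(x,x) + k(y,y) - 2k(x,y)$ for all $x,y\in\stateSpace$ (this is precisely what it means for $k$ to induce $\rho$, via \cref{def:inducedDist}).

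First I would tackle the MMD side. Expanding the squared Hilbert-space norm by bilinearity gives
\[\text{MMD}^2(k)(\mu,\nu) = \langle \Phi(\mu),\Phi(\mu)\rangle_{\cH_k} - 2\langle\Phi(\mu),\Phi(\nu)\rangle_{\cH_k} + \langle\Phi(\nu),\Phi(\nu)\rangle_{\cH_k}.\]
Since $\Phi(\mu)=\E_{X\sim\mu}[\varphi(X)]$ and the inner product is linear and continuous, each term factors into an expectation of the kernel; for instance $\langle\Phi(\mu),\Phi(\nu)\rangle_{\cH_k} = \E_{X\sim\mu,Y\sim\nu}[\langle\varphi(X),\varphi(Y)\rangle_{\cH_k}] = \E_{X\sim\mu,Y\sim\nu}[k(X,Y)]$, using the reproducing property $\langle\varphi(x),\varphi(y)\rangle_{\cH_k}=k(x,y)$. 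This yields
\[\text{MMD}^2(k)(\mu,\nu) = \E_{X_1,X_2\sim\mu}[k(X_1,X_2)] - 2\,\E_{X\sim\mu,Y\sim\nu}[k(X,Y)] + \E_{Y_1,Y_2\sim\nu}[k(Y_1,Y_2)].\]

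Next I would substitute $\rho(x,y)=k(x,x)+k(y,y)-2k(x,y)$ into the three expectations defining $\mathcal{E}(\rho)(\mu,\nu)$ and split by linearity. The cross term contributes $\E_\mu[k(X,X)]+\E_\nu[k(Y,Y)]-2\,\E_{\mu\times\nu}[k(X,Y)]$, while each of the two self-terms (e.g. $\E_{X_1,X_2\sim\mu}[\rho(X_1,X_2)]$) contributes $2\,\E_\mu[k(X,X)] - 2\,\E_{\mu\times\mu}[k(X_1,X_2)]$. Collecting everything according to the definition of $\mathcal{E}(\rho)$, the ``diagonal'' contributions $\E_\mu[k(X,X)]$ and $\E_\nu[k(Y,Y)]$ cancel exactly, and what remains is precisely the three-term expression obtained above for $\text{MMD}^2(k)$, establishing the claimed equality.

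I do not expect a genuine obstacle here: the only point requiring care is the interchange of the inner product with the expectations, i.e. that $\langle\E[\varphi(X)],\E[\varphi(Y)]\rangle = \E[\langle\varphi(X),\varphi(Y)\rangle]$. Over the finite state space $\stateSpace$ assumed throughout, these expectations are finite sums, so this is immediate from bilinearity; in the general case one would instead invoke the defining property of the Bochner integral that bounded linear functionals commute with it. The conceptual content of the result is simply that the energy distance's subtraction of self-distances is exactly what removes the diagonal kernel terms $k(x,x)$ that the MMD expansion never produces in the first place.
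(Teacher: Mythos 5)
Your proof is correct. Note that the paper itself does not prove this proposition at all: it is imported verbatim from \citet{sejdinovic2013equivalence}, so there is no in-paper argument to compare against. Your computation is the standard one (and essentially the argument given in the cited reference, specialised to the finite-state setting): expand $\text{MMD}^2(k)(\mu,\nu)$ by bilinearity of the inner product and the reproducing property into the three double expectations of $k$, substitute $\rho(x,y)=k(x,x)+k(y,y)-2k(x,y)$ into the definition of $\mathcal{E}(\rho)$, and observe that the diagonal terms $\E_{\mu}[k(X,X)]$ and $\E_{\nu}[k(Y,Y)]$ cancel. Two small remarks: first, of the two conditions defining an equivalent pair you only need the direction ``$k$ induces $\rho$'' (i.e.\ $\rho=\rho_k^2$ as in \cref{def:inducedDist}), which is exactly what you used, so nothing is missing; second, your handling of the interchange of the inner product with the Bochner integral is appropriate --- over the finite $\stateSpace$ assumed throughout the paper the embeddings are finite sums and the interchange is pure bilinearity, while in the general case one would invoke continuity of the inner product against Bochner integrals, as you note.
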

The central takeaway of the equivalences proved in \citet{sejdinovic2013equivalence} is that using metrics of negative type and positive definite kernels are two perspectives of the same underlying structure.

{\color{black}
\subsection{Prior work using RKHS for MDPs}
We are not the first to consider using the theory of reproducing kernel Hilbert spaces in the context of Markov decision processes. Indeed, \citet{ormoneit02kernel} proposed to use a fixed kernel to perform approximate dynamic programming in RL problems with Euclidean state spaces; a variety of theoretical and empirical extensions of this approach have since been considered \citep{lever16compressed,barreto2016practical,domingues2021kernel}. The use of fixed RKHSs for approximate dynamic programming in more general state spaces has been studied by \citet{farahmand2016regularized}, \citet{yang20provably}, and \citet{koppel21policy}.
In contrast, as we will discuss below, our work {\em learns} a kernel on the state space, which classifies states as similar based on their behavioural similarity in the MDP.
}

\section{Kernel similarity metrics}\label{sec:ksme}

We take a new perspective on behavioural metrics in MDPs, through the use of positive definite kernels. We define a contractive operator on the space of kernels, and show that its unique fixed point induces a behavioural distance in a reproducing kernel Hilbert space, and then prove that this distance coincides with the reduced MICo distance $\Pi U^\pi$. We then present new properties of $\Pi U^\pi$ obtained through this perspective.

\subsection{Definition}
\label{sec:ksmeDefinition}
Given an MDP $M=(\stateSpace,\actionSpace, \transitionFn,\rewardFn,\gamma)$, state similarity metrics which are variants of bisimulation generally follow the form
\[d(x,y)=d_1(x,y)+\gamma\, d_2(d)(\transitionFn(x),\transitionFn(y)),\]
for states $\stateOne, \stateTwo$ in $\stateSpace$. Here, $d_1$ is a distance on $\stateSpace$ representing {\em one-step differences} between $x$ and $y$ (e.g. reward difference), and $d_2$ ``lifts'' a distance on $\stateSpace$ onto a distance on $\mathscr{P}(\stateSpace)$; thus, $d_2(d)(\transitionFn(x),\transitionFn(y))$ represents the \emph{long-term behavioural distance} between $x$ and $y$.
It is worth noting the similarity to the Bellman optimality recurrence introduced in \autoref{eqn:bellman}.


In this section we take a similar approach, except rather than quantifying the {\em difference} of states (metrics), we consider quantifying the {\em similarity} of states (positive definite kernels). Following this idea, we can define a state similarity kernel $k:\stateSpace\times \stateSpace\to \R$ as a positive definite kernel which takes the following form:
\[k(x,y)=k_1(x,y)+\gamma\, k_2(k)(\transitionFn(x),\transitionFn(y)).\]
Similarly to the above expression for $d$, $k_1$ is a kernel on $\stateSpace$ which measures the \emph{immediate similarity} of two states $x$ and $y$, and $k_2$ lifts a kernel on $\stateSpace$ into a kernel on $\mathscr{P}(\stateSpace)$.

We can now present a candidate state similarity kernel. We follow \citet{Castro20} and \citet{castro21mico} in measuring behavioural similarity under a fixed policy,
in contrast to measuring behaviour across all possible actions, as done in bisimulation \citep{Ferns04,ferns2011bisimulation}. 

Following this, we fix a policy $\pi$ which will be the policy under which we measure similarity. For the immediate similarity kernel, we will assume that $\text{supp}(\rewardFn)\subseteq [-1,1]$,\footnote{This assumption is purely for the clarity of presentation, and can be relaxed to assuming boundedness of reward and setting $k_1(x,y)=1-\frac{1}{\rewardFn_\text{max}-\rewardFn_\text{min}}|r^\pi_x-r^\pi_y|$.} and we set $k_1(x,y)=1-\frac12|r^\pi_x-r^\pi_y|$, which lies in $[0,1]$.
This is a reasonable measure of immediate similarity, as it is maximised when two states have identical immediate rewards, and minimised when two states have maximally distant immediate rewards. To lift a kernel $k$ into a kernel on $\mathscr{P}(\stateSpace)$, we can use the kernel lifting construction given in \citet{Guilbart79}, and define $k_2(k)(\mu, \nu)=\E_{X\sim\mu,Y\sim\nu}[k(X,Y)]$. Combining these, we can define an operator on the space of kernels whose fixed point would be our kernel of interest.
\begin{definition}
     Let $\mathscr{K}(\stateSpace)$ be the space of positive definite kernels on $\stateSpace$. Given $\pi\in\mathscr{P}(\actionSpace)^\stateSpace$, the kernel similarity operator $T^\pi_k:\mathscr{K}(\stateSpace)\to\mathscr{K}(\stateSpace)$ is 
    \[T_k^\pi(k)(x,y)=\left(1-\frac12|r_x^\pi-r^\pi_y|\right)+\gamma\E_{\xPrime\sim \transitionFn^\pi_x,\yPrime\sim \transitionFn^\pi_y}[k(\xPrime,\yPrime)].\]
\end{definition}

The fact that $T_k^\pi$ indeed maps $\mathscr{K}(\stateSpace)$ to $\mathscr{K}(\stateSpace)$ follows from the previous paragraph describing that each operator is a kernel, and that the sum of two kernels is a kernel \citep{aronszajn50reproducing}. We now present two lemmas which are necessary to conclude whether a unique fixed point of $T^\pi_k$ exists. 

\begin{lemma}\label{lem:kernelOperatorContraction}
$T^\pi_k$ is a contraction with modulus $\gamma$ in $\|\cdot\|_\infty$.
\end{lemma}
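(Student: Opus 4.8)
The plan is to exploit the affine structure of $T^\pi_k$: the reward-based immediate-similarity term $1-\tfrac12|r_x^\pi-r_y^\pi|$ is independent of the input kernel, so it cancels when we compare two outputs, leaving only the $\gamma$-discounted expectation term, which is a convex average and hence non-expansive in the uniform norm. Here $\|\cdot\|_\infty$ denotes the uniform norm over $\stateSpace\times\stateSpace$, and since $\stateSpace$ is finite this is simply a maximum of finitely many finite quantities, so every expression below is well-defined.

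First I would fix two kernels $k,k'\in\mathscr{K}(\stateSpace)$ and, for arbitrary $x,y\in\stateSpace$, subtract the two operator definitions pointwise. Since the immediate-similarity term does not depend on the kernel argument it vanishes, and linearity of expectation lets me combine the two expectation terms over the common product measure $\transitionFn^\pi_x\times\transitionFn^\pi_y$, yielding
\[T_k^\pi(k)(x,y) - T_k^\pi(k')(x,y) = \gamma\, \E_{\xPrime\sim\transitionFn^\pi_x,\,\yPrime\sim\transitionFn^\pi_y}\big[\,k(\xPrime,\yPrime) - k'(\xPrime,\yPrime)\,\big].\]

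The main step is to bound the right-hand side in absolute value. Pushing the absolute value inside the expectation (the triangle inequality for integrals, equivalently Jensen's inequality applied to $|\cdot|$) and then bounding the integrand uniformly gives
\[\big|\,T_k^\pi(k)(x,y) - T_k^\pi(k')(x,y)\,\big| \leq \gamma\, \E_{\xPrime\sim\transitionFn^\pi_x,\,\yPrime\sim\transitionFn^\pi_y}\big[\,|k(\xPrime,\yPrime) - k'(\xPrime,\yPrime)|\,\big] \leq \gamma\, \|k - k'\|_\infty,\]
where the final inequality uses that $\transitionFn^\pi_x\times\transitionFn^\pi_y$ is a probability measure (total mass one), so averaging cannot exceed the uniform bound. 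Taking the supremum over $x,y\in\stateSpace$ then yields $\|T_k^\pi(k)-T_k^\pi(k')\|_\infty\leq\gamma\,\|k-k'\|_\infty$, which is the claim.

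I expect no genuine obstacle: the argument is structurally identical to the standard $\gamma$-contraction proof for the Bellman operator, with the expectation over next states playing the role of the convex averaging. The only two points worth stating explicitly are that the immediate-similarity term cancels (so that the contraction modulus is exactly $\gamma$, not something larger), and that the operator does map back into $\mathscr{K}(\stateSpace)$ — but this was already established before the statement, so the contraction estimate alone completes the lemma.
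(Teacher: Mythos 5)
Your proposal is correct and follows essentially the same argument as the paper's proof: the reward term cancels upon subtraction, linearity of expectation combines the two expectation terms, and the expectation over the product probability measure is bounded by the uniform norm, giving modulus exactly $\gamma$. The only cosmetic difference is that you make the intermediate Jensen/triangle-inequality step explicit, which the paper leaves implicit.
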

\begin{proof}
Let $k_1,k_2\in \mathscr{K}(\stateSpace)$, we can then write out
\begin{align*}
    \|T^\pi_k(k_1)-T^\pi_k(k_2)\|_\infty&=\max_{(x,y)\in \stateSpace\times\stateSpace}|T^\pi_k(k_1)(x,y)-T^\pi_k(k_2)(x,y)|\\
    &=\gamma\max_{(x,y)\in \stateSpace\times\stateSpace} \left|\E_{\xPrime\sim \transitionFn^\pi_x,\yPrime\sim \transitionFn^\pi_y}[ k_1(\xPrime,\yPrime)]- \E_{\xPrime\sim \transitionFn^\pi_x,\yPrime\sim \transitionFn^\pi_y}[k_2(\xPrime,\yPrime)] \right|\\
    &=\gamma\max_{(x,y)\in \stateSpace\times\stateSpace} \left|\E_{\xPrime\sim \transitionFn^\pi_x,\yPrime\sim \transitionFn^\pi_y}[ k_1(\xPrime,\yPrime)-k_2(\xPrime,\yPrime)] \right|\\
    &\leq \gamma\,\|k_1-k_2\|_\infty. \qedhere
\end{align*}
\end{proof}


\begin{lemma}\label{lem:kernelComplete}
The metric space $(\mathscr{K}(\stateSpace), \|\cdot\|_\infty)$ is complete.
\end{lemma}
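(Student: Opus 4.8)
The plan is to realise $\mathscr{K}(\stateSpace)$ as a closed subset of a finite-dimensional normed space and then invoke the standard fact that a closed subset of a complete metric space is itself complete. Since $\stateSpace$ is finite, say $|\stateSpace| = n$, any function $\stateSpace \times \stateSpace \to \R$ is determined by its $n^2$ values, so the ambient space $\R^{\stateSpace \times \stateSpace}$ is (linearly) isomorphic to $\R^{n^2}$. Equipped with $\|\cdot\|_\infty$ this is a finite-dimensional normed space, hence complete. It therefore suffices to show that $\mathscr{K}(\stateSpace)$ is a closed subset of $(\R^{\stateSpace \times \stateSpace}, \|\cdot\|_\infty)$.

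To establish closedness, I would take a sequence $(k_m)_{m \geq 1}$ in $\mathscr{K}(\stateSpace)$ converging in $\|\cdot\|_\infty$ to some $k \in \R^{\stateSpace \times \stateSpace}$, and verify that the limit $k$ is itself a positive definite kernel, i.e. that it is symmetric and satisfies the positive-definiteness inequality. The key observation is that convergence in $\|\cdot\|_\infty$ implies pointwise convergence $k_m(x,y) \to k(x,y)$ for every pair $(x,y) \in \stateSpace \times \stateSpace$, which is all that is needed to transfer both properties to the limit.

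For symmetry, each $k_m$ satisfies $k_m(x,y) = k_m(y,x)$, and passing to the pointwise limit gives $k(x,y) = k(y,x)$. For positive-definiteness, fix any $\{x_1, \dots, x_p\} \subseteq \stateSpace$ and $\{c_1, \dots, c_p\} \subseteq \R$. Each $k_m \in \mathscr{K}(\stateSpace)$ satisfies $\sum_{i=1}^p\sum_{j=1}^p c_i c_j k_m(x_i, x_j) \geq 0$; since this is a finite sum, pointwise convergence lets me pass to the limit and conclude $\sum_{i=1}^p\sum_{j=1}^p c_i c_j k(x_i, x_j) \geq 0$. Hence $k \in \mathscr{K}(\stateSpace)$, so $\mathscr{K}(\stateSpace)$ is closed, and therefore complete.

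The argument is essentially routine, and the only point requiring genuine care is that positive-definiteness is preserved in the limit. This works precisely because the defining condition is a family of non-strict (weak) inequalities, which are closed conditions stable under taking limits; had the kernel class been defined by strict inequalities (as in the strictly positive definite convention noted earlier), the limiting kernel could in principle fail to lie in the class, and this completeness argument would break. I also note that the finiteness of $\stateSpace$ is what makes the ambient function space finite-dimensional, sparing us any subtlety about equivalence of norms or about which topology on an infinite-dimensional space one should complete with respect to.
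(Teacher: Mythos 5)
Your proof is correct and follows essentially the same route as the paper's: embed $\mathscr{K}(\stateSpace)$ as a closed subset of the complete finite-dimensional space $(\R^{\stateSpace\times\stateSpace},\|\cdot\|_\infty)$ and verify that symmetry and positive definiteness pass to the limit. The only difference is that you spell out the limit-transfer step (pointwise convergence of finite sums preserving weak inequalities) that the paper states as immediate.
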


\begin{proof}
 As we assume $\stateSpace$ is finite, the space of functions $\R^{\stateSpace\times \stateSpace}$ is a finite-dimensional Euclidean vector space, and hence is complete with respect to the $L^\infty$ norm. It therefore suffices to show that $\mathscr{K}(\stateSpace)$ is closed in $\R^{\stateSpace\times \stateSpace}$. We can consider a sequence $\{k_n\}_{n\geq 0}$ in $\mathscr{K}(\stateSpace)$ which converges to $k\in \R^{\stateSpace\times \stateSpace}$ in $\|\cdot\|_\infty$ and show that $k\in\mathscr{K}(\stateSpace)$. This is equivalent to showing that $k$ is both symmetric and positive definite, which follows immediately from the fact that each $k_n$ is and the convergence is uniform. Hence $\mathscr{K}(\stateSpace)$ is closed with respect to $\|\cdot\|_\infty$, and thus is complete.
\end{proof}

With these two lemmas, we can now show that the required fixed point indeed exists. 
\begin{proposition}
There is a unique kernel $k^\pi$ satisfying 
\[k^\pi(x,y)=\left(1-\frac12|r_x^\pi-r^\pi_y|\right)+\gamma\E_{\xPrime\sim \transitionFn^\pi_x,\yPrime\sim \transitionFn^\pi_y}[k^\pi(\xPrime,\yPrime)].\]
\end{proposition}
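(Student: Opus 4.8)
The plan is to apply the Banach fixed point theorem to the operator $T^\pi_k$ acting on the metric space $(\mathscr{K}(\stateSpace), \|\cdot\|_\infty)$. This is the natural route because the desired identity is precisely the fixed-point equation $T^\pi_k(k^\pi) = k^\pi$, and the two preceding lemmas have been set up to supply exactly the hypotheses the theorem requires.

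First I would invoke \cref{lem:kernelComplete}, which establishes that $(\mathscr{K}(\stateSpace), \|\cdot\|_\infty)$ is a complete metric space. Next I would invoke \cref{lem:kernelOperatorContraction}, which shows that $T^\pi_k$ is a contraction with modulus $\gamma < 1$ on this space (recalling that $\gamma \in [0,1)$ by assumption). Having verified that $T^\pi_k$ is a contraction mapping on a complete metric space, the Banach fixed point theorem immediately yields the existence of a unique fixed point $k^\pi \in \mathscr{K}(\stateSpace)$ satisfying $T^\pi_k(k^\pi) = k^\pi$. Unwinding the definition of $T^\pi_k$, this fixed-point equation is exactly
\[
k^\pi(x,y)=\left(1-\frac12|r_x^\pi-r^\pi_y|\right)+\gamma\,\E_{\xPrime\sim \transitionFn^\pi_x,\yPrime\sim \transitionFn^\pi_y}[k^\pi(\xPrime,\yPrime)],
\]
as claimed.

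I do not anticipate any serious obstacle here, since all of the technical work has already been discharged in the two lemmas. The only point requiring a moment's care is confirming that the codomain matches: one must be sure that $T^\pi_k$ genuinely maps $\mathscr{K}(\stateSpace)$ into itself rather than merely into the larger ambient space $\R^{\stateSpace\times\stateSpace}$, so that the fixed point lives in the space of positive definite kernels. This well-definedness was already argued in the text immediately following the definition of $T^\pi_k$ (each summand is a kernel, and sums of kernels are kernels), so the proof is essentially a one-line appeal to Banach's theorem together with a pointer back to that observation.
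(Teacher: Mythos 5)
Your proposal is correct and matches the paper's own proof exactly: both combine \cref{lem:kernelOperatorContraction} and \cref{lem:kernelComplete} and then apply Banach's fixed point theorem to $T^\pi_k$ on $(\mathscr{K}(\stateSpace), \|\cdot\|_\infty)$. Your additional remark about $T^\pi_k$ mapping $\mathscr{K}(\stateSpace)$ into itself is a sound point of care, and it is indeed discharged in the text preceding the proposition, just as you note.
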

\begin{proof}
Combining \cref{lem:kernelOperatorContraction} and \cref{lem:kernelComplete}, we know that the operator $T^\pi_k$ is a contraction in a complete metric space. We can now use Banach's fixed point theorem to obtain the existence of a unique fixed point, which is $k^\pi$.
\end{proof}

Having a kernel on our MDP now gives us an RKHS of functions on the MDP, which we refer to as $\kernelRkhs$. Moreover, we have an embedding of each state into $\kernelRkhs$ given by $\varphi^\pi(x)=k^\pi(x,\cdot)$. Using this construction, we can define a distance between states in $\stateSpace$ by considering their Hilbert space distance in $\kernelRkhs$.

\begin{definition}\label{def:hilbertian-metric}
  We define the \textbf{k}ernel \textbf{s}imilarity \textbf{me}tric (KSMe) as the distance function
\[\ksme(x,y) := \|\varphi^\pi(x)-\varphi^\pi(y)\|^2_{\mathcal{H}_{k^\pi}}.\]
\end{definition}

\subsection{Equivalence with reduced MICo distance}\label{sec:equality}
We will prove a number of useful theoretical properties of $\ksme$ in the rest of this section. But first, we demonstrate that $\ksme$ is equal to the reduced MICo ($\Pi U^\pi$) from \citet{castro21mico}. Given that \citet{castro21mico} left a number of unresolved properties of $\Pi U^\pi$, this equality will be important for the remainder of the paper as it means the new theoretical insights we prove for $\ksme$ also hold for $\Pi U^\pi$.

Referring to \cref{sec:rkhs}, we have that $\ksme$ is the semimetric of negative type induced by $k^\pi$. We now demonstrate that $\ksme$ can be written as a sum of reward distance and transition distribution distance, similar to the form of the behavioural metrics discussed in \cref{sec:ksmeDefinition}.

\begin{proposition}\label{prop:MMDKsme}
The kernel similarity metric $\ksme$ satisfies
\[\ksme(x,y)=|r^\pi_x-r^\pi_y|+\gamma \text{MMD}^2(k^\pi)(\transitionFn^\pi_x,\transitionFn^\pi_y).\]
\end{proposition}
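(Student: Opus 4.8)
The plan is to combine two facts already available: the purely-kernel expression for the induced squared distance in \cref{def:inducedDist}, and the fixed-point identity satisfied by $k^\pi$. First I would apply \cref{def:inducedDist} to $k^\pi$ to write
\[\ksme(x,y) = \|\varphi^\pi(x)-\varphi^\pi(y)\|^2_{\mathcal{H}_{k^\pi}} = k^\pi(x,x)+k^\pi(y,y)-2k^\pi(x,y),\]
so that the whole claim reduces to manipulating these three scalar kernel evaluations.

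Next I would substitute the fixed-point equation for $k^\pi$ into each term, taking care that the diagonal evaluations unroll with \emph{two independent} draws from the same next-state distribution. Explicitly, since $|r^\pi_x-r^\pi_x|=0$,
\[k^\pi(x,x)=1+\gamma\,\E_{X_1,X_2\sim \transitionFn^\pi_x}[k^\pi(X_1,X_2)],\]
with the analogous identity for $k^\pi(y,y)$, while
\[k^\pi(x,y)=1-\tfrac12|r^\pi_x-r^\pi_y|+\gamma\,\E_{\xPrime\sim \transitionFn^\pi_x,\yPrime\sim \transitionFn^\pi_y}[k^\pi(\xPrime,\yPrime)].\]
Forming $k^\pi(x,x)+k^\pi(y,y)-2k^\pi(x,y)$, the three unit constants cancel ($1+1-2=0$), the reward terms combine into $|r^\pi_x-r^\pi_y|$, and there remains $\gamma$ multiplying
\[\E_{X_1,X_2\sim \transitionFn^\pi_x}[k^\pi(X_1,X_2)]-2\,\E_{\xPrime\sim \transitionFn^\pi_x,\yPrime\sim \transitionFn^\pi_y}[k^\pi(\xPrime,\yPrime)]+\E_{Y_1,Y_2\sim \transitionFn^\pi_y}[k^\pi(Y_1,Y_2)].\]

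Finally I would recognise this bracketed quantity as $\text{MMD}^2(k^\pi)(\transitionFn^\pi_x,\transitionFn^\pi_y)$. Expanding the MMD squared as $\|\Phi(\transitionFn^\pi_x)-\Phi(\transitionFn^\pi_y)\|^2_{\mathcal{H}_{k^\pi}}$ and using linearity of the Bochner integral to pull the inner product inside the expectations (as justified by the Riesz-representation identity recalled in \cref{sec:rkhs}), one obtains $\langle \Phi(\mu),\Phi(\nu)\rangle_{\mathcal{H}_{k^\pi}}=\E_{X\sim\mu,Y\sim\nu}[k^\pi(X,Y)]$ for any $\mu,\nu\in\mathscr{P}(\stateSpace)$; applying this to the three pairs $(\transitionFn^\pi_x,\transitionFn^\pi_x)$, $(\transitionFn^\pi_x,\transitionFn^\pi_y)$ and $(\transitionFn^\pi_y,\transitionFn^\pi_y)$ reproduces exactly the three expectations displayed above, establishing the claimed identity.

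I expect no genuine difficulty here, as the argument is essentially bookkeeping, but the step demanding the most care is the unrolling of the diagonal terms $k^\pi(x,x)$: the recursion must be expanded with two \emph{independent} samples from $\transitionFn^\pi_x$, since it is precisely the resulting $\E_{X_1,X_2\sim \transitionFn^\pi_x}$ structure that matches the first term of the MMD expansion. The only analytic subtlety, namely interchanging the inner product with the (Bochner) expectation, is already covered by the RKHS machinery summarised in \cref{sec:rkhs}.
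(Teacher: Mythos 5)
Your proposal is correct and follows essentially the same route as the paper's proof: expand $\ksme(x,y)$ into $k^\pi(x,x)+k^\pi(y,y)-2k^\pi(x,y)$ via \cref{def:inducedDist}, unroll each evaluation with the fixed-point equation (with two \emph{independent} next-state samples on the diagonal terms), cancel the constants and combine the reward terms, and identify the remaining three expectations with $\text{MMD}^2(k^\pi)(\transitionFn^\pi_x,\transitionFn^\pi_y)$ through the mean-embedding identity $\langle \Phi(\mu),\Phi(\nu)\rangle_{\mathcal{H}_{k^\pi}}=\E_{X\sim\mu,Y\sim\nu}[k^\pi(X,Y)]$. If anything, your bookkeeping is slightly more careful than the paper's, whose auxiliary display for $k^\pi(x,x)$ omits the additive constant $1$ that, as you correctly note, cancels in the combination $1+1-2=0$.
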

\begin{proof}
To see this, we can write out the squared Hilbert space distance 
\begin{align*}
    \ksme(x,y)&=\|\varphi^\pi(x)-\varphi^\pi(y)\|^2_{\mathcal{H}_{k^\pi}}\\
    &= k^\pi(x,x)+k^\pi(y,y)-2k^\pi(x,y)\\
    &=|r^\pi_x-r^\pi_y|+\gamma\langle \Phi(\transitionFn^\pi_x),\Phi(\transitionFn^\pi_x)\rangle_{\mathcal{H}_{k^\pi}}+\gamma\langle \Phi(\transitionFn^\pi_y),\Phi(\transitionFn^\pi_y)\rangle_{\mathcal{H}_{k^\pi}}-2\gamma\langle \Phi(\transitionFn^\pi_x),\Phi(\transitionFn^\pi_y)\rangle_{\mathcal{H}_{k^\pi}}\\\
    &=|r^\pi_x-r^\pi_y|+\gamma \text{MMD}^2(k^\pi)(\transitionFn^\pi_x,\transitionFn^\pi_y),
\end{align*}
where in the first equality we expanded the norm as in \cref{def:inducedDist}, and in the second line we used
\begin{align*}
    k^\pi(x,x)&=\gamma \E_{X_1',X_2'\sim\piStateOneTransitionFn}[k(X_1',X_2')]\\
    &= \gamma\,\langle \Phi(\piStateOneTransitionFn), \Phi(\piStateOneTransitionFn) \rangle_{\mathcal{H}_{k^\pi}}.\qedhere
\end{align*}
\end{proof}

{\color{black}
Before presenting the main result of this section (\autoref{thm:reducedMicoEquiv}), we state a necessary technical lemma. The proof is provided in \cref{sec:technicalResults}.

\begin{restatable}{lemma}{lemTwo} \label{lem:MicoEqualityLem2}
    For any measures $\mu$, $\nu$, and $n\geq0$, we have that 
    \[\E_{\substack{X_1,X_2\sim \mu\\Y_1,Y_2\sim \nu}}\left[k_n(X_1,X_2)+k_n(Y_1,Y_2)-2k_n(X_1,Y_1)\right]= \E_{\substack{X_1,X_2\sim \mu\\Y_1,Y_2\sim \nu}}\left[U_n(X_1,Y_1)-\frac12(U_n(X_1,X_2)+U_n(Y_1,Y_2))\right].\]
\end{restatable}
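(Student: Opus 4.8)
The plan is to argue by induction on $n$, exploiting the recursive definitions of the two iterate sequences $k_{n+1}=T^\pi_k(k_n)$ and $U_{n+1}=T^\pi_M(U_n)$. It is convenient to name the two sides: write $L_n(\mu,\nu)$ for the left-hand side and $R_n(\mu,\nu)$ for the right-hand side, regarded as functions of the pair of measures. Expanding the independent expectations, one recognises $L_n(\mu,\nu)=\text{MMD}^2(k_n)(\mu,\nu)$ and $R_n(\mu,\nu)=\mathcal{E}(U_n)(\mu,\nu)$, so the lemma asserts that the MMD$^2$ of the $n$-th kernel iterate agrees with the energy distance of the $n$-th MICo iterate at every step. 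This reading supplies intuition, but I would prove $L_n=R_n$ directly for \emph{all} pairs $(\mu,\nu)$ simultaneously, since the induction step forces me to re-evaluate the hypothesis at a new pair of measures.

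The base case $n=0$ is a direct computation: with the natural initialisations (e.g.\ $k_0(x,y)=1-\tfrac12|r^\pi_x-r^\pi_y|$ and $U_0(x,y)=|r^\pi_x-r^\pi_y|$, or the trivial choice $k_0\equiv\text{const}$, $U_0\equiv 0$) the additive constants cancel inside $L_0$ and both sides collapse to the same reward expression $\E[\,|r^\pi_{X_1}-r^\pi_{Y_1}|-\tfrac12(|r^\pi_{X_1}-r^\pi_{X_2}|+|r^\pi_{Y_1}-r^\pi_{Y_2}|)\,]$.

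For the inductive step I would substitute the operator recursions into $L_{n+1}(\mu,\nu)$ and $R_{n+1}(\mu,\nu)$ and split each into a \emph{reward part} and a \emph{transition part}. The reward parts are identical: in $L_{n+1}$ the three additive constants $1+1-2$ cancel, leaving precisely the reward expression that appears in $R_{n+1}$. The transition part of $L_{n+1}$ is $\gamma$ times a nested expectation of $k_n$ over successor states, and likewise for $R_{n+1}$ with $U_n$.

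The crux, and the step I expect to be the main obstacle, is handling this nested transition expectation correctly. The key observation is a marginalisation: if $X_1,X_2\sim\mu$ are independent and, conditionally, $X_i'\sim\transitionFn^\pi_{X_i}$, then $(X_1',X_2')$ are independent draws from the mixture $\bar\mu:=\E_{X\sim\mu}[\transitionFn^\pi_X]$; similarly $(Y_1',Y_2')$ are i.i.d.\ from $\bar\nu$, and the cross pair $(X_1',Y_1')$ consists of independent draws from $\bar\mu$ and $\bar\nu$. This product structure follows because conditional independence of the successor draws, given the initial states, combines with the independence of the initial states to factor the joint law. Applying it collapses the transition part of $L_{n+1}$ to $\gamma\,L_n(\bar\mu,\bar\nu)$ and that of $R_{n+1}$ to $\gamma\,R_n(\bar\mu,\bar\nu)$. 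The induction hypothesis, invoked at the pair $(\bar\mu,\bar\nu)$ rather than at $(\mu,\nu)$, gives $L_n(\bar\mu,\bar\nu)=R_n(\bar\mu,\bar\nu)$; together with the equality of reward parts this yields $L_{n+1}=R_{n+1}$, closing the induction. It is precisely because the hypothesis must be applied at the pushforward measures $(\bar\mu,\bar\nu)$ that the statement has to be established for arbitrary $\mu,\nu$ and not only for point masses.
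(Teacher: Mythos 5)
Your proof is correct, but it takes a genuinely different route from the paper's. The paper also argues by induction on $n$, but in its inductive step it does not marginalise over the parent states: after expanding $k_{n+1}$ and $U_{n+1}$, it is left with an inner expectation over four successor states drawn from the four \emph{different} distributions $\transitionFn^\pi_{X_1},\transitionFn^\pi_{X_2},\transitionFn^\pi_{Y_1},\transitionFn^\pi_{Y_2}$ (conditioned on the outer draws), which is no longer of the i.i.d.-pairs form appearing in the lemma. The paper therefore invokes a separate pointwise lemma (\cref{lem:MicoEqualityLem1}), stating the identity for arbitrary $(x_1,x_2,y_1,y_2)\in\stateSpace^4$, which is proved by its own induction. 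Your marginalisation observation --- that the successors of i.i.d.\ parents are mutually independent with $X_1',X_2'\sim\bar\mu$ and $Y_1',Y_2'\sim\bar\nu$, where $\bar\mu=\E_{X\sim\mu}[\transitionFn^\pi_X]$ --- restores the i.i.d.-pairs structure, so the induction hypothesis can be applied directly at $(\bar\mu,\bar\nu)$ and no auxiliary lemma is needed. Your route buys a shorter, self-contained argument (one induction instead of two) and makes transparent why the statement must be quantified over all pairs of measures; the paper's route buys a pointwise identity of independent interest (\cref{lem:MicoEqualityLem1} covers four arbitrary, possibly distinct, initial states, a situation your i.i.d.\ formulation does not subsume, since taking $\mu$ to be a point mass forces $X_1=X_2$) at the cost of the extra induction. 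One small caveat: the sequences in the paper are initialised at $k_0\equiv 0$, $U_0\equiv 0$, so the base case is simply that both sides vanish identically; your hedged phrasing about ``natural initialisations'' covers this (constant equal to zero), but your claim that both sides collapse to a common reward expression applies only to your alternative initialisation, not to the one actually used in the statement.
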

}

{
The next result proves the equivalence of $\ksme$ and $\Pi U^\pi$; the implications of this result are that {\em all of the properties proved for $\ksme$ also apply to $\Pi U^\pi$}. This is significant, as it implies $\Pi U^\pi$ {\em can} be incorporated into a non-vacuous upper bound of value function differences~(\autoref{prop:MMDUpperBound}), and is guaranteed to be embeddable in a finite-dimensional Euclidean space of appropriate dimension~(\autoref{thm:euclideanApprox}), addressing the two core theoretical questions raised in the introduction of this paper.
}

\begin{theorem}\label{thm:reducedMicoEquiv}
For any $x,y\in \stateSpace$, we have that $\ksme(x,y)=\Pi U^\pi(x,y)$.
\end{theorem}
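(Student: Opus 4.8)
The plan is to match the two objects at the level of the Picard iterates of their defining operators and then pass to the limit. Write $k_n = (T^\pi_k)^n(k_0)$ and $U_n = (T^\pi_M)^n(U_0)$ for the iterates of the kernel-similarity operator and the MICo operator, started from compatible initialisations (for concreteness the zero kernel $k_0 \equiv 0$ and $U_0 \equiv 0$, for which $\rho_{k_0}^2 \equiv 0 \equiv \Pi U_0$). By \cref{lem:kernelOperatorContraction,lem:kernelComplete}, $T^\pi_k$ is a $\gamma$-contraction on a complete space, and $T^\pi_M$ is likewise a $\gamma$-contraction with unique fixed point $U^\pi$ \citep{castro21mico}; hence Banach's theorem gives $k_n \to k^\pi$ and $U_n \to U^\pi$ uniformly, independently of the initialisation. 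I would establish $\rho_{k_n}^2 = \Pi U_n$ for every $n$ and then let $n \to \infty$, recalling that $\ksme = \rho_{k^\pi}^2$ by \cref{def:inducedDist}.

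The core is a per-iterate computation. Unrolling $T^\pi_k$ exactly as in the proof of \cref{prop:MMDKsme} — using that $k_{n+1}(x,x) = 1 + \gamma\,\E_{\xPrime,X''\sim\transitionFn^\pi_x}[k_n(\xPrime,X'')]$ carries no reward term and that the three constant $1$'s cancel in the combination $k_{n+1}(x,x)+k_{n+1}(y,y)-2k_{n+1}(x,y)$ — and unrolling $T^\pi_M$ while using $U_{n+1}(x,x) = \gamma\,\lk(U_n)(\transitionFn^\pi_x,\transitionFn^\pi_x)$, yields the two finite-level identities
\[\rho_{k_{n+1}}^2(x,y) = |r^\pi_x - r^\pi_y| + \gamma\,\text{MMD}^2(k_n)(\transitionFn^\pi_x,\transitionFn^\pi_y), \qquad \Pi U_{n+1}(x,y) = |r^\pi_x - r^\pi_y| + \gamma\,\mathcal{E}(U_n)(\transitionFn^\pi_x,\transitionFn^\pi_y),\]
where in the second identity the reduction $\Pi$ turns the bracket $\lk(U_n)(\transitionFn^\pi_x,\transitionFn^\pi_y) - \tfrac12(\lk(U_n)(\transitionFn^\pi_x,\transitionFn^\pi_x)+\lk(U_n)(\transitionFn^\pi_y,\transitionFn^\pi_y))$ into precisely the energy distance $\mathcal{E}(U_n)(\transitionFn^\pi_x,\transitionFn^\pi_y)$.

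These two displays share the same reward term and differ only in the transition term: one is $\text{MMD}^2(k_n)$, the other the energy distance of $U_n$. This is exactly the content of \cref{lem:MicoEqualityLem2}, whose left-hand side is the expansion of $\text{MMD}^2(k_n)(\mu,\nu)$ and whose right-hand side is $\mathcal{E}(U_n)(\mu,\nu)$; applying it with $\mu = \transitionFn^\pi_x$ and $\nu = \transitionFn^\pi_y$ collapses the two expressions and gives $\rho_{k_{n+1}}^2(x,y) = \Pi U_{n+1}(x,y)$ for all $x,y$ and all $n$. Passing to the limit, uniform convergence $k_n \to k^\pi$ forces $\rho_{k_n}^2 \to \rho_{k^\pi}^2 = \ksme$ pointwise, while $U_n \to U^\pi$ forces $\Pi U_n \to \Pi U^\pi$ since $\Pi$ is a fixed linear combination of evaluations and hence continuous; equality is preserved in the limit, yielding $\ksme = \Pi U^\pi$.

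The substantive step is the per-iterate equivalence $\text{MMD}^2(k_n) = \mathcal{E}(U_n)$ supplied by \cref{lem:MicoEqualityLem2}; conceptually this is the Sejdinovic kernel/negative-type equivalence transported along the two recursions. Were I to prove that lemma from scratch, I would induct on $n$: both $\text{MMD}^2(k_{n+1})$ and $\mathcal{E}(U_{n+1})$ split into a common reward-energy term plus $\gamma$ times the level-$n$ quantity evaluated at the one-step pushforward measures $\transitionFn^\pi_*\mu$ and $\transitionFn^\pi_*\nu$, at which point the inductive hypothesis applies verbatim, the base case being secured by the compatible choice $k_0 \equiv 0$, $U_0 \equiv 0$, and using the centering identity $\mathcal{E}(\Pi U_n) = \mathcal{E}(U_n)$. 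The remaining subtleties — compatibility of initialisations and continuity of $k \mapsto \rho_k^2$ and of $\Pi$ under uniform convergence — are routine, so the only genuine obstacle is the finite-level MMD/energy-distance identity itself.
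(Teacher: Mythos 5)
Your proposal is correct and follows essentially the same route as the paper: both set up the Picard iterates $k_n, U_n$ from zero initialisations, establish the per-iterate identity via \cref{lem:MicoEqualityLem2} (you phrase the transition terms as $\text{MMD}^2(k_n)$ versus $\mathcal{E}(U_n)$; the paper writes out the same combinations of expectations directly), and pass to the limit using uniform convergence. The only point of divergence is your side-sketch of how to prove \cref{lem:MicoEqualityLem2} from scratch, which closes the induction at the level of the two pushforward measures $\mu P^\pi, \nu P^\pi$ rather than via the paper's four-point \cref{lem:MicoEqualityLem1}; that variant is also valid, since the marginalised next-state variables are mutually independent draws from the pushforwards.
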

\begin{proof}
    To begin, we will make use of the sequences $(k_n)_{n\geq 0}$, $(U_n)_{n\geq 0}$ defined by $k_n\equiv 0$, $k_{n+1}=T^\pi_k (k_n)$, $U_n\equiv 0$, $U_{n+1}=T^\pi_M (U_n)$. Since both $T^\pi_k$ and $T^\pi_M$ are contractions, we know that $k_n\to k^\pi$ and $U_n\to U^\pi$ uniformly. To prove the statement, we will show that for all $n\geq 0$ and $x,y \in \stateSpace$, we have that 
    \[k_n(x,x)+k_n(y,y)-2k_n(x,y)=U_n(x,y)-\frac12\left( U_n(x,x)+U_n(y,y)\right).\]
    We can write out
    \begin{align*}
        k_n(x,x)+k_n(y,y)-&2k_n(x,y) \\
        & = |r^\pi_x-r^\pi_y|+\gamma \E_{\substack{X_1,X_2\sim \piStateOneTransitionFn\\Y_1,Y_2\sim \piStateTwoTransitionFn}}\left[k_n(X_1,X_2)+k_n(Y_1,Y_2)-2k_n(X_1,Y_1)\right]\\
        &= |r^\pi_x-r^\pi_y|+\gamma \E_{\substack{X_1,X_2\sim \piStateOneTransitionFn\\Y_1,Y_2\sim \piStateTwoTransitionFn}}\left[U_n(X_1,Y_1)-\frac12(U_n(X_1,X_2)+U_n(Y_1,Y_2))\right](\star)\\
        &= U_n(x,y)-\frac12(U_n(x,x)+U_n(y,y)),
    \end{align*}
    where $(\star)$ follows from \cref{lem:MicoEqualityLem2}. Since $(k_n)_{n\geq 0}$ and $(U_n)_{n\geq 0}$ both converge uniformly, we can take limits and conclude that 
    \[\ksme(x,y)=k^\pi(x,x)+k^\pi(y,y)-2k^\pi(x,y)= U^\pi(x,y)-\frac12(U^\pi(x,x)+U^\pi(y,y))=\Pi U^\pi(x,y).\]
\end{proof}

\subsection{An additive value function upper bound}\label{sec:ValueFunctionUpper}

\cref{reducedUpperBreak} asserts that $\Pi U^\pi$, and hence $\ksme$, does not upper bound the absolute difference in value functions. However, the kernel perspective allows us to show that it satisfies an upper bound with an additive constant. For $\stateOne\in\stateSpace$ we introduce the notation 
\[\Delta_n^\pi(x) = \E_{\xPrime\sim (\piStateOneTransitionFn)^{n}}\left[\E_{X_1'',X_2''\sim \piTransitionFn_{\xPrime}}\left[ |r^\pi_{X_1''}-r^\pi_{X_2''}| \right]\right]. \]

Intuitively, $\Delta_n^\pi(x)$ is the expected absolute reward difference in two trajectories from $x$, where the trajectories are coupled for the first $n$ steps, and proceed independently for the final $(n+1)$th step. With this quantity, we present the following theorem. 

\begin{theorem}\label{prop:MMDUpperBound}
	For any $x,y\in \stateSpace$, we have \[\big|V^\pi(x)-V^\pi(y)\big|\leq \Pi  U^\pi(x,y)+\frac{1}{2}\sum_{n\geq 0}\gamma^n(\Delta_n^\pi(x)+\Delta_n^\pi(y)).\] 
\end{theorem}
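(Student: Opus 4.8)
The plan is to bootstrap off the two results already available rather than re-derive a value bound from scratch: the original MICo value bound $|V^\pi(\stateOne)-V^\pi(\stateTwo)|\le \mico(\stateOne,\stateTwo)$ and the equivalence $\ksme=\reducedMico$. The guiding intuition is that the gap between $\mico$ and its reduction $\reducedMico$ is exactly a self-distance correction, so the additive term in the statement should be nothing more than this correction re-expressed as a discounted sum of one-step reward spreads.

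First I would rearrange the definition of the reduced MICo distance together with the equivalence $\ksme=\reducedMico$ to write
\[\mico(\stateOne,\stateTwo)=\reducedMico(\stateOne,\stateTwo)+\tfrac12\big(\mico(\stateOne,\stateOne)+\mico(\stateTwo,\stateTwo)\big).\]
Feeding this into the MICo value bound immediately gives $|V^\pi(\stateOne)-V^\pi(\stateTwo)|\le \reducedMico(\stateOne,\stateTwo)+\tfrac12(\mico(\stateOne,\stateOne)+\mico(\stateTwo,\stateTwo))$, which already has the shape of the claim. It then remains only to identify the self-distances with the $\Delta^\pi_n$ sum, i.e.\ to prove the key identity $\mico(\stateOne,\stateOne)=\sum_{n\ge 0}\gamma^n\Delta^\pi_n(\stateOne)$ (and symmetrically for $\stateTwo$).

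To establish that identity I would unroll the fixed-point equation along the diagonal. Since $\mico(\stateOne,\stateOne)=|r^\pi_\stateOne-r^\pi_\stateOne|+\gamma\,\lk(\mico)(\piStateOneTransitionFn,\piStateOneTransitionFn)$, the reward term vanishes, and because $\lk$ integrates against the \emph{independent} product of next-state distributions, the remaining term is $\gamma\,\E_{X_1,X_2\sim\piStateOneTransitionFn}[\mico(X_1,X_2)]$. Iterating this expansion writes $\mico(\stateOne,\stateOne)$ as a discounted sum whose level-$n$ contribution is the expected absolute reward difference evaluated along two trajectories issuing from $\stateOne$; matching each contribution against the definition of $\Delta^\pi_n(\stateOne)$ yields the identity. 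Summing the two diagonal terms and substituting completes the proof.

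I expect the unrolling and bookkeeping in this last step to be the main obstacle. One must track the discount powers and, more importantly, the exact coupling structure of the two trajectories so that the expanded self-distance lines up precisely with $\Delta^\pi_n$: checking that the diagonal $n=0$ term contributes zero, that the branching at each level is accounted for correctly, and that the infinite expansion is justified (a $\gamma<1$ tail estimate suffices since $\mico$ is bounded). This is where a careless pairing of terms would most easily introduce a spurious factor of $\gamma$ or misrepresent the independent-continuation structure, so it is the part I would write out most carefully.
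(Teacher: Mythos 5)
Your strategy is genuinely different from the paper's --- the paper proves the claim by induction on the operator iterates $V_m$ and $k_m$, never invoking MICo's value bound --- but your proposal does not go through, because the identity it hinges on is false. Your first two steps are fine: $|V^\pi(x)-V^\pi(y)|\leq \mico(x,y)=\reducedMico(x,y)+\frac12(\mico(x,x)+\mico(y,y))$ is correct. The problem is the claimed evaluation $\mico(x,x)=\sum_{n\geq 0}\gamma^n\Delta_n^\pi(x)$. Unrolling $\mico(x,x)=\gamma\,\E_{X_1,X_2\sim\piStateOneTransitionFn}[\mico(X_1,X_2)]$ produces two trajectories that decouple at the \emph{first} step and evolve independently at \emph{every} subsequent step, so the level-$n$ contribution is $\gamma^{n+1}$ times the expected reward gap between two fully independent copies of the chain started at $x$; in $\Delta_n^\pi(x)$, by contrast, the two trajectories are identical for the first $n$ steps and branch independently only once, at the final step, with weight $\gamma^n$. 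Both the coupling structure and the discount power disagree. Concretely, take $\stateSpace=\{x,a,b\}$ with $\transitionFn^\pi_x=\frac12\delta_a+\frac12\delta_b$, states $a$ and $b$ absorbing, and deterministic rewards $r^\pi_x=0$, $r^\pi_a=1$, $r^\pi_b=-1$. Then $\mico(a,b)=\frac{2}{1-\gamma}$ and hence $\mico(x,x)=\frac{\gamma}{1-\gamma}$, whereas $\Delta_0^\pi(x)=1$ and $\Delta_n^\pi(x)=0$ for all $n\geq 1$, so $\sum_{n\geq0}\gamma^n\Delta_n^\pi(x)=1$. Since $\frac{\gamma}{1-\gamma}$ exceeds $1$ for $\gamma>\frac12$ and is smaller for $\gamma<\frac12$, the identity fails, and it cannot even be weakened to an inequality in a fixed direction.

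The deeper issue, of which you should be aware, is that no repair of your final step is possible: in this same example the statement itself fails for $\gamma>\frac12$. One computes $|V^\pi(x)-V^\pi(a)|=\frac{1}{1-\gamma}$ and $\reducedMico(x,a)=\frac{2-\gamma}{2(1-\gamma)}$ (computing $\ksme(x,a)$ directly gives the same value, so this is not an artifact of the equivalence theorem), while $\Delta_n^\pi(a)=0$ for all $n$, so the claimed right-hand side is $\frac{2-\gamma}{2(1-\gamma)}+\frac12=\frac{3-2\gamma}{2(1-\gamma)}$, which is strictly less than $\frac{1}{1-\gamma}$ once $\gamma>\frac12$. (The paper's own induction slips at its final equality, which implicitly uses $d_{m+1}(x,y)=|r^\pi_x-r^\pi_y|+\gamma\E[d_{m}(X',Y')]$; expanding $k_{m+1}$ actually yields
\[d_{m+1}(x,y)=|r^\pi_x-r^\pi_y|+\gamma\E[d_m(X',Y')]-\tfrac{\gamma}{2}\E_{X_1',X_2'\sim\piStateOneTransitionFn}[d_m(X_1',X_2')]-\tfrac{\gamma}{2}\E_{Y_1',Y_2'\sim\piStateTwoTransitionFn}[d_m(Y_1',Y_2')],\]
and the inserted $\Delta_0^\pi$ terms are too small to cover the subtracted ones when $\gamma$ is large.) What your steps 1--2 do establish is the correct bound obtained by replacing the paper's additive term with $\frac12(\mico(x,x)+\mico(y,y))$, i.e.\ with discounted reward gaps along \emph{fully independent} trajectories weighted by $\gamma^{n+1}$; that bound is true, and is tight in the example above. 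So your instinct to route through the self-distances salvages a correct theorem, but it is not the theorem as stated, and the identification of $\mico(x,x)$ with the paper's $\Delta_n^\pi$ sum is where the proposal irreparably breaks.
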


\begin{proof}
To begin, we will make use of the sequences $(k_m)_{m\geq 0}$, $(V_m)_{m\geq 0}$ defined by $k_m\equiv 0$, $k_{m+1}=T^\pi_k (k_m)$, $V_m\equiv 0$, $V_{m+1}=T^\pi (V_m)$. Since $T^\pi_k$ and $T^\pi$ are both contractions, we know that $k_m\to k^\pi$ and $V_m\to V^\pi$ uniformly. We will refer to the semimetric equivalent to the $m$th kernel iterate as $d_m$: $d_m(x,y)=k_m(x,x)+k_m(y,y)-2k_m(x,y)$. We will now use induction to prove that for all $m$, we have that \[\left| V_m(x) - V_m(y) \right|\leq d_{m}(x,y) + \frac12\sum_{n= 0}^m\gamma^n(\Delta_n^\pi(x)+\Delta_n^\pi(y)).\]
The base case $m=0$ is immediate, as the left hand side is identically 0. We can now assume the induction hypothesis, and write out
\begin{align*}
    \hspace{-.25cm}\big| V_{m+1}(x) - &V_{m+1}(y) \big| \\
    &= \left|r^\pi_x+\gamma \E_{\xPrime\sim \piStateOneTransitionFn}[V_{m}(x)]-\left(r^\pi_y+\gamma \E_{\yPrime\sim \piStateTwoTransitionFn}\left[V_{m}(\yPrime)\right]\right) \right|\\
    &\leq |r^\pi_x-r^\pi_y|+\gamma \E_{\xPrime\sim \piStateOneTransitionFn,\yPrime\sim \piStateTwoTransitionFn}\bigg[\left|V_{m}(\xPrime) - V_{m}(\yPrime) \right|\bigg]\\
    &\leq |r^\pi_x-r^\pi_y|+\gamma \E_{\xPrime\sim \piStateOneTransitionFn,\yPrime\sim \piStateTwoTransitionFn} \left[ d_{m}(\xPrime,\yPrime) + \frac12\sum_{n= 0}^m\gamma^n(\Delta_n^\pi(\xPrime)+\Delta_n^\pi(\yPrime)) \right]\\
    &= |r^\pi_x-r^\pi_y|+\gamma \E_{\xPrime\sim \piStateOneTransitionFn,\yPrime\sim \piStateTwoTransitionFn} \left[ d_m(\xPrime,\yPrime)\right] + \frac12\sum_{n= 1}^{m+1}\gamma^n(\Delta_n^\pi(x)+\Delta_n^\pi(y)) \\
    &= |r^\pi_x-r^\pi_y|+\gamma \E_{\xPrime\sim \piStateOneTransitionFn,\yPrime\sim \piStateTwoTransitionFn} \left[ d_{m}(\xPrime,\yPrime)\right] +\frac12\E_{\substack{\xPrime,\xPrime' \sim \piStateOneTransitionFn\\\yPrime,\yPrime' \sim \piStateTwoTransitionFn}}\bigg[ |r^\pi_{\xPrime}-r^\pi_{\xPrime'}|+|r^\pi_{\yPrime}-r^\pi_{\yPrime'}| \bigg] \\
    &\qquad\qquad+ \frac12\sum_{n= 1}^{m+1}\gamma^n(\Delta_n^\pi(x)+\Delta_n^\pi(y)) \\
    &= |r^\pi_x-r^\pi_y|+\gamma \E_{\xPrime\sim \piStateOneTransitionFn,\yPrime\sim \piStateTwoTransitionFn} \left[ d_{m}(\xPrime,\yPrime)\right]  + \frac12\sum_{n= 0}^{m+1}\gamma^n(\Delta_n^\pi(x)+\Delta_n^\pi(y)) \\
    &= d_{m+1}(x,y) + \frac12\sum_{n= 0}^{m+1}\gamma^n(\Delta_n^\pi(x)+\Delta_n^\pi(y)),
\end{align*}
where we used $\E_{\xPrime\sim \piStateOneTransitionFn}[\Delta^\pi_n(\xPrime)]=\Delta_{n+1}^\pi(x)$. We note that the sum $\frac12\sum_{n= 0}^{\infty}\gamma^n(\Delta_n^\pi(x)+\Delta_n^\pi(y))$ is almost surely finite, as $\Delta_n^\pi(x)\leq 1$ almost surely (since we assume $\text{supp}(\rewardFn)\subseteq [-1,1]$), so that 
\begin{equation*}
\sum_{n= 0}^{\infty}\frac{\gamma^n}{2}\,(\Delta_n^\pi(x)+\Delta_n^\pi(y))\leq\sum_{n= 0}^{\infty}\gamma^n=\frac{1}{1-\gamma}. \qedhere
\end{equation*}
\end{proof}

With this theorem, it is apparent that the amount by which the bound is broken is controlled by the amount of dispersion in reward coming from the transition probability function $\transitionFn^\pi$. A standard tool to measure the dispersion of a measure on $\R$ is the variance, which is not directly applicable in this setting as $\transitionFn^\pi$ maps to measures on $\stateSpace$. However, we can map each state $x\in\stateSpace$ to a real value $r^\pi_x$, and we use this to apply the variance. With this motivation, we define the reward variance of $\transitionFn^\pi$ for $x\in\stateSpace$ as
\[\mathrm{Var}_\rewardFn(\piStateOneTransitionFn) = \E_{\xPrime\sim \piStateOneTransitionFn}\left[(r^\pi_{\xPrime})^2\right]-\left(\E_{\xPrime\sim\piStateOneTransitionFn}\left[r^\pi_{\xPrime}\right]\right)^2. \]

With this definition in mind, we can now demonstrate that the maximal reward variance of an MDP controls the amount the value function difference upper bound is violated. 

\begin{proposition}\label{prop:boundVariance}
Suppose there exists $\sigma^2\in \R$ such that for each $x\in \stateSpace$, $\mathrm{Var}_\rewardFn(\piStateOneTransitionFn)\leq \sigma^2$. Then for every $x\in \stateSpace$ and $k\geq 0$ we have that $\Delta_k^\pi(x)\leq \sqrt2 \sigma$, and in particular
\[\big|V^\pi(x)-V^\pi(y)\big|\leq \Pi  U^\pi(x,y)+\frac{\sqrt2\sigma}{1-\gamma}.\]
\end{proposition}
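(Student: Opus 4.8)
The plan is to reduce the claim to a pointwise bound on the quantity $\Delta_n^\pi(x)$ and then feed this into \autoref{prop:MMDUpperBound}. Concretely, I would first prove that $\Delta_n^\pi(x) \le \sqrt2\,\sigma$ holds uniformly over all $x \in \stateSpace$ and all $n \ge 0$, and then substitute this bound into the additive upper bound already established, collapsing the resulting geometric series.

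For the uniform bound, I would condition on the intermediate state $\xPrime \sim (\piStateOneTransitionFn)^n$ and analyse the inner expectation $\E_{X_1'',X_2'' \sim \piTransitionFn_{\xPrime}}[\,|r^\pi_{X_1''} - r^\pi_{X_2''}|\,]$ for this fixed $\xPrime$. Writing $W_1 = r^\pi_{X_1''}$ and $W_2 = r^\pi_{X_2''}$, these are two i.i.d. real-valued random variables whose common law is the pushforward of $\piTransitionFn_{\xPrime}$ under $z \mapsto r^\pi_z$; by construction their shared variance is exactly $\mathrm{Var}_\rewardFn(\piTransitionFn_{\xPrime})$. The key step is then the elementary inequality for i.i.d. variables: by Jensen's inequality (concavity of $\sqrt{\cdot}$),
\[\E[|W_1 - W_2|] \le \sqrt{\E[(W_1-W_2)^2]} = \sqrt{2\,\mathrm{Var}_\rewardFn(\piTransitionFn_{\xPrime})} \le \sqrt2\,\sigma,\]
where the middle identity uses independence to write $\E[(W_1-W_2)^2] = 2\,\mathrm{Var}(W_1)$ and the final inequality uses the hypothesis $\mathrm{Var}_\rewardFn(\piTransitionFn_{\xPrime}) \le \sigma^2$. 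Since this bound holds for every value of $\xPrime$, taking the outer expectation over $\xPrime \sim (\piStateOneTransitionFn)^n$ preserves it, yielding $\Delta_n^\pi(x) \le \sqrt2\,\sigma$ for all $n$ and $x$.

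Finally, I would invoke \autoref{prop:MMDUpperBound} and bound the additive term by substituting the uniform estimate:
\[\frac12\sum_{n\ge0}\gamma^n(\Delta_n^\pi(x) + \Delta_n^\pi(y)) \le \frac12\sum_{n\ge0}\gamma^n\cdot 2\sqrt2\,\sigma = \sqrt2\,\sigma\sum_{n\ge0}\gamma^n = \frac{\sqrt2\,\sigma}{1-\gamma},\]
which gives exactly the claimed inequality. I do not expect a genuine obstacle here: the entire argument is a substitution into the previously proven additive bound, and the only nontrivial ingredient is the standard i.i.d. mean-absolute-difference versus variance inequality above. The one point requiring slight care is ensuring the variance hypothesis, stated for the one-step next-state distributions $\piStateOneTransitionFn$, is applied at the correct conditioning state $\xPrime$ arising after $n$ transitions — but since the hypothesis is assumed to hold at \emph{every} state, this is immediate.
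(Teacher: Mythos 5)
Your proposal is correct and takes essentially the same approach as the paper: both establish the uniform bound $\Delta_n^\pi(x)\leq\sqrt2\,\sigma$ by combining the i.i.d.\ identity $\E[(W_1-W_2)^2]=2\,\mathrm{Var}(W_1)$ with Jensen's inequality, then substitute into \cref{prop:MMDUpperBound} and sum the geometric series. The only cosmetic difference is that you condition directly on the intermediate state $\xPrime\sim(\piStateOneTransitionFn)^{n}$ and apply the variance hypothesis pointwise, whereas the paper first reduces $\Delta_n^\pi$ to $\Delta_1^\pi$ via the tower property before bounding $\Delta_1^\pi$ --- the same manoeuvre phrased slightly differently.
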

\begin{proof}
We first note that it suffices to show that for any $x$ we have that $\Delta_1^\pi(x)\leq \sqrt{2}\sigma$, since for any $n>1$ we have $\Delta_n^\pi(x) = \E_{\xPrime\sim (\piStateOneTransitionFn)^{n-1}}[\Delta_1^\pi(\xPrime)]$: 
\begin{align*}
   \E_{X' \sim (\piStateOneTransitionFn)^{n-1}}\left[\Delta_1^\pi(X_1)\right] & = \E_{X_{n-1} \sim (\piStateOneTransitionFn)^{n-1}} \left[ \E_{X_n \sim \mathcal{P}^{\pi}_{X_{n-1}}}\left[\E_{X_{(n+1)a},X_{(n+1)b}\sim \piTransitionFn_{X_n}}\left[ |r^\pi_{X_{(n+1)a}}-r^\pi_{X_{(n+1)b}}| \right]\right] \right] \\
   & = \E_{X_n \sim (\piStateOneTransitionFn)^{n}}\left[ \E_{X_{(n+1)a},X_{(n+1)b}\sim \piTransitionFn_{X_n}}\left[ |r^\pi_{X_{(n+1)a}}-r^\pi_{X_{(n+1)b}}| \right]\right] \\
  & = \Delta_n^\pi(x),
\end{align*}
where for clarity we used the notation $X_m$ to denote a random state after taking $m$ steps in the trajectory.

We can first recall an equivalent formula for variance as $\mathrm{Var}(\mu)=\frac12\E_{X,Y\sim\mu}[|X-Y|^2]$. Using this and Jensen's inequality, we have that for any $\stateOne\in\stateSpace$,
\begin{align*}
    2\sigma^2 &\geq  \E_{\xPrime\sim \piStateOneTransitionFn,\yPrime\sim \piStateTwoTransitionFn}[|r^\pi_{\xPrime}-r^\pi_{\yPrime}|^2]\\
    &\geq \left(\E_{\xPrime\sim \piStateOneTransitionFn,\yPrime\sim \piStateTwoTransitionFn}[|r^\pi_{\xPrime}-r^\pi_{\yPrime}|]\right)^2. 
\end{align*}
Taking the square root of both sides we obtain that $\Delta_1^\pi(x)\leq \sqrt2 \sigma$, which combined with the above completes the proof. 
\end{proof}

\subsection{Distortion error bounds on Euclidean embeddings}\label{sec:euclideanEmbedding}

While we consider various distance spaces $(\stateSpace, d)$ on the {\em ground states} $\stateSpace$ of the MDP, in practice we typically work with a representation $( \phi(x) : x \in \mathcal{X} ) \in (\mathbb{R}^{m})^\mathcal{X}$ of the state space, from which values can subsequently be predicted with neural network function approximation.
This highlights an important issue in moving from the study of behavioural metrics as abstract mathematical objects to tools for shaping neural representations, which has received relatively little attention so far. Namely, does there exist an embedding $\phi : \mathcal{X} \rightarrow \mathbb{R}^{m}$ such that $\| \phi(x) - \phi(y) \| = d(x, y)$ for all $x, y \in \mathcal{X}$? Stated more concisely, we may ask whether the metric space $(\mathcal{X}, d)$ \emph{embeds into} the Euclidean space $\mathbb{R}^{m}$; this is an instance of the core problem of study in the field of metric embedding theory \citep{deza1997geometry,matousek2013lectures}, and there are several central results from this field that can be employed to cast light on the embeddability of behavioural metrics.

As an initial note of caution, \citet{schoenberg1935remarks} gives a precise characterisation of which finite metric spaces can be embedded into Euclidean space, a consequence of which is that many finite metric spaces cannot be embedded into Euclidean spaces of \emph{any} dimension. A potential upshot is that attempting to learn exact embeddings of behavioural metrics may not be possible, even in small-scale settings. However, the kernel perspective taken earlier in the paper allows us to make immediate progress on the question of embeddability in the specific case of the reduced MICo metric.
In particular, since \cref{thm:reducedMicoEquiv} establishes that the reduced MICo metric can be embedded into the Hilbert space $\mathcal{H}_{k^\pi}$, we can deduce the following result.

\begin{corollary}\label{corr:exact-embed}
    The reduced MICo metric $\Pi U^\pi$ can be embedded into the space $\mathbb{R}^{|\mathcal{X}|}$ with squared Euclidean metric.
\end{corollary}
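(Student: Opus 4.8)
The plan is to leverage \cref{thm:reducedMicoEquiv} together with the finiteness of $\stateSpace$, reducing the claim to the elementary fact that every finite-dimensional real Hilbert space sits isometrically inside a Euclidean space of matching dimension. \cref{thm:reducedMicoEquiv} tells us that for all $x,y\in\stateSpace$,
\[
\Pi U^\pi(x,y) = \ksme(x,y) = \|\varphi^\pi(x)-\varphi^\pi(y)\|^2_{\kernelRkhs},
\]
so the reduced MICo metric is already realised as a \emph{squared} Hilbert-space distance under the feature map $\varphi^\pi(x)=k^\pi(x,\cdot)$. The only remaining task is to transport the embedding $\varphi^\pi$ from the abstract RKHS $\kernelRkhs$ into $\R^{|\stateSpace|}$ without changing squared distances.

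First I would observe that, since $\stateSpace$ is finite, $\kernelRkhs$ is spanned by the finite set $\{k^\pi(x,\cdot):x\in\stateSpace\}$ and is therefore a finite-dimensional real inner-product space of dimension $d \le |\stateSpace|$. Next I would invoke the standard fact that any such space admits a linear isometry $\iota:\kernelRkhs \to \R^{|\stateSpace|}$ onto a $d$-dimensional subspace (choose an orthonormal basis and send it to the first $d$ standard basis vectors). Defining $\phi \coloneqq \iota\circ\varphi^\pi : \stateSpace \to \R^{|\stateSpace|}$ and using that $\iota$ preserves inner products, hence norms, gives
\[
\|\phi(x)-\phi(y)\|^2 = \|\iota(\varphi^\pi(x)-\varphi^\pi(y))\|^2 = \|\varphi^\pi(x)-\varphi^\pi(y)\|^2_{\kernelRkhs} = \Pi U^\pi(x,y),
\]
which is exactly the desired embedding into $\R^{|\stateSpace|}$ with the squared Euclidean metric.

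An equivalent route, which I would mention as it makes the dimension bound concrete, is to work directly with the kernel (Gram) matrix $K=(k^\pi(x_i,x_j))_{i,j}$. Since $k^\pi$ is positive definite, $K$ is symmetric positive semidefinite, so it factors as $K = \Psi\Psi^\top$ for some $\Psi\in\R^{|\stateSpace|\times|\stateSpace|}$ (via e.g.\ the spectral decomposition $K = Q\Lambda Q^\top$ with $\Psi = Q\Lambda^{1/2}$). Taking $\phi(x_i)$ to be the $i$-th row of $\Psi$ yields $\langle \phi(x_i),\phi(x_j)\rangle = K_{ij}$, whence $\|\phi(x_i)-\phi(x_j)\|^2 = K_{ii}+K_{jj}-2K_{ij} = \Pi U^\pi(x_i,x_j)$ by \cref{def:inducedDist} and \cref{thm:reducedMicoEquiv}. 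I do not expect a genuine obstacle here: the substantive content was already discharged in establishing \cref{thm:reducedMicoEquiv}, and the remaining step is the routine (but essential) observation that finiteness of $\stateSpace$ forces $\kernelRkhs$ to be finite-dimensional, so the only care needed is to ensure the target dimension is at most $|\stateSpace|$ and that the chosen map is an isometry for \emph{squared} distances rather than distances themselves.
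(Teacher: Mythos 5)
Your proposal is correct and takes essentially the same route as the paper: invoke \cref{thm:reducedMicoEquiv} to realise $\Pi U^\pi$ as a squared RKHS distance, note that finiteness of $\stateSpace$ makes $\mathcal{H}_{k^\pi}$ finite-dimensional, and compose $\varphi^\pi$ with a linear isometry into $\mathbb{R}^{|\mathcal{X}|}$. Your supplementary Gram-matrix factorisation is a nice concrete alternative (and sidesteps any fuss about the inner product being only semi-definite), but the substance of the argument is the same as the paper's.
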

\begin{proof}
    From \cref{thm:reducedMicoEquiv} and \cref{def:hilbertian-metric}, we have that $\Pi U^\pi(x, y) = \| \varphi^\pi(x) - \varphi^\pi(y) \|_{\mathcal{H}_{k^\pi}}^2$ for all $x, y \in \mathcal{X}$. Since $\mathcal{H}_{k^\pi}$ is a Hilbert space of dimension at most $\mathcal{X}$, there is an isometry $\psi : \mathcal{H}_{k^\pi} \rightarrow \mathbb{R}^{k}$ for some $k \leq |\mathcal{X}|$. The composition $\psi \circ \varphi^\pi$ therefore embeds $\Pi U^\pi$ exactly in $\mathbb{R}^k$ under the squared Euclidean metric.
\end{proof}

In many practical settings, the dimensionality of the space $\mathbb{R}^{m}$ into which the representation function $\varphi$ maps is generally taken to be much smaller than $|\mathcal{X}|$ itself for a variety of reasons, including computational tractability and generalisation properties of the function approximator. It is therefore pertinent to ask whether the guarantee established in \cref{corr:exact-embed} can be improved to guarantee embeddabilty in a lower-dimensional Euclidean space. While exact embeddability in lower-dimensional spaces is not always possible, the Johnson--Lindenstrauss lemma \citep{johnsonLindenstrauss84} can be used to establish the following result, which shows that lower-dimensional embeddings of $\Pi U^\pi$ are possible, as long as we are prepared to accept a certain level of distortion of the original metric.

\begin{theorem}\label{thm:euclideanApprox}
Let $\pi$ be a policy, and $\sim_\pi$ be the equivalence relation on $\stateSpace$ defined by $x\sim_\pi y\iff \ksme(x,y)=0$. For any given $\varepsilon \in (0,1)$, if ${m} \geq 8 \log(|\mathcal{X}/{\sim_\pi}|)/\varepsilon^2${\color{black}, where $\mathcal{X}/{\sim_\pi}$ denotes the quotient set with respect to the equivalence relation $\sim_\pi$,} then there exists an embedding $\phi:\stateSpace\to \mathbb{R}^{m}$ such that for all $\stateOne, \stateTwo\in \stateSpace$, 
\[(1-\varepsilon)\,\ksme(x,y)\leq \|\phi(x)-\phi(y)\|_2^2\leq (1+\varepsilon)\,\ksme(x,y),\]
or equivalently,
\[(1-\varepsilon)\,\Pi U^\pi(x,y) \leq \|\phi(x)-\phi(y)\|_2^2\leq (1+\varepsilon)\,\Pi U^\pi(x,y).\]
\end{theorem}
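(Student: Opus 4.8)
The plan is to combine the exact Hilbert-space embedding already guaranteed by \cref{thm:reducedMicoEquiv} with the Johnson--Lindenstrauss lemma, exploiting the crucial fact that $\ksme$ is itself a \emph{squared} distance. First I would recall from \cref{thm:reducedMicoEquiv} and \cref{def:hilbertian-metric} that $\ksme(x,y) = \|\varphi^\pi(x) - \varphi^\pi(y)\|^2_{\kernelRkhs}$, and that $\kernelRkhs$ is finite-dimensional of dimension at most $|\stateSpace|$. Fixing an isometry of $\kernelRkhs$ onto some $\R^k$, I would regard each $\varphi^\pi(x)$ as a vector $v_x \in \R^k$ with $\|v_x - v_y\|_2^2 = \ksme(x,y)$. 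The key observation is that the two inequalities we must establish are exactly the multiplicative distortion guarantee of Johnson--Lindenstrauss stated for \emph{squared} Euclidean distances, so no reformulation of the target quantity is needed.

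Next I would use the equivalence relation $\sim_\pi$ to control the number of points fed into the lemma, since this is what produces the dimension bound in terms of $|\stateSpace/{\sim_\pi}|$ rather than the naive $|\stateSpace|$. Because $\ksme(x,y) = \|v_x - v_y\|_2^2$ and the norm on a Hilbert space vanishes only at the zero vector, we have $x \sim_\pi y$ precisely when $v_x = v_y$; hence the set $\{v_x : x\in\stateSpace\}$ contains exactly $N := |\stateSpace/{\sim_\pi}|$ distinct points. Choosing one representative per equivalence class, I would apply the Johnson--Lindenstrauss lemma to these $N$ points: for $m \geq 8\log N / \varepsilon^2$ there is a (linear) map $f : \R^k \to \R^m$ with $(1-\varepsilon)\|v_x - v_y\|_2^2 \leq \|f(v_x) - f(v_y)\|_2^2 \leq (1+\varepsilon)\|v_x - v_y\|_2^2$ for all representatives $x,y$. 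Setting $\phi := f \circ \varphi^\pi$ (equivalently $\phi(x) = f(v_x)$) and substituting $\ksme(x,y) = \|v_x - v_y\|_2^2$ yields the claimed two-sided bound on the representatives.

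Finally I would extend the bound from representatives to all of $\stateSpace$: any $x$ satisfies $v_x = v_{x'}$ for its class representative $x'$, so $\phi(x) = \phi(x')$, and both $\ksme$ and $\|\phi(\cdot)-\phi(\cdot)\|_2^2$ are invariant under replacing a state by its representative; the inequalities therefore hold verbatim for every pair in $\stateSpace$. The equivalent statement for $\Pi U^\pi$ is then immediate from \cref{thm:reducedMicoEquiv}. The main obstacle is conceptual rather than computational: one must recognise that $\ksme$ is a squared Hilbertian distance, which is exactly the object Johnson--Lindenstrauss preserves multiplicatively, and that collapsing $\sim_\pi$-equivalent states to a single point is precisely what replaces $|\stateSpace|$ by $|\stateSpace/{\sim_\pi}|$ in the dimension threshold. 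Beyond this, care is needed only to confirm that the constant $8$ matches the stated version of the lemma and that its guarantee over the $N$ distinct points transfers to all states via this collapsing argument.
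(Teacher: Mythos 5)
Your proposal is correct and follows essentially the same route as the paper: embed the finite-dimensional RKHS image isometrically into Euclidean space, collapse $\sim_\pi$-equivalent states (which is what replaces $|\stateSpace|$ by $|\stateSpace/{\sim_\pi}|$ in the dimension bound), apply the Johnson--Lindenstrauss lemma in its squared-distance form to the $N$ distinct points, and take $\phi$ to be the composition of the maps. The only cosmetic difference is that the paper spells out the isometry $\mathcal{I}$ via an orthonormal basis and frames the collapsing step as a quotient of the (semi-)inner product space, whereas you absorb both into the observation that $\ksme(x,y)=0$ forces $\varphi^\pi(x)=\varphi^\pi(y)$; the substance of the argument is identical.
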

\begin{proof}
We recall that the RKHS $\mathcal{H}_{k^\pi}$ is defined by the formula
\[\mathcal{H}_{k^\pi} = \text{span}\big\{k(x,\cdot)\,:\,x\in \stateSpace\big\},\]
where we do not need to take the completion, as there are only finitely many $x\in\stateSpace$, hence $\mathcal{H}_{k^\pi}$ is finite dimensional inner product space, and therefore complete. $\mathcal{H}_{k^\pi}$ is in general a semi inner product space, and to resolve this we take the quotient $\mathcal{H}_{k^\pi}/{\sim_\pi}$. The set of equivalence classes of $\sim_\pi$ is equivalent to the kernel of the seminorm $\|\cdot\|_{\mathcal{H}_{k^\pi}}$, and in particular is a finite-dimensional vector space, so $\mathcal{H}_{k^\pi}/{\sim_\pi}$ is a proper inner product space. Let ${ l}=\dim(\mathcal{H}_{k^\pi}/{\sim_\pi})$, and let $f_1,f_2,\hdots, f_{l}$ be an orthonormal basis for $\mathcal{H}_{k^\pi}/{\sim_\pi}$. We will begin by showing that $(\mathcal{H}_{k^\pi}/{\sim_\pi}, \|\cdot\|_{\mathcal{H}_{k^\pi}})$ can be isometrically embedded into $(\mathbb{R}^{l},\|\cdot\|_2)$. To see this, let $\mathcal{I}:\mathcal{H}_{k^\pi}/{\sim_\pi}\to \mathbb{R}^{l}$ be defined by 
\[\mathcal{I}(f_j)=e_j \; \forall j\in [{l}],\; \; \mathcal{I}\left(\sum_{j\in [{l}]}a_jf_j\right)=\sum_{j\in[{l}]}a_j \,\mathcal{I}(f_j)\]
We can now see that $\mathcal{I}$ is a Hilbert space isomorphism: for all $i,j\in [{l}]$,
\[\langle f_i, f_j  \rangle_{\mathcal{H}_{k^\pi}}=\langle e_i, e_j\rangle_{\mathbb{R}^{l}}=\left\langle \,\mathcal{I}(f_i), \mathcal{I}(f_j)\,\right\rangle_{\mathbb{R}^{l}}.\]

Letting $n=|\stateSpace/{\sim_\pi}|$, we can enumerate $\stateSpace/{\sim_\pi}$ as $x_1,\hdots,x_n$, and from the above isometry we know that $\{x_1,\hdots, x_n\}\subset \stateSpace$ can be embedded as $\left\{\mathcal{I}(\varphi^\pi(x_1)),\hdots, \mathcal{I}(\varphi^\pi(x_n))\right\}\subset \mathbb{R}^{l}$ with no distortion. Let us use the notation $y_k=\mathcal{I}(\varphi^\pi(x_k))$ as the embedding of $x_k$ into $\mathbb{R}^{l}$ for brevity. 

We can now apply the Johnson-Lindenstrauss lemma: for any $\varepsilon\in(0,1)$, ${m}\geq \frac{8}{\varepsilon^2}\log(n)$, there exists a linear map $f:\mathbb{R}^{l}\to \mathbb{R}^{m}$ such that for any $i,j\in [n]$,
\[(1-\varepsilon)\,\|y_i-y_j\|_{\mathbb{R}^{l}}^2\leq \|f(y_i)-f(y_j)\|_{\mathbb{R}^{m}}^2\leq (1+\varepsilon)\,\|y_i-y_j\|_{\mathbb{R}^{l}}^2. \] 
The desired statement now follows from rearranging the formula for $n$ and using the isometry
\[\|y_i-y_j\|_{\mathbb{R}^{l}}^2 = \|\mathcal{I}(\varphi^\pi(x_i))-\mathcal{I}(\varphi^\pi(x_j))\|_{\mathbb{R}^{l}}^2=\|\varphi^\pi(x_i)-\varphi^\pi(x_j)\|^2_{\mathcal{H}_{k^\pi}}=\ksme(x_i,x_j)=\Pi U^\pi(x_i,x_j),\] and taking the map $\phi: \mathcal{X}/{\sim_\pi}\to \mathbb{R}^{m}$ to be the composition $f\circ \mathcal{I}\circ \varphi^\pi$.
\end{proof}

\begin{remark}
The spaces and maps used in the previous proof can be summarized in the following commutative diagram:
\[ \begin{tikzcd}
\mathcal{X}/{\sim_\pi} \arrow{r}{\phi} \arrow[swap]{d}{\varphi^\pi} & \mathbb{R}^{m}  \\%
\mathcal{H}_{k_\pi} \arrow{r}{\mathcal{I}}& \mathbb{R}^{l}\arrow{u}{f}
\end{tikzcd}
\]
\end{remark}

\begin{remark}
    Observe that {\color{black} we can improve over the result in \cref{corr:exact-embed} if we accept an error $\epsilon$, so long as $\log(|\mathcal{X}|) < \varepsilon^2 |\mathcal{X}| / 8$}, in the sense that the result concerns a lower-dimensional embedding.
\end{remark}

\subsection{Learnable parameterizations}
\label{sec:alternativeParameterizations}

Despite the theoretical appeal of the discussed distances, one of the motivating forces for our work is their applicability to online reinforcement learning {\color{black} with neural networks}. Specifically, we are interested in using the derived metrics as a means to learn embeddings $\phi$ that can speed up learning value functions for control. Towards this end, \citet{castro21mico} proposed approximating the diffuse metric $U^{\pi}$ via {\color{black}$U_{\omega}$, parameterized as follows:}
\[ U_{\omega}(x, y)  := \frac{\| \phi_{\omega}(x) \|^2_2 + \| \phi_{\omega}(y) \|^2_2 }{2} + \beta \theta(\phi_{\omega}(x), \phi_{\omega}(y)) \]
where $\phi$ is the learned representation parameterized by $\omega$, $\theta(\phi(x), \phi(y))$ is the angular distance between vectors $\phi(x)$ and $\phi(y)$, and $\beta\in(0,\infty)$ is a hyperparameter that weighs the importance of the angular distance.

 A peculiar aspect of their method is that this parameterization is approximating the {\em diffuse metric}, yet when using the representations $\phi_\omega$ for control, they are implicitly making use only of the weighted angular distance, which can be viewed as the {\em reduced} MICo distance:

\begin{align*}
  \beta \theta(\phi_{\omega}(x) ,\phi_{\omega}(y))  & \approx \Pi U^{\pi}(x, y).
\end{align*}

Although producing strong empirical performance, this results in a somewhat awkward dynamic: the metric space on which the MICo loss is being optimized is different than the metric space where the representations used for control exist. Given the equivalence demonstrated in \cref{sec:equality}, we can alleviate this by learning a kernel between representations in the \emph{same inner product space} used for control.

We parametrise the kernel $k^\pi(x,y)$ using the natural inner product on $\R^{m}$, written as 

\begin{align}
k_\omega(x,y)=\langle \phi_\omega(x),\phi_\omega(y)\rangle
\label{eqn:dotProduct}
\end{align}

From this parametrisation, we can convert the kernel update operator $T^\pi_k$ into a learning target {\color{black}in a similar manner as was done by \citet{castro21mico}. Specifically, given two transitions $(x, r_x, x')$ and $(y, r_y, y')$ sampled from the replay buffer, we can define} $T^k_{\bar\omega}(r_x,x',r_y,y')=1-\frac12|r_x-r_y|+\gamma k_{\bar\omega}(x',y')$. Here, $\bar{\omega}$ is a separate copy of the parameters $\omega$ that are updated less frequently (as suggested by \citet{mnih15human} and used by \citet{castro21mico}). Our kernel-based loss is then:
{\small
\begin{align}
    \mathcal{L}_{\text{KSMe}}(\omega) = & \mathbb{E}_{\langle x, r_x, x'\rangle ,  \langle y, r_y, y'\rangle}\left[ \left(T^k_{\bar{\omega}}(r_x, x', r_y, y') - k_{\omega}(x, y)\right)^2 \right]{.}
    \label{eqn:ksmeLoss}
\end{align}
}{\color{black}Note that this squared semi-gradient loss is taken directly from what was used by MICo \citep{castro21mico}, which was based on the loss originally used by DQN \citep{mnih15human}. Further details on theoretical properties of this loss in the general context of reinforcement learning can be found in \citet{BertsekasTsitsiklis96}, and discussion in the specific context of metric learning is given by \citet{castro21mico}.}

With this parametrisation, the KSMe distance between two points $\phi_\omega(x)$ and $\phi_\omega(y)$ is
\begin{align*}
k_\omega(x,x)+k_\omega(y,y)-2k_\omega(x,y)&=\|\phi_\omega(x)\|_2^2+\|\phi_\omega(y)\|_2^2-2\langle \phi_\omega(x),\phi_\omega(y)\rangle\\
&= \|\phi_\omega(x)-\phi_\omega(y)\|_2^2.
\end{align*}
That is, the parametrised KSMe distance is exactly the Euclidean distance between the embeddings. This parametrisation is supported by the theory of \cref{sec:euclideanEmbedding}, as we are approximating the Hilbert space $\mathcal{H}_{k^\pi}$ with the Hilbert space  $(\R^{m}, \|\cdot\|_2)$, which can be summarized as:
\begin{align*}
\|\phi(x)\|_2=k_\omega(x,x)&\approx k^\pi(x,x)=\|\varphi^\pi(x)\|_{\mathcal{H}_{k^\pi}},\\
\|\phi(x)-\phi(y)\|_2^2 = d^\pi_{ks,\omega}(x,y) &\approx \ksme(x,y) = \|\varphi^\pi(x)-\varphi^\pi(y)\|^2_{\mathcal{H}_k}.
\end{align*}

\section{Empirical evaluations}\label{sec:empirical}

Having demonstrated the theoretical equivalence between $\ksme$ and $\Pi U^{\pi}$, in this section we perform an empirical investigation to validate their equivalence in practice, and return to our original motivation of using state metrics for representation learning. We do so in small domains where we can compute these distance exactly (\cref{sec:empiricalInsights}), and in more complex domains where we use neural networks to approximate the distances (\cref{sec:empiricalLargeScale}). All the code for these evaluations are available at \href{https://github.com/google-research/google-research/tree/master/ksme}{https://github.com/google-research/google-research/tree/master/ksme}.

\subsection{Empirical insights into KSMe properties}
\label{sec:empiricalInsights}
To investigate the bounds discussed in \cref{sec:ValueFunctionUpper}, we empirically investigate the bounds through the use of Garnet MDPs, as done in \citet{castro21mico}, where an empirical analysis was conducted to investigate to what degree $\ksme$ (then referred to as $\Pi U^\pi$) violated the value function upper bound (e.g. how often $\ksme(x,y)-|V^\pi(x)-V^\pi(y)|$ was negative). Given \cref{prop:boundVariance}, we can now conduct a more precise study: knowing that the reward variance $\mathrm{Var}_\rewardFn(\transitionFn^\pi)$ controls the amount by which $\ksme(x,y)$ can be greater than $|V^\pi(x)-V^\pi(y)|$, we plot the bound as $\mathrm{Var}_\rewardFn(\transitionFn^\pi)$ changes. We plot both the minimum and average signed difference $d(x,y)-|V^\pi(x)-V^\pi(y)|$ for $d=\{\ksme, d^\pi_\sim, U^\pi\}$. As can be seen in \cref{fig:randomMdps}, although $\ksme$ can violate the upper bound (left plot), 
{
on average across the Garnet MDPs
}
it is more informative of the true value difference than $U^{\pi}$ and $d^{\pi}_{\sim}$ are; this result is consistent with the findings of \citet{castro21mico}.

\begin{figure}[!h]
  \centering
    \includegraphics[width=0.99\linewidth]{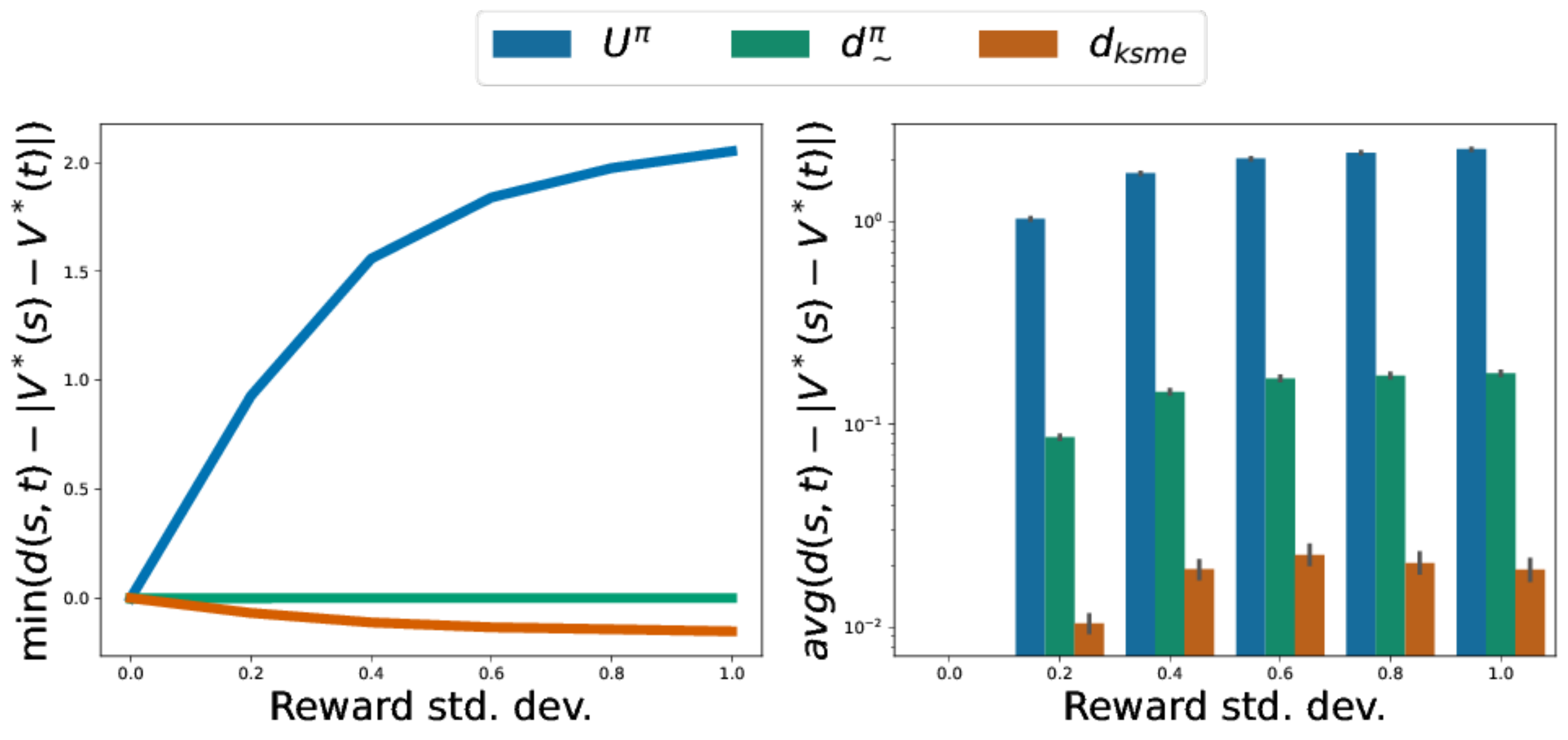}
 \caption{The minimum (left) and average (right) difference between the absolute value function difference $|V^\pi(x)-V^\pi(y)|$ and {\color{black} $U^{\pi}$ \citep{castro21mico}, $d^{\pi}_{\sim}$ \citep{Castro20}, and $\ksme$}, across ~17K random MDPs of varying state and action sizes, as the reward standard deviation $\mathrm{Var}_\rewardFn(\transitionFn^\pi)$ is increased from 0 to 1.}
  \label{fig:randomMdps}
\end{figure}

{
Before proceeding to evaluating $\ksme$ in a more complex domain, it is worth taking stock of the implications of \autoref{fig:randomMdps}. When the upper bound property is violated (see left plot), it could result in dissimilar states being mapped closely together; this could lead to {\em over-generalization}, as discussed in the introduction above. However, the right plot suggests that in practice this does not happen very often, and is in fact more informative than $d^{\pi}_{\sim}$ (for which the upper bound property is not violated). Further, the amount of over-generalization one could suffer is proportional to the reward variance of the environment. When taken together, these results suggest $\ksme$ {\em can} be an effective way of shaping RL representations during learning.
}

\subsection{Large-scale evaluation of KSMe}
\label{sec:empiricalLargeScale}
We adapted the code provided by \citet{castro21mico} to approximate KSMe instead of MICo, incorporating our new loss in the same way. Specifically, we use RL agents that estimate $Q$ values by composing a learned $m$-dimensional representation $\phi_\omega:\mathcal{X}\rightarrow\mathbb{R}^m$ (parameterized by $\omega$) with a learned value approximator $\psi_\theta:\mathbb{R}^m\rightarrow\mathbb{R}$ (parameterized by $\theta$): $Q(x,\cdot)\approx\psi_\theta(\phi_\omega(x))$.

Given two states $x$ and $y$, the similarity between their representations is expressed via the inner product of their embeddings (as detailed in \cref{eqn:dotProduct}):

\[k_\omega(x,y)=\langle \phi_\omega(x),\phi_\omega(y)\rangle, \]

We then replace the MICo loss of \citet{castro21mico} ($\mathcal{L}_{\text{MICo}}$) with the $KSMe$ loss:
{\small
\begin{align*}
    \mathcal{L}_{\text{KSMe}}(\omega) = & \mathbb{E}_{\langle x, r_x, x'\rangle ,  \langle y, r_y, y'\rangle}\left[ \left(T^k_{\bar{\omega}}(r_x, x', r_y, y') - k_{\omega}(x, y)\right)^2 \right]{.}
    \label{eqn:ksmeLoss}
\end{align*}
}
As done by \citet{castro21mico}, the value approximator is trained with the standard temporal difference loss of each agent. In our evaluation, we compared the use of $\mathcal{L}_{\text{KSMe}}$ with $\mathcal{L}_{\text{MICo}}$ on the JAX agents provided by the Dopamine library \citep{castro18dopamine}: DQN~\citep{mnih15human}, Rainbow~\citep{hessel18rainbow}, QR-DQN~\citep{dabney18distributional}, IQN~\citep{dabney18iqn}, and M-IQN~\citep{vieillard20munchausen}\footnote{Note that these are the same agents on which MICo was originally evaluated.}.
We keep all hyperparameters unchanged from those used by \citet{castro21mico}.

Due to the computational expense of running these experiments, we selected four representative Atari 2600 games from the ALE suite \citep{bellemare13arcade} which, we felt, covered the varying dynamics between the original agents and those with the MICo loss. We ran 5 independent seeds for each configuration. In \autoref{fig:iqmPlot} we plot the Interquantile mean (IQM) values which aggregates human-normalized performance across all runs; this metric was introduced by \citet{agarwal2021deep} as a more robust statistic to compare algorithmic performance.

As can be seen, the performance of KSMe is similar to that of MICo, consistent with \autoref{thm:reducedMicoEquiv}, which proved their equivalence. Since no hyperparameter optimization was performed for KSMe, it is quite possible the KSMe parameterization can provide empirical gains over MICo. We leave this exploration for future work. In \cref{fig:allGames} we provide the learning curves for each separate agent/game combination.

\begin{figure}[!t]
\centering
\includegraphics[width=0.95\linewidth]{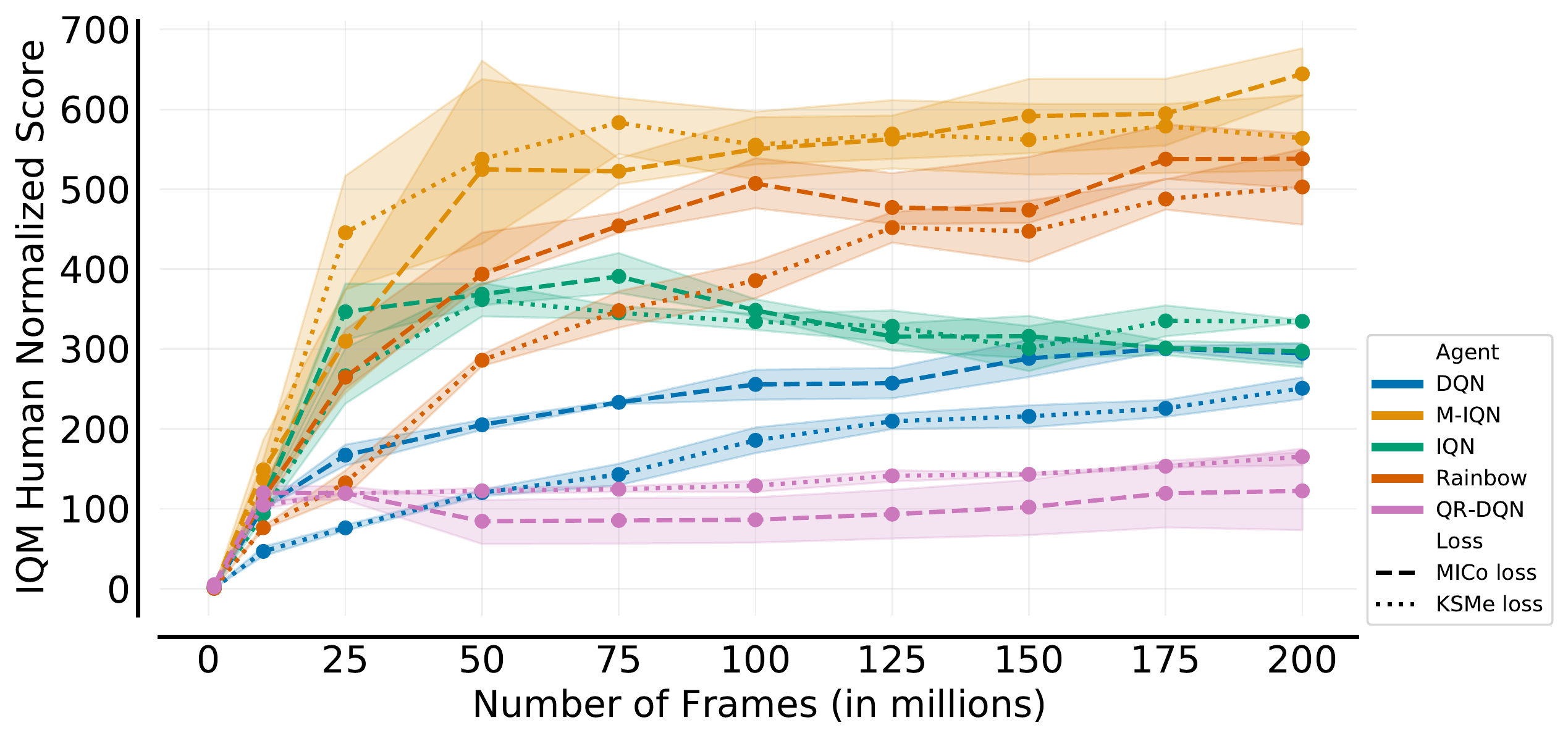}
\caption{Interquantile mean \citep{agarwal2021deep} comparison of adding KSMe versus MICo on all the Dopamine \citep{castro18dopamine} value-based agents, aggregated over 5 independent runs on four representative games.}
\label{fig:iqmPlot}
\end{figure}

\section{Discussion}

The empirical results in \cref{sec:empirical} provide experimental validation into the theoretical results of \cref{sec:ksme}. \cref{fig:randomMdps} follows the trend suggested by \cref{prop:boundVariance}, as the worst case difference between the value function difference and $\ksme$ increased approximately linearly with respect to the reward standard deviation. The average gap provides an interesting perspective however, as it demonstrates that on average, as the variance increases, the size of the gap is much larger for $U^\pi$ and $d^\pi_\sim$ than for $\ksme$. Having features which reflect the underlying value function are critical for value-based reinforcement learning, and we hypothesize that this contributes to the empirical success achieved by using KSMe (equivalently, the reduced MICo). As demonstrated in \cref{fig:iqmPlot}, both KSMe and MICo achieve similar empirical performance, which is expected as the underlying distance being learnt is the same. The differences in performance is then a result of learning distances as opposed to kernels in the neural network.

These results also provide further explanation why KSMe (equivalently the reduced MICo) appears to achieve stronger success than bisimulation in deep reinforcement learning settings (the two distances were compared in \citet{castro21mico}, where bisimulation was learnt using DBC \citep{zhang2021invariant}). Two possible reasons coming from this work are (i) tighter relationship to the underlying value function and (ii) superior embeddability in neural networks. The hypothesis (i) is supported empirically by \cref{fig:randomMdps}, and theoretically by the fact that in general $\ksme(x,y)\leq d^\pi_\sim(x,y)$. We believe that this tighter bound leads to improved performance because distances which correctly approximate the value function difference between states allow for straightforward value-based learning. Secondly, the fact that KSMe comes from a Hilbert state structure allows efficient embeddability into low-dimension Euclidean space, a very important concept in deep settings, as this is exactly what neural networks aim to accomplish. \cref{thm:euclideanApprox} provides a result guaranteeing this is possible. We note that it is not clear whether a similar result for bisimulation metrics is possible, as the Kantorovich metric cannot be approximated in Euclidean spaces with low distortion in general \citep{peyre2019computational}.

\section{Conclusion}

In this work we have taken a kernel perspective on learning representations in reinforcement learning, and introduced a state similarity kernel on Markov decision process state spaces. This kernel naturally induces a distance, which we proved was equal to the reduced MICo distance \citep{castro21mico}. This allowed us to perform a theoretical analysis of the reduced MICo distance which was previously lacking, and answer important questions such as its metric properties and connection to value functions. We then analyzed a previously-unconsidered question: how well the distance itself can be approximated in Euclidean spaces, and prove a bound demonstrating embeddability. We then adapted the loss introduced in \citet{castro21mico} to learn the kernel. While the distance learnt is theoretically equivalent to theirs, our parametrization is theoretically grounded as the neural network embeddings are an approximation to kernel Hilbert space embeddings. To the best of our knowledge, this is the first work which studied \emph{how well} a given state similarity metric can be approximated through a neural network, and provided bounds on the incurred error. These results provide theoretical grounding for the reduced MICo distance, and our kernel perspective analysis may be used in future work to analyze related distances.

It is worth noting that by focusing on the upper-bound properties of behavioural metrics we are providing assymetrical guarantees. Namely, while they can guarantee similar states will have similar values, they do not guarantee that {\em dissimilar} states will have dissimilar values. It would thus be valuable to investigate further why these metrics can provide such strong performance despite the asymmetrical guarantees.

\newpage

\section*{Acknowledgements} We thank Gheorghe Comanici for detailed feedback on an earlier version of the paper, and the anonymous TMLR reviewers and action editor Gergely Neu for helping us strengthen our submission.

We would also like to thank the Python community \citep{van1995python, 4160250}
for developing tools that enabled this work, including NumPy \citep{harris2020array}, Matplotlib \citep{hunter2007matplotlib} and JAX \citep{bradbury2018jax}.

\subsubsection*{Author Contributions}
\label{sec:authorContributions}

Authors listed in alphabetical order, and individual contributions listed below.

{\bf Pablo Samuel Castro} helped give direction to the work, revised the theoretical results, wrote the code, ran the deep RL experiments, and contributed to writing the paper.

{\bf Tyler Kastner} proved most of the theoretical results, ran the toy experiments, and contributed to writing the paper.

{\bf Prakash Panangaden} helped give direction to the work, revised the theoretical results, supervised Tyler, and contributed to writing the paper.

{\bf Mark Rowland} helped give direction to the work, revised the theoretical results, reviewed the code, and contributed to writing the paper.

\vskip 0.2in

\newpage

\bibliography{metrics}

\begin{thebibliography}{79}
\providecommand{\natexlab}[1]{#1}
\providecommand{\url}[1]{\texttt{#1}}
\expandafter\ifx\csname urlstyle\endcsname\relax
  \providecommand{\doi}[1]{doi: #1}\else
  \providecommand{\doi}{doi: \begingroup \urlstyle{rm}\Url}\fi

\bibitem[Agarwal et~al.(2021{\natexlab{a}})Agarwal, Machado, Castro, and
  Bellemare]{agarwal2021contrastive}
Rishabh Agarwal, Marlos~C. Machado, Pablo~Samuel Castro, and Marc~G. Bellemare.
\newblock Contrastive behavioral similarity embeddings for generalization in
  reinforcement learning.
\newblock In \emph{In Proceedings of the Ninth International Conference on
  Learning Representations}, 2021{\natexlab{a}}.

\bibitem[Agarwal et~al.(2021{\natexlab{b}})Agarwal, Schwarzer, Castro,
  Courville, and Bellemare]{agarwal2021deep}
Rishabh Agarwal, Max Schwarzer, Pablo~Samuel Castro, Aaron Courville, and
  Marc~G. Bellemare.
\newblock Deep reinforcement learning at the edge of the statistical precipice.
\newblock In \emph{Advances in Neural Information Processing Systems},
  2021{\natexlab{b}}.

\bibitem[Arendt et~al.(2001)Arendt, Batty, Hieber, and Neubrander]{Arendt01}
Wolfgang Arendt, Charles Batty, Matthias Hieber, and Frank Neubrander.
\newblock \emph{Vector-valued {L}aplace transforms and {C}auchy problems}.
\newblock Springer-Verlag, Jan 2001.

\bibitem[Aronszajn(1950)]{aronszajn50reproducing}
Nachman Aronszajn.
\newblock Theory of reproducing kernels.
\newblock \emph{Transactions of the American mathematical society}, 68\penalty0
  (3):\penalty0 337--404, 1950.

\bibitem[Baird(1995)]{Baird1995ResidualAR}
Leemon~C. Baird.
\newblock Residual algorithms: Reinforcement learning with function
  approximation.
\newblock In \emph{Proceedings of the International Conference on Machine
  Learning}, 1995.

\bibitem[Barreto et~al.(2016)Barreto, Precup, and Pineau]{barreto2016practical}
Andr{\'e}~MS Barreto, Doina Precup, and Joelle Pineau.
\newblock Practical kernel-based reinforcement learning.
\newblock \emph{The Journal of Machine Learning Research}, 17\penalty0
  (1):\penalty0 2372--2441, 2016.

\bibitem[Bellemare et~al.(2013)Bellemare, Naddaf, Veness, and
  Bowling]{bellemare13arcade}
Marc~G. Bellemare, Yavar Naddaf, Joel Veness, and Michael Bowling.
\newblock The {A}rcade {L}earning {E}nvironment: {A}n evaluation platform for
  general agents.
\newblock \emph{Journal of Artificial Intelligence Research}, 47:\penalty0
  253--279, June 2013.

\bibitem[Bellemare et~al.(2019)Bellemare, Dabney, Dadashi, Taiga, Castro, Roux,
  Schuurmans, Lattimore, and Lyle]{bellemare2019geometric}
Marc~G. Bellemare, Will Dabney, Robert Dadashi, Adrien~Ali Taiga, Pablo~Samuel
  Castro, Nicolas~Le Roux, Dale Schuurmans, Tor Lattimore, and Clare Lyle.
\newblock A geometric perspective on optimal representations for reinforcement
  learning.
\newblock In \emph{Advances in Neural Information Processing Systems}, 2019.

\bibitem[Bertsekas \& Tsitsiklis(1996)Bertsekas and
  Tsitsiklis]{BertsekasTsitsiklis96}
Dimitri~P. Bertsekas and John~N. Tsitsiklis.
\newblock \emph{Neuro-dynamic programming.}
\newblock Athena Scientific, 1996.

\bibitem[Blute et~al.(1997)Blute, Desharnais, Edalat, and Panangaden]{Blute97}
Richard Blute, Jos{\'e}e Desharnais, Abbas Edalat, and Prakash Panangaden.
\newblock Bisimulation for labelled {Markov} processes.
\newblock In \emph{Proceedings of IEEE Symposium On Logic In Computer Science},
  1997.

\bibitem[Bradbury et~al.(2018)Bradbury, Frostig, Hawkins, Johnson, Leary,
  Maclaurin, Necula, Paszke, VanderPlas, Wanderman-Milne,
  et~al.]{bradbury2018jax}
James Bradbury, Roy Frostig, Peter Hawkins, Matthew~James Johnson, Chris Leary,
  Dougal Maclaurin, George Necula, Adam Paszke, Jake VanderPlas, Skye
  Wanderman-Milne, et~al.
\newblock Jax: composable transformations of python+ numpy programs.
\newblock 2018.

\bibitem[Castro(2020)]{Castro20}
Pablo~Samuel Castro.
\newblock Scalable methods for computing state similarity in deterministic
  {M}arkov decision processes.
\newblock In \emph{Proceedings of the {AAAI} Conference on Artificial
  Intelligence}, 2020.

\bibitem[Castro et~al.(2018)Castro, Moitra, Gelada, Kumar, and
  Bellemare]{castro18dopamine}
Pablo~Samuel Castro, Subhodeep Moitra, Carles Gelada, Saurabh Kumar, and
  Marc~G. Bellemare.
\newblock Dopamine: {A} {R}esearch {F}ramework for {D}eep {R}einforcement
  {L}earning.
\newblock \emph{arXiv}, 2018.

\bibitem[Castro et~al.(2021)Castro, Kastner, Panangaden, and
  Rowland]{castro21mico}
Pablo~Samuel Castro, Tyler Kastner, Prakash Panangaden, and Mark Rowland.
\newblock {MICo}: Learning improved representations via sampling-based state
  similarity for {M}arkov decision processes.
\newblock In \emph{Advances in Neural Information Processing Systems}, 2021.

\bibitem[Comanici et~al.(2012)Comanici, Panangaden, and
  Precup]{comanici2012fly}
Gheorghe Comanici, Prakash Panangaden, and Doina Precup.
\newblock On-the-fly algorithms for bisimulation metrics.
\newblock In \emph{Proceedings of the International Conference on Quantitative
  Evaluation of Systems}, 2012.

\bibitem[Dabney et~al.(2018{\natexlab{a}})Dabney, Ostrovski, Silver, and
  Munos]{dabney18iqn}
Will Dabney, Georg Ostrovski, David Silver, and Remi Munos.
\newblock Implicit quantile networks for distributional reinforcement learning.
\newblock In \emph{Proceedings of the 35th International Conference on Machine
  Learning}, volume~80 of \emph{Proceedings of Machine Learning Research}, pp.\
   1096--1105. PMLR, 2018{\natexlab{a}}.

\bibitem[Dabney et~al.(2018{\natexlab{b}})Dabney, Rowland, Bellemare, and
  Munos]{dabney18distributional}
Will Dabney, Mark Rowland, Marc~G. Bellemare, and R{\'e}mi Munos.
\newblock Distributional reinforcement learning with quantile regression.
\newblock In \emph{Proceedings of the AAAI Conference on Artificial
  Intelligence}, 2018{\natexlab{b}}.

\bibitem[Dayan(1993)]{dayan93}
Peter Dayan.
\newblock Improving generalization for temporal difference learning: The
  successor representation.
\newblock \emph{Neural Computation}, 5\penalty0 (4):\penalty0 613--624, 1993.

\bibitem[Desharnais et~al.(2002)Desharnais, Edalat, and
  Panangaden]{Desharnais02}
J.~Desharnais, A.~Edalat, and P.~Panangaden.
\newblock Bisimulation for labeled {Markov} processes.
\newblock \emph{Information and Computation}, 179\penalty0 (2):\penalty0
  163--193, Dec 2002.

\bibitem[Desharnais et~al.(1999)Desharnais, Gupta, Jagadeesan, and
  Panangaden]{Desharnais99b}
Jos{\'e}e Desharnais, Vineet Gupta, Radha Jagadeesan, and Prakash Panangaden.
\newblock Metrics for labeled {Markov} systems.
\newblock In \emph{Proceedings of the International Conference on Concurrency
  Theory}, 1999.

\bibitem[Deza \& Laurent(1997)Deza and Laurent]{deza1997geometry}
Michel~Marie Deza and Monique Laurent.
\newblock \emph{Geometry of cuts and metrics}, volume~2.
\newblock Springer, 1997.

\bibitem[Domingues et~al.(2021)Domingues, M{\'e}nard, Pirotta, Kaufmann, and
  Valko]{domingues2021kernel}
Omar~Darwiche Domingues, Pierre M{\'e}nard, Matteo Pirotta, Emilie Kaufmann,
  and Michal Valko.
\newblock Kernel-based reinforcement learning: A finite-time analysis.
\newblock In \emph{Proceedings of the International Conference on Machine
  Learning}, 2021.

\bibitem[Farahmand et~al.(2016)Farahmand, Ghavamzadeh, Szepesv{\'a}ri, and
  Mannor]{farahmand2016regularized}
A.m. Farahmand, M.~Ghavamzadeh, Cs. Szepesv{\'a}ri, and S.~Mannor.
\newblock Regularized policy iteration with nonparametric function spaces.
\newblock \emph{The Journal of Machine Learning Research}, 17:\penalty0 1--66,
  January 2016.

\bibitem[Ferns et~al.(2004)Ferns, Panangaden, and Precup]{Ferns04}
Norm Ferns, Prakash Panangaden, and Doina Precup.
\newblock Metrics for finite {M}arkov decision processes.
\newblock In \emph{Proceedings of the Conference on Uncertainty in Artificial
  Intelligence}, 2004.

\bibitem[Ferns et~al.(2006)Ferns, Castro, Precup, and
  Panangaden]{ferns06methods}
Norm Ferns, Pablo~Samuel Castro, Doina Precup, and Prakash Panangaden.
\newblock Methods for computing state similarity in {M}arkov decision
  processes.
\newblock In \emph{Conference on Uncertainty in Artificial Intelligence (UAI)},
  2006.

\bibitem[Ferns et~al.(2011)Ferns, Panangaden, and
  Precup]{ferns2011bisimulation}
Norm Ferns, Prakash Panangaden, and Doina Precup.
\newblock Bisimulation metrics for continuous {M}arkov decision processes.
\newblock \emph{SIAM Journal on Computing}, 40\penalty0 (6):\penalty0
  1662--1714, 2011.

\bibitem[Finn et~al.(2015)Finn, Tan, Duan, Darrell, Levine, and
  Abbeel]{finn2015learning}
Chelsea Finn, Xin~Yu Tan, Yan Duan, Trevor Darrell, Sergey Levine, and Pieter
  Abbeel.
\newblock Learning visual feature spaces for robotic manipulation with deep
  spatial autoencoders.
\newblock \emph{arXiv}, 2015.

\bibitem[Gelada et~al.(2019)Gelada, Kumar, Buckman, Nachum, and
  Bellemare]{gelada2019deepmdp}
Carles Gelada, Saurabh Kumar, Jacob Buckman, Ofir Nachum, and Marc~G Bellemare.
\newblock Deep{MDP}: Learning continuous latent space models for representation
  learning.
\newblock In \emph{Proceedings of the International Conference on Machine
  Learning}, 2019.

\bibitem[Gini(1912)]{gini1912variabilita}
Corrado Gini.
\newblock \emph{Variabilit{\`a} e mutabilit{\`a}: contributo allo studio delle
  distribuzioni e delle relazioni statistiche.}
\newblock Studi economico-giuridici pubblicati per cura della facolt{\`a} di
  Giurisprudenza della R. Universit{\`a} di Cagliari. Tipogr. di P. Cuppini,
  1912.

\bibitem[Givan et~al.(2003)Givan, Dean, and Greig]{Givan03}
Robert Givan, Thomas Dean, and Matthew Greig.
\newblock Equivalence notions and model minimization in {M}arkov decision
  processes.
\newblock \emph{Artificial Intelligence}, 147\penalty0 (1-2):\penalty0
  163--223, 2003.

\bibitem[Gretton et~al.(2012)Gretton, Borgwardt, Rasch, Sch{{\"o}}lkopf, and
  Smola]{gretton12a}
Arthur Gretton, Karsten~M. Borgwardt, Malte~J. Rasch, Bernhard Sch{{\"o}}lkopf,
  and Alexander Smola.
\newblock A kernel two-sample test.
\newblock \emph{Journal of Machine Learning Research}, 13\penalty0
  (25):\penalty0 723--773, 2012.

\bibitem[Guilbart(1979)]{Guilbart79}
C.~Guilbart.
\newblock Produits scalaires sur l'espace des mesures.
\newblock \emph{Annales de l'I.H.P. Probabilit\'es et statistiques},
  15\penalty0 (4):\penalty0 333--354, 1979.

\bibitem[Hafner et~al.(2019)Hafner, Lillicrap, Fischer, Villegas, Ha, Lee, and
  Davidson]{hafner2019learning}
Danijar Hafner, Timothy Lillicrap, Ian Fischer, Ruben Villegas, David Ha,
  Honglak Lee, and James Davidson.
\newblock Learning latent dynamics for planning from pixels.
\newblock In \emph{Proceedings of the International Conference on Machine
  Learning}, 2019.

\bibitem[Hansen-Estruch et~al.(2022)Hansen-Estruch, Zhang, Nair, Yin, and
  Levine]{hansen22bisimulation}
Philippe Hansen-Estruch, Amy Zhang, Ashvin Nair, Patrick Yin, and Sergey
  Levine.
\newblock Bisimulation makes analogies in goal-conditioned reinforcement
  learning.
\newblock In \emph{Proceedings of the International Conference on Machine
  Learning}, 2022.

\bibitem[Harris et~al.(2020)Harris, Millman, Van Der~Walt, Gommers, Virtanen,
  Cournapeau, Wieser, Taylor, Berg, Smith, et~al.]{harris2020array}
Charles~R Harris, K~Jarrod Millman, St{\'e}fan~J Van Der~Walt, Ralf Gommers,
  Pauli Virtanen, David Cournapeau, Eric Wieser, Julian Taylor, Sebastian Berg,
  Nathaniel~J Smith, et~al.
\newblock Array programming with numpy.
\newblock \emph{Nature}, 585\penalty0 (7825):\penalty0 357--362, 2020.

\bibitem[Hessel et~al.(2018)Hessel, Modayil, van Hasselt, Schaul, Ostrovski,
  Dabney, Horgan, Piot, Azar, and Silver]{hessel18rainbow}
Matteo Hessel, Joseph Modayil, Hado van Hasselt, Tom Schaul, Georg Ostrovski,
  Will Dabney, Dan Horgan, Bilal Piot, Mohammad Azar, and David Silver.
\newblock Rainbow: {C}ombining {I}mprovements in {D}eep {R}einforcement
  learning.
\newblock In \emph{Proceedings of the AAAI Conference on Artificial
  Intelligence}, 2018.

\bibitem[Hunter(2007)]{hunter2007matplotlib}
John~D Hunter.
\newblock Matplotlib: A 2d graphics environment.
\newblock \emph{Computing in science \& engineering}, 9\penalty0 (03):\penalty0
  90--95, 2007.

\bibitem[Jaderberg et~al.(2017)Jaderberg, Mnih, Czarnecki, Schaul, Leibo,
  Silver, and Kavukcuoglu]{jaderberg2016reinforcement}
Max Jaderberg, Volodymyr Mnih, Wojciech~Marian Czarnecki, Tom Schaul, Joel~Z
  Leibo, David Silver, and Koray Kavukcuoglu.
\newblock Reinforcement learning with unsupervised auxiliary tasks.
\newblock In \emph{Proceedings of the International Conference on Learning
  Representations}, 2017.

\bibitem[Johnson \& Lindenstrauss(1984)Johnson and
  Lindenstrauss]{johnsonLindenstrauss84}
William Johnson and Joram Lindenstrauss.
\newblock Extensions of {L}ipschitz maps into a {H}ilbert space.
\newblock \emph{Contemporary Mathematics}, 26:\penalty0 189--206, 01 1984.

\bibitem[Kantorovich \& Rubinshtein(1958)Kantorovich and
  Rubinshtein]{Kantorovich58}
Leonid~V. Kantorovich and G.~Sh. Rubinshtein.
\newblock On a space of totally additive functions.
\newblock \emph{Vestnik Leningrad. Univ}, 13\penalty0 (7):\penalty0 52--59,
  1958.

\bibitem[Kemertas \& Aumentado-Armstrong(2021)Kemertas and
  Aumentado-Armstrong]{kemertas2021towards}
Mete Kemertas and Tristan Aumentado-Armstrong.
\newblock Towards robust bisimulation metric learning.
\newblock \emph{Advances in Neural Information Processing Systems}, 2021.

\bibitem[Kemertas \& Jepson(2022)Kemertas and Jepson]{kemertas2022approximate}
Mete Kemertas and Allan~Douglas Jepson.
\newblock Approximate policy iteration with bisimulation metrics.
\newblock \emph{Transactions on Machine Learning Research}, 2022.

\bibitem[Konidaris et~al.(2011)Konidaris, Osentoski, and
  Thomas]{konidaris2011fourier}
George Konidaris, Sarah Osentoski, and Philip Thomas.
\newblock Value function approximation in reinforcement learning using the
  {F}ourier basis.
\newblock In \emph{Proceedings of the AAAI Conference on Artificial
  Intelligence}, 2011.

\bibitem[Koppel et~al.(2021)Koppel, Warnell, Stump, Stone, and
  Ribeiro]{koppel21policy}
Alec Koppel, Garrett Warnell, Ethan Stump, Peter Stone, and Alejandro Ribeiro.
\newblock Policy evaluation in continuous mdps with efficient kernelized
  gradient temporal difference.
\newblock \emph{{IEEE} Trans. Autom. Control.}, 66\penalty0 (4):\penalty0
  1856--1863, 2021.

\bibitem[Lange \& Riedmiller(2010)Lange and Riedmiller]{lange2010deep}
Sascha Lange and Martin Riedmiller.
\newblock Deep auto-encoder neural networks in reinforcement learning.
\newblock In \emph{Proceedings of the International Joint Conference on Neural
  Networks}, 2010.

\bibitem[Larsen \& Skou(1991)Larsen and Skou]{Larsen91}
Kim~G. Larsen and Arne Skou.
\newblock Bisimulation through probablistic testing.
\newblock \emph{Information and Computation}, 94:\penalty0 1--28, 1991.

\bibitem[{Le Lan} et~al.(2021){Le Lan}, Bellemare, and Castro]{lelan21metrics}
Charline {Le Lan}, Marc~G. Bellemare, and Pablo~Samuel Castro.
\newblock Metrics and continuity in reinforcement learning.
\newblock In \emph{Proceedings of the {AAAI} Conference on Artificial
  Intelligence}, 2021.

\bibitem[Lever et~al.(2016)Lever, Shawe-Taylor, Stafford, and
  Szepesv{\'a}ri]{lever16compressed}
G.~Lever, J.~Shawe-Taylor, R.~Stafford, and Cs. Szepesv{\'a}ri.
\newblock Compressed conditional mean embeddings for model-based reinforcement
  learning.
\newblock In \emph{Proceedings of the AAAI Conference on Artificial
  Intelligence}, 2016.

\bibitem[Lin et~al.(2019)Lin, Baweja, Kantor, and Held]{lin2019}
Xingyu Lin, Harjatin Baweja, George Kantor, and David Held.
\newblock Adaptive auxiliary task weighting for reinforcement learning.
\newblock In \emph{Advances in Neural Information Processing Systems}, 2019.

\bibitem[Mahadevan \& Maggioni(2007)Mahadevan and Maggioni]{Mahadevan2007}
Sridhar Mahadevan and Mauro Maggioni.
\newblock Proto-value functions: A {L}aplacian framework for learning
  representation and control in {M}arkov decision processes.
\newblock \emph{Journal of Machine Learning Research}, 8:\penalty0 2169–2231,
  Dec 2007.

\bibitem[Matousek(2013)]{matousek2013lectures}
Jiri Matousek.
\newblock \emph{Lectures on discrete geometry}, volume 212.
\newblock Springer Science \& Business Media, 2013.

\bibitem[Matthews(1994)]{matthews94}
Stephen~G. Matthews.
\newblock Partial metric topology.
\newblock \emph{Annals of the New York Academy of Sciences}, 728\penalty0
  (1):\penalty0 183--197, 1994.

\bibitem[Milner(1980)]{Milner80}
Robert Milner.
\newblock \emph{A Calculus for Communicating Systems}, volume~92 of
  \emph{Lecture Notes in Computer Science}.
\newblock Springer-Verlag, 1980.

\bibitem[Mnih et~al.(2015)Mnih, Kavukcuoglu, Silver, Rusu, Veness, Bellemare,
  Graves, Riedmiller, Fidjeland, Ostrovski, Petersen, Beattie, Sadik,
  Antonoglou, King, Kumaran, Wierstra, Legg, and Hassabis]{mnih15human}
Volodymyr Mnih, Koray Kavukcuoglu, David Silver, Andrei~A Rusu, Joel Veness,
  Marc~G Bellemare, Alex Graves, Martin Riedmiller, Andreas~K Fidjeland, Georg
  Ostrovski, Stig Petersen, Charles Beattie, Amir Sadik, Ioannis Antonoglou,
  Helen King, Dharshan Kumaran, Daan Wierstra, Shane Legg, and Demis Hassabis.
\newblock Human-level control through deep reinforcement learning.
\newblock \emph{Nature}, 518\penalty0 (7540):\penalty0 529--533, 2015.

\bibitem[Müller(1997)]{muller97ipms}
Alfred Müller.
\newblock Integral probability metrics and their generating classes of
  functions.
\newblock \emph{Advances in Applied Probability}, 29\penalty0 (2):\penalty0
  429--443, 1997.
\newblock ISSN 00018678.

\bibitem[Oliphant(2007)]{4160250}
Travis~E. Oliphant.
\newblock Python for scientific computing.
\newblock \emph{Computing in Science \& Engineering}, 9\penalty0 (3):\penalty0
  10--20, 2007.
\newblock \doi{10.1109/MCSE.2007.58}.

\bibitem[Ormoneit \& Sen(2002)Ormoneit and Sen]{ormoneit02kernel}
Dirk Ormoneit and \'{S}aunak Sen.
\newblock Kernel-based reinforcement learning.
\newblock \emph{Machine Learning}, 49\penalty0 (2--3):\penalty0 161--178, 2002.

\bibitem[Panangaden(2009)]{Panangaden2009}
Prakash Panangaden.
\newblock \emph{Labelled {M}arkov processes}.
\newblock Imperial College Press, 2009.

\bibitem[Park(1981)]{Park81b}
David Park.
\newblock Title unknown.
\newblock Slides for Bad Honnef Workshop on Semantics of Concurrency, 1981.

\bibitem[Peyr{\'e} \& Cuturi(2019)Peyr{\'e} and Cuturi]{peyre2019computational}
Gabriel Peyr{\'e} and Marco Cuturi.
\newblock Computational optimal transport.
\newblock \emph{Foundations and Trends{\textregistered} in Machine Learning},
  11\penalty0 (5-6):\penalty0 355--607, 2019.

\bibitem[Rachev et~al.(2013)Rachev, Klebanov, Stoyanov, and Fabozzi]{Rachev13}
Svetlozar Rachev, Lev~B. Klebanov, Stoyan~V. Stoyanov, and Frank~J. Fabozzi.
\newblock \emph{The method of distances in the theory of probability and
  statistics}.
\newblock Springer-Verlag, 2013.

\bibitem[Riesz(1907)]{riesz1907}
Frigyes Riesz.
\newblock \emph{Sur une esp{\`e}ce de G{\'e}om{\'e}trie analytique des
  syst{\`e}mes de fonctions sommables}.
\newblock Gauthier-Villars, 1907.

\bibitem[Robbins \& Monro(1951)Robbins and Monro]{robbinsMonro}
Herbert Robbins and Sutton Monro.
\newblock {A Stochastic Approximation Method}.
\newblock \emph{The Annals of Mathematical Statistics}, 22\penalty0
  (3):\penalty0 400 -- 407, 1951.

\bibitem[Rudin(1974)]{rudin1974functional}
Walter Rudin.
\newblock \emph{Functional Analysis}.
\newblock Tata McGraw-Hill, 1974.

\bibitem[Schoenberg(1935)]{schoenberg1935remarks}
Isaac~J. Schoenberg.
\newblock Remarks to {M}aurice {F}r{\'e}chet's article``{S}ur la definition
  axiomatique d'une classe d'espace distances vectoriellement applicable sur
  l'espace de {H}ilbert''.
\newblock \emph{Annals of Mathematics}, pp.\  724--732, 1935.

\bibitem[Schölkopf et~al.(2018)Schölkopf, Smola, and Bach]{Scholkopf2018}
Bernhard Schölkopf, Alexander~J. Smola, and Francis Bach.
\newblock \emph{Learning with kernels: Support vector machines, regularization,
  optimization, and beyond}.
\newblock The MIT Press, 2018.

\bibitem[Sejdinovic et~al.(2013)Sejdinovic, Sriperumbudur, Gretton, and
  Fukumizu]{sejdinovic2013equivalence}
Dino Sejdinovic, Bharath Sriperumbudur, Arthur Gretton, and Kenji Fukumizu.
\newblock Equivalence of distance-based and {RKHS}-based statistics in
  hypothesis testing.
\newblock \emph{The Annals of Statistics}, 41\penalty0 (5):\penalty0
  2263--2291, 2013.

\bibitem[Shelhamer et~al.(2017)Shelhamer, Mahmoudieh, Argus, and
  Darrell]{shelhamer2016loss}
Evan Shelhamer, Parsa Mahmoudieh, Max Argus, and Trevor Darrell.
\newblock Loss is its own reward: Self-supervision for reinforcement learning.
\newblock In \emph{Proceedings of the International Conference on Learning
  Representations (Workshop Track)}, 2017.

\bibitem[Sutton \& Barto(2018)Sutton and Barto]{Sutton1998}
Richard~S. Sutton and Andrew~G. Barto.
\newblock \emph{Reinforcement Learning: An Introduction}.
\newblock The MIT Press, 2018.

\bibitem[Tassa et~al.(2018)Tassa, Doron, Muldal, Erez, Li, Casas, Budden,
  Abdolmaleki, Merel, Lefrancq, Lillicrap, and Riedmiller]{tassa2018deepmind}
Yuval Tassa, Yotam Doron, Alistair Muldal, Tom Erez, Yazhe Li, Diego de~Las
  Casas, David Budden, Abbas Abdolmaleki, Josh Merel, Andrew Lefrancq, Timothy
  Lillicrap, and Martin Riedmiller.
\newblock {DeepMind} control suite.
\newblock \emph{arXiv}, 2018.

\bibitem[van Breugel \& Worrell(2001)van Breugel and Worrell]{vanBreugel01a}
Franck van Breugel and James Worrell.
\newblock Towards quantitative verification of probabilistic systems.
\newblock In \emph{Proceedings of the International Colloquium on Automata,
  Languages and Programming}, July 2001.

\bibitem[Van~Rossum \& Drake~Jr(1995)Van~Rossum and Drake~Jr]{van1995python}
Guido Van~Rossum and Fred~L Drake~Jr.
\newblock \emph{Python reference manual}.
\newblock Centrum voor Wiskunde en Informatica Amsterdam, 1995.

\bibitem[Vieillard et~al.(2020)Vieillard, Pietquin, and
  Geist]{vieillard20munchausen}
Nino Vieillard, Olivier Pietquin, and Matthieu Geist.
\newblock Munchausen reinforcement learning.
\newblock In \emph{Advances in Neural Information Processing Systems (NeurIPS
  2020)}, 2020.

\bibitem[Villani(2008)]{villani2008optimal}
C{\'e}dric Villani.
\newblock \emph{Optimal transport: Old and new}, volume 338.
\newblock Springer Science \& Business Media, 2008.

\bibitem[Yang et~al.(2020)Yang, Jin, Wang, Wang, and Jordan]{yang20provably}
Zhuoran Yang, Chi Jin, Zhaoran Wang, Mengdi Wang, and Michael Jordan.
\newblock Provably efficient reinforcement learning with kernel and neural
  function approximations.
\newblock In \emph{Advances in Neural Information Processing Systems}, 2020.

\bibitem[Yarats et~al.(2021)Yarats, Zhang, Kostrikov, Amos, Pineau, and
  Fergus]{yarats2021improving}
Denis Yarats, Amy Zhang, Ilya Kostrikov, Brandon Amos, Joelle Pineau, and Rob
  Fergus.
\newblock Improving sample efficiency in model-free reinforcement learning from
  images.
\newblock In \emph{Proceedings of the AAAI Conference on Artificial
  Intelligence}, 2021.

\bibitem[Yitzhaki(2003)]{Yitzhaki2003}
Shlomo Yitzhaki.
\newblock Gini's mean difference: A superior measure of variability for
  non-normal distributions.
\newblock \emph{Metron - International Journal of Statistics}, LXI\penalty0
  (2):\penalty0 285--316, 2003.

\bibitem[Zhang et~al.(2021)Zhang, McAllister, Calandra, Gal, and
  Levine]{zhang2021invariant}
Amy Zhang, Rowan McAllister, Roberto Calandra, Yarin Gal, and Sergey Levine.
\newblock Invariant representations for reinforcement learning without
  reconstruction.
\newblock In \emph{Proceedings of the International Conference on Learning
  Representations}, 2021.

\bibitem[Łukaszyk(2004)]{LKmetric}
Szymon Łukaszyk.
\newblock A new concept of probability metric and its applications in
  approximation of scattered data sets.
\newblock \emph{Computational Mechanics}, 33:\penalty0 299--304, 03 2004.

\end{thebibliography}
\bibliographystyle{tmlr}

\newpage
\appendix

\section{Extra technical results}\label{sec:technicalResults}

For the following two lemmas, we will make use of the sequences $(k_n)_{n\geq 0}$, $(U_n)_{n\geq 0}$ defined by $k_n\equiv 0$, $k_{n+1}=T^\pi_k (k_n)$, $U_n\equiv 0$, $U_{n+1}=T^\pi_M (U_n)$.

\begin{lemma}\label{lem:MicoEqualityLem1}
    For any point $(x_1,x_2,y_1,y_2)\in\stateSpace^4$, and $n\geq0$, we have that 
    \begin{align*}
        &\E_{\substack{X_1'\sim \transitionFn^\pi_{x_1}, X_2'\sim \transitionFn^\pi_{x_2}\\
        Y_1'\sim \transitionFn^\pi_{y_1},Y_2'\sim \transitionFn^\pi_{y_2}}}\left[k_n(X_1',X_2') + k_n(Y_1',Y_2') -2k_n(X_1',Y_1')\right]\\
         &\hspace{3cm}=\E_{\substack{X_1'\sim \transitionFn^\pi_{x_1}, X_2'\sim \transitionFn^\pi_{x_2}\\
        Y_1'\sim \transitionFn^\pi_{y_1},Y_2'\sim \transitionFn^\pi_{y_2}}}\left[U_n(X_1',Y_1') -\frac12\left( U_n(X_1',X_2') +U_n(Y_1',Y_2')\right)\right].
    \end{align*}
\end{lemma}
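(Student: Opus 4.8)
The plan is to prove the identity by induction on $n$, expanding a single application of each operator and reducing matters to the statement at index $n$. For the base case $n=0$ we have $k_0\equiv 0$ and $U_0\equiv 0$, so both sides vanish identically.

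For the inductive step I would assume the claim holds at index $n$ for \emph{every} choice of four base points, and then expand $k_{n+1}=T^\pi_k(k_n)$ and $U_{n+1}=T^\pi_M(U_n)$ using their defining recursions. Writing out the kernel combination $k_{n+1}(X_1',X_2')+k_{n+1}(Y_1',Y_2')-2k_{n+1}(X_1',Y_1')$, the three constant contributions $1+1-2$ cancel, the immediate reward terms collapse to $|r^\pi_{X_1'}-r^\pi_{Y_1'}|-\tfrac12|r^\pi_{X_1'}-r^\pi_{X_2'}|-\tfrac12|r^\pi_{Y_1'}-r^\pi_{Y_2'}|$, and there remains a $\gamma$-weighted term built from inner expectations of $k_n$ over the successors of $X_1',X_2',Y_1',Y_2'$. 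Performing the analogous expansion of $U_{n+1}$ on the right-hand side, one checks that the immediate reward contributions match \emph{exactly}, so that it only remains to reconcile the $\gamma$-weighted recursive terms.

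The crux of the argument is that, conditioned on a realisation of $(X_1',X_2',Y_1',Y_2')$, the inner $\gamma$-term on the kernel side is precisely the left-hand side of the lemma at index $n$ with base points $(X_1',X_2',Y_1',Y_2')$, while the inner $\gamma$-term on the $U$-side is the corresponding right-hand side. The induction hypothesis equates these two quantities pointwise, and taking the outer expectation over $(X_1',X_2',Y_1',Y_2')\sim\transitionFn^\pi_{x_1}\times\transitionFn^\pi_{x_2}\times\transitionFn^\pi_{y_1}\times\transitionFn^\pi_{y_2}$ (justified by the tower property, since all four families of random variables are independent) then closes the induction. I expect the only real obstacle to be the bookkeeping of these nested expectations: one must carefully recognise the inner expectation as a genuine instance of the lemma at the lower index, with the random successors playing the role of the four base points, and then justify exchanging the induction hypothesis with the outer expectation. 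The operator-level cancellations — the constants vanishing and the reward terms aligning — are routine once the expansions are written out, so the conceptual work lies entirely in setting up the conditioning correctly.
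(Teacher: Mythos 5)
Your proposal is correct and follows essentially the same route as the paper's own proof: induction on $n$, expansion of $T^\pi_k$ and $T^\pi_M$, cancellation of the constant terms, exact matching of the reward terms, and application of the induction hypothesis at the realised successors $(X_1',X_2',Y_1',Y_2')$ before taking the outer expectation. The conditioning/tower-property step you flag as the main bookkeeping obstacle is precisely how the paper handles the nested expectations, so there is no gap.
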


\begin{proof}
    We show this by induction. Both sides are identically zero at $n=0$, so we set $n\geq 0$ and assume the induction hypothesis. We can then write out
    {\small
    \begin{align*}
        &\E_{\substack{X_1'\sim \transitionFn^\pi_{x_1}, X_2'\sim \transitionFn^\pi_{x_2}\\
        Y_1'\sim \transitionFn^\pi_{y_1},Y_2'\sim \transitionFn^\pi_{y_2}}}\left[k_{n+1}(X_1',X_2') + k_{n+1}(Y_1',Y_2') -2k_{n+1}(X_1',Y_1')\right]\\
        &=\E_{\substack{X_1'\sim \transitionFn^\pi_{x_1}, X_2'\sim \transitionFn^\pi_{x_2}\\
        Y_1'\sim \transitionFn^\pi_{y_1},Y_2'\sim \transitionFn^\pi_{y_2}}}\Bigg[\left(|r^\pi_{X_1}-r^\pi_{Y_1}|-\frac12(|r^\pi_{X_1}-r^\pi_{X_2}|+|r^\pi_{Y_1}-r^\pi_{Y_2}|)\right)\\
        &\hspace{3cm} +\E_{\substack{X_1''\sim \piTransitionFn_{X_1'},X_2''\sim \piTransitionFn_{X_2'}\\Y_1''\sim \piTransitionFn_{Y_1'},Y_2''\sim \piTransitionFn_{Y_2'}}}\Bigg[ k_n(X_1'',X_2'') + k_n(Y_1'',Y_2'') -2k_n(X_1'',Y_1'')\Bigg]\Bigg]\\
        &=\E_{\substack{X_1'\sim \transitionFn^\pi_{x_1}, X_2'\sim \transitionFn^\pi_{x_2}\\
        Y_1'\sim \transitionFn^\pi_{y_1},Y_2'\sim \transitionFn^\pi_{y_2}}}\Bigg[\left(|r^\pi_{X_1}-r^\pi_{Y_1}|-\frac12(|r^\pi_{X_1}-r^\pi_{X_2}|+|r^\pi_{Y_1}-r^\pi_{Y_2}|)\right)\Bigg]\\
        &\hspace{3cm} +\E_{\substack{X_1'\sim \transitionFn^\pi_{x_1}, X_2'\sim \transitionFn^\pi_{x_2}\\
        Y_1'\sim \transitionFn^\pi_{y_1},Y_2'\sim \transitionFn^\pi_{y_2}}}\vast[\E_{\substack{X_1''\sim \piTransitionFn_{X_1'},X_2''\sim \piTransitionFn_{X_2'}\\Y_1''\sim \piTransitionFn_{Y_1'},Y_2''\sim \piTransitionFn_{Y_2'}}}\Bigg[ k_n(X_1'',X_2'') + k_n(Y_1'',Y_2'') -2k_n(X_1'',Y_1'')\Bigg]\vast]\\
        &=\E_{\substack{X_1'\sim \transitionFn^\pi_{x_1}, X_2'\sim \transitionFn^\pi_{x_2}\\
        Y_1'\sim \transitionFn^\pi_{y_1},Y_2'\sim \transitionFn^\pi_{y_2}}}\Bigg[\left(|r^\pi_{X_1}-r^\pi_{Y_1}|-\frac12(|r^\pi_{X_1}-r^\pi_{X_2}|+|r^\pi_{Y_1}-r^\pi_{Y_2}|)\right)\Bigg]\\
        &\hspace{3cm} +\E_{\substack{X_1'\sim \transitionFn^\pi_{x_1}, X_2'\sim \transitionFn^\pi_{x_2}\\
        Y_1'\sim \transitionFn^\pi_{y_1},Y_2'\sim \transitionFn^\pi_{y_2}}}\vast[\E_{\substack{X_1''\sim \piTransitionFn_{X_1'},X_2''\sim \piTransitionFn_{X_2'}\\Y_1''\sim \piTransitionFn_{Y_1'},Y_2''\sim \piTransitionFn_{Y_2'}}}\Bigg[ U_n(X_1'',Y_1'')-\frac12\left(U_n(Y_1'',Y_2'')+U_n(Y_1'',Y_2'')\right)\Bigg]\vast]\\
        &=\E_{\substack{X_1'\sim \transitionFn^\pi_{x_1}, X_2'\sim \transitionFn^\pi_{x_2}\\
        Y_1'\sim \transitionFn^\pi_{y_1},Y_2'\sim \transitionFn^\pi_{y_2}}}\Bigg[\left(|r^\pi_{X_1}-r^\pi_{Y_1}|-\frac12(|r^\pi_{X_1}-r^\pi_{X_2}|+|r^\pi_{Y_1}-r^\pi_{Y_2}|)\right)\\
        &\hspace{3cm} +\E_{\substack{X_1''\sim \piTransitionFn_{X_1'},X_2''\sim \piTransitionFn_{X_2'}\\Y_1''\sim \piTransitionFn_{Y_1'},Y_2''\sim \piTransitionFn_{Y_2'}}}\Bigg[ U_n(X_1'',Y_1'')-\frac12\left(U_n(Y_1'',Y_2'')+U_n(Y_1'',Y_2'')\right)\Bigg]\vast]\\
         &=\E_{\substack{X_1'\sim \transitionFn^\pi_{x_1}, X_2'\sim \transitionFn^\pi_{x_2}\\
        Y_1'\sim \transitionFn^\pi_{y_1},Y_2'\sim \transitionFn^\pi_{y_2}}}\left[U_{n+1}(X_1',Y_1')-\frac12(U_{n+1}(X_1',X_2')+U_{n+1}(Y_1',Y_2'))\right],
    \end{align*}
    }
    as desired.
\end{proof}


\lemTwo*

\begin{proof}
We proceed to show this by induction. The base case is straightforward, as both sides are identically zero. We can now assume the induction hypothesis, and can write out
{\smaller
    \begin{align*}
        &\hspace{-.75cm}\E_{\substack{X_1,X_2\sim \mu\\Y_1,Y_2\sim \nu}}\left[k_{n+1}(X_1,X_2)+k_{n+1}(Y_1,Y_2)-2k_{n+1}(X_1,Y_1)\right]\\ 
        &\hspace{-.75cm}=\E_{\substack{X_1,X_2\sim \mu\\Y_1,Y_2\sim \nu}}\left[\left(|r^\pi_{X_1}-r^\pi_{Y_1}|-\frac12(|r^\pi_{X_1}-r^\pi_{X_2}|+|r^\pi_{Y_1}-r^\pi_{Y_2}|)\right)+ \gamma\E_{\substack{X_1'\sim \transitionFn^\pi_{X_1}\\
        X_2'\sim \transitionFn^\pi_{X_2}\\
        Y_1'\sim \transitionFn^\pi_{Y_1}\\Y_2'\sim \transitionFn^\pi_{Y_2}}}\left[k_n(X_1',X_2') + k_n(Y_1',Y_2') -2k_n(X_1',Y_1')\right]\right]\\
        &\hspace{-.75cm}=\E_{\substack{X_1,X_2\sim \mu\\Y_1,Y_2\sim \nu}}\left[\left(|r^\pi_{X_1}-r^\pi_{Y_1}|-\frac12(|r^\pi_{X_1}-r^\pi_{X_2}|+|r^\pi_{Y_1}-r^\pi_{Y_2}|)\right)+ \gamma\E_{\substack{X_1'\sim \transitionFn^\pi_{X_1}\\
        X_2'\sim \transitionFn^\pi_{X_2}\\
        Y_1'\sim \transitionFn^\pi_{Y_1}\\Y_2'\sim \transitionFn^\pi_{Y_2}}}\left[U_n(X_1',Y_1')-\frac12(U_n(X_1',X_2')+U_n(Y_1',Y_2'))\right]\right](\star)\\
        &\hspace{-.75cm}=\E_{\substack{X_1,X_2\sim \mu\\Y_1,Y_2\sim \nu}}\left[U_{n+1}(X_1,Y_1)-\frac12(U_{n+1}(X_1,X_2)+U_{n+1}(Y_1,Y_2)\right],
    \end{align*}
    }
    where $(\star)$ follows from \cref{lem:MicoEqualityLem1}.
    
\end{proof}

\section{Background}
\label{sec:appendixBackground}
\subsection{Markov decision processes}

We consider Markov decision processes (MDPs) given by $(\stateSpace, \actionSpace, \transitionFn, \rewardFn, \gamma)$, where $\stateSpace$ is a
finite state space, $\actionSpace$ a set of actions,
$\transitionFn:\stateSpace\times \actionSpace\to \mathscr{P}(\stateSpace)$ a transition
kernel, and
$\rewardFn:\stateSpace\times \actionSpace\to \probabilitySpace(\R)$ a reward kernel (where $\mathscr{P}(\mathcal{Z})$ is the set of probability distributions on a measurable set $\mathcal{Z}$). We will write $\transitionFn^a_x\coloneqq \transitionFn(x,a)$ for the transition distribution from taking action
$a$ in state $x$,
$\rewardFn^a_x\coloneqq \rewardFn(x,a)$ for the reward distribution from taking action
$a$ in state $x$, and write $r^a_x$ for the expectation of this distribution.
A policy $\pi$ is a mapping $\stateSpace \to \probabilitySpace(\actionSpace)$.  We use the
notation
$\piStateOneTransitionFn=\sum_{a\in\actionSpace}\pi(a|\stateOne)\,\transitionFn^a_{\stateOne}$
to indicate the state distribution obtained by following one step of a policy $\pi$ while
in state $\stateOne$.
We use
$\piStateOneRewardDist=\sum_{a\in\actionSpace}\pi(a|\stateOne)\,\rewardFn^a_{\stateOne}$
to represent the reward distribution from $\stateOne$ under $\pi$, and
$\piStateOneRewardFn$ to indicate the expected value of this distribution.
$\gamma\in [0,1)$ is the discount factor used to compute the discounted long-term return.

We will often make use of the random trajectory $(X_t,A_t,R_t)_{t\geq0}$, where $X_t$,
$A_t$, and $R_t$ are random variables representing the state, action, and reward at time
$t$, respectively.  We will occasionally take expectations with respect to policies,
written as $\E_\pi[\cdot]$, which should be read as the expectation, given that for all
$t\geq 0$ we choose $A_t\sim\pi(\cdot|X_t)$, and receive
$R_t\sim \rewardFn(\cdot| X_t, A_t)$ and
$X_{t+1}\sim \transitionFn(\cdot| X_t, A_t)$.

The \emph{value} of a policy $\pi$ is the expected total return an agent attains from following $\pi$, and is described by a function $V^\pi:\stateSpace\to \R$, such that for each $\stateOne\in\stateSpace$, 
\[ V^\pi(x) = \Epi \left[\sum_{t\geq 0}\gamma^t R_t \given X_0=x \right],\]
A related quantity is the action-value function $Q^\pi:\stateSpace\times \actionSpace\to\R$, which indicates the value of taking an action in a state, and then following the policy:
\[ Q^\pi(x,a) = \Epi \left[\sum_{t\geq 0}\gamma^t R_t\given X_0=x, A_0 =a \right].\]

A foundational relationship in reinforcement learning is the \emph{Bellman equation}, which allows the value function of a state to be written recursively in terms of next states.  It exists in two forms, for $V^\pi$ and $Q^\pi$ respectively:
\begin{align*}
V^\pi(x) &=\Epi \left[ R_0 + \gamma V^\pi(X_1) \given X_0=x  \right], \\
Q^\pi(x,a) &=\Epi \left[ R_0 + \gamma V^\pi(X_1) \given X_0=x, A_0=a  \right].
\end{align*}

Rewriting these equations without the use of the random trajectory, we have
\begin{align*}
V^\pi(x) &= r^\pi_x + \gamma \E_{\xPrime\sim \piStateOneTransitionFn}\left[ V^\pi(\xPrime) \right], \\
Q^\pi(x,a) &= r^a_x + \gamma \E_{\xPrime\sim \transitionFn^a_x, A'\sim \pi(\cdot|\xPrime)}\left[ Q^\pi(\xPrime,A') \right].
\end{align*}
The Bellman operator $T^\pi$ transforms the above equations into an operator over $\R^\stateSpace$ (or $\R^{\stateSpace\times \actionSpace}$ -- we will overload the use of $T^\pi$ and let the type signature indicate which is being used), given by 
\begin{align*}
T^\pi V(x) &= r^\pi_x + \gamma \E_{\xPrime\sim \piStateOneTransitionFn}\left[ V(\xPrime) \right], \\
T^\pi Q(x,a) &= r^a_x + \gamma \E_{\xPrime\sim \transitionFn^a_x, A'\sim \pi(\cdot|\xPrime)}\left[ Q(\xPrime,A') \right].
\end{align*}

Written in this way, we see that $Q^\pi$ and $V^\pi$ are fixed points of $T^\pi$, and with some work one can also see that $T^\pi$ is a contraction with modulus $\gamma$.  As a corollary of Banach's fixed point theorem, one can choose $V_0$ arbitrarily and update $V_{k+1}=T^\pi V_k$, and converge to $V^\pi$, this is the algorithm known as \emph{value iteration}.  

An optimal policy $\pi^*$ is a policy which achieves the maximum value function at each state, which we will denote $V^*$. It satisfies the Bellman optimality recurrence:
\[ V^*(x) = \max_{a\in\actionSpace}\left[ R_0 + \gamma V^*(X_1) | X_0 = x\right]. \]

\section{Behavioural metrics}\label{sec:appendixMetricsBackground}
In this section, we review various distances which have appeared in literature, and
discuss how they relate to each other.  A behavioural metric is usually defined using the
following pattern.  One is comparing the difference between two states, the first and most
obvious difference between the states is a reward difference, accordingly this is the
first term in the behavioural metric definition.  But one wants to take into account the
differences in the subsequent evolution of the system starting from the two states.  Thus,
one needs a notion of difference in the ``next states''.  However, since these are
probabilistic systems, there is no unique next state; one has a probability distribution
over the next states.  Thus, one needs a metric that can measure the differences between
probability distributions.

Metrics between probability distrubutions have a rich theory~\citep{Rachev13} and history.
We review parts of this theory in the first subsection before using these metrics to
construct behavioural metrics between the states of an MDP.

\subsection{Metrics on probability distributions}

\subsubsection{The Kantorovich metric}
\label{sec:kantorovich}
Given two probability measures $\mu$ and $\nu$ on a set $\stateSpace$,
a
coupling $\lambda \in \mathscr{P}(\stateSpace\times \stateSpace)$ of the
measures is a joint distribution with marginals $\mu$ and $\nu$.  Formally, we
have that for every measurable subset $A\subset \stateSpace$,
\[\lambda(A\times \stateSpace)=\mu(A) \text{ and } \lambda(\stateSpace\times A)=\nu(A).\]
We define $\Lambda(\mu,\nu)$ to represent the set of all couplings of $\mu$ and $\nu$.
This set is non-empty in general, in particular the independent coupling $\lambda = \mu
\times \nu$ always exists.  Couplings are essential for the definition of the Kantorovich
metric $\W$ \citep{Kantorovich58} (also known as the Wasserstein metric), a metric on the space of probability distributions on $\stateSpace$.  Given a metric
$d$ on $\stateSpace$\footnote{To be precise, we require $(\stateSpace,d)$ to be a Polish space, but we drop these technical assumptions for clarity of presentation.}, the Kantorovich metric is defined as  
\[\W(d)(\mu, \nu) = \inf_{\lambda \in \Lambda(\mu,\nu)} \int d(x,y)\,\mathsf{d}\lambda(x,y).  \]
The coupling which attains the infimum always exists, and is referred to as the
\emph{optimal coupling} of $\mu$ and $\nu$ \citep{villani2008optimal}.

\subsubsection{The Łukaszyk–Karmowski distance}
\label{sec:lkDistance}

We recall that the Kantorovich metric optimizes over the space of all couplings; indeed,
the  computational difficulty of calculating the Kantorovich distance comes from 
calculating this infimum, since for each pair of measures one must solve an optimization
problem.  The \emph{Łukaszyk–Karmowski distance} $\lk$ \citep{LKmetric}avoids this
optimization, and instead considers the independent coupling between the measures.  That is,  
\[\lk(d)(\mu,\nu)=\int d(x,y)\,\mathrm{d}(\mu\times \nu)(x,y),\]
or equivalently
\[\lk(d)(\mu,\nu)=\E_{X\sim \mu,Y\sim\nu}[d(X,Y)].\]
Without the need for the optimization over couplings, the computation of $\lk$ reduces
to the above computation, which is much more computationally efficient.  When the base
distance $d$ is the Euclidean distance $\|\cdot\|$, the Łukaszyk–Karmowski distance has
been used in econometrics, usually referred to as Gini's coefficient
\citep{gini1912variabilita,Yitzhaki2003}. 

The computational advantage comes at a price, however.  While the Kantorovich metric satisfies
all the axioms of a proper metric, the Łukaszyk–Karmowski distance does not. This is due to the fact that measures can have non-negative self distances, meaning one may find a measure $\mu$ such that $\lk(d)(\mu,\mu)>0$.  One can show that a measure $\mu$ satisfies
$\lk(d)(\mu,\mu)=0$ if and only if $\mu$ is a Dirac measure, that is $\mu=\delta_x$ for
some $x\in\stateSpace$ \citep{LKmetric}.  Intuitively,
$\lk(d)(\mu,\mu)$ should be seen as a measure of \emph{dispersion} of $\mu$.  One
interpretation of this in the literature \citep{LKmetric} is that $\lk$ captures a
concept of \emph{uncertainty}: given two random variables $X\sim\mu$, $Y\sim\nu$, unless
$\mu$ and $\nu$ are point masses, observed values of $X$ and $Y$ are less likely to be
equal depending on the dispersions of $\mu$ and $\nu$.  Hence $\lk$ captures a measure
of uncertainty in the observed distance of $X$ and $Y$, compared to a proper probability
metric which would assign distance $0$ if $X\overset{L}{=}Y$.

The concept of distance functions with non-zero self distances has been considered before,
in particular through \emph{partial metrics} \citep{matthews94}.  A partial metric is a
function $d:\stateSpace\times \stateSpace\to [0,\infty)$ such that for any $\stateOne,
\stateTwo,\stateThree\in \stateSpace$: 
\begin{itemize}
    \item $0\leq d(\stateOne,\stateTwo)$ \null\hfill \emph{Non-negativity}
    \item $d(\stateOne,\stateOne)\leq d(\stateOne,\stateTwo)$ \null\hfill \emph{Small self-distances}
    \item $d(\stateOne,\stateTwo)=d(\stateTwo,\stateOne)$ \null\hfill \emph{Symmetry}
    \item if $d(\stateOne,\stateOne)=d(\stateOne,\stateTwo)=d(\stateTwo,\stateTwo)$, then $x=y$ \null\hfill \emph{Indistancy implies equality}
    \item $d(\stateOne,\stateTwo)\leq d(\stateOne,\stateThree) + d(\stateTwo,\stateThree)-d(\stateThree,\stateThree) $ \null\hfill \emph{Modified triangle inequality}
\end{itemize}

We note that there were additional axioms added to this definition, rather than simply
removing the requirement that $d(\stateOne,\stateOne)=0$.  This is due to the fact that
this definition was constructed so that one can easily construct a proper metric
$\tilde d$ from a partial metric $d$, given by
$\tilde{d}(x,y)=d(x,y)-\frac12(d(x,x)+d(y,y))$.  We can now show that this definition is
indeed too strong for the Łukaszyk–Karmowski distance, which we demonstrate in the
following examples.

\begin{example}
The Łukaszyk–Karmowski distance does not have small self-distances.
\end{example}
\begin{proof}
Take $\stateSpace=[0,1]$, $d=|\cdot|$, $\mu=\delta_{1/2}$, $\nu=U([0,1])$.  
Then one can calculate $\lk(d)(\nu,\nu)=\int_0^1\int_0^1|x-y|dxdy=\frac13$, and $\lk(d)(\mu,\nu)=\int_0^1|x-\frac12|dx=\frac14$. But then we have $\lk(d)(\nu,\nu)=\frac13>\frac14=\lk(d)(\mu,\nu)$.
\end{proof}

\begin{example}
The Łukaszyk–Karmowski distance does not satisfy the modified triangle inequality.
\end{example}
\begin{proof}
Take $\stateSpace=[0,1]$, $d=|\cdot|$, $\mu=\delta_0$, $\nu=\delta_1$, $\eta=\frac12(\delta_0+\delta_1)$. We can then calculate $\lk(d)(\mu,\nu)=|1-0|=1$, $\lk(d)(\mu,\eta)=\lk(d)(\nu,\eta)=\frac12(0)+\frac12(1)=\frac12$, $\lk(d)(\eta,\eta)=\frac14(0)+\frac12(1)+\frac14(0)=\frac12$. Combining, we have $\lk(d)(\mu,\eta)+d(\nu,\eta)-d(\eta,\eta)=\frac12+\frac12-\frac12$, which then gives us
\[\lk(d)(\mu,\nu)=1> \frac12 = \lk(d)(\mu,\eta)+d(\nu,\eta)-d(\eta,\eta),\]
breaking the modified triangle inequality.  
\end{proof}

To account for this, \cite{castro21mico} introduced a new notion of distance known as
\emph{diffuse metrics}.  A diffuse metric is a function $d:\stateSpace\times
\stateSpace\to [0,\infty)$ such that for any $\stateOne, \stateTwo,\stateThree\in
\stateSpace$: 
\begin{itemize}
    \item $0\leq d(\stateOne,\stateTwo)$ \null\hfill \emph{Non-negativity}
    \item $d(\stateOne,\stateTwo)=d(\stateTwo,\stateOne)$ \null\hfill \emph{Symmetry}
    \item $d(\stateOne,\stateTwo)\leq d(\stateOne,\stateThree) + d(\stateTwo,\stateThree) $ \null\hfill \emph{Triangle inequality}
\end{itemize}

It is straightforward to see that the Łukaszyk–Karmowski distance is a diffuse metric.  In
addition, an attractive property of the Łukaszyk–Karmowski distance for the reinforcement
learning setting is that it lends itself readily to stochastic approximation.  Given two
independent streams of samples $(x_n)_{n\geq 1}$, $(y_n)_{n\geq 1}$ from random variables $X\sim \mu$
and $Y\sim\nu$, a base metric $d$ such that $d(X,Y)$ has finite variance, and a sequence of step sizes
$(\alpha_n)_{n\geq 1}$ satisfying the Robbins-Monro conditions, one can construct a
sequence of iterates $(d_n)$ defined by
\[d_n = (1-\alpha_n)\, d_{n-1}+\alpha_n \,d(x_n,y_n),\]
with $d_0=0$. Then we have $d_n\to \lk(d)(\mu,\nu)$ as $n\to\infty$ \citep{robbinsMonro}.

\subsection{Bisimulation metrics}\label{metricsBackground}
\subsubsection{Bisimulation relations}
Bisimulation was invented in the context of concurrency theory by \citet{Milner80} and \citet{Park81b}.
Probabilistic bisimulation \citep{Larsen91,Blute97,Desharnais02,Panangaden2009}
(henceforth just called bisimulation) is an equivalence on the state space of a
labelled Markov process, where two states are considered equivalent if the
behaviour from the states are \emph{indistinguishable}.  To define
indistinguishability, we demand that transition probabilities to equivalence
classes should be the same for equivalent states; that is, the equivalence
classes preserve the dynamics of the process.  In addition if there are more
observables, for example, rewards, those should match as well.  Bisimulation for MDPs
was defined in \citet{Givan03}.
This intuition can now be transformed into a definition.

\begin{definition}
An equivalence relation $R$ on $\stateSpace$ is a bisimulation relation if 
\[xRy \implies \forall a \in A, r^a_x=r^a_y \text{ and }\forall C \in \stateSpace/R, \, \transitionFn^a_x(C)=\transitionFn^a_y(C).\]
\end{definition}

We say that states $x$ and $y$ are bisimilar if there exists a bisimulation
relation $R$ such that $xRy$.  We remark that there exists at least one
bisimulation relation, as the diagonal relation $\Delta=\{(x,x):x\in
\stateSpace\}$ is always a bisimulation relation, albeit the least interesting
one.  One refers to the largest bisimulation relation as $\sim$, which is often
the one of interest.  This version is due to Larsen and Skou~\citep{Larsen91}
and the extension to continuous state spaces is due to~\citep{Blute97,Desharnais02}.

As Markov decision processes may be seen as Markov processes with rewards, bisimulation relations and metrics have a natural analogue in this setting.

\subsubsection{Bisimulation metrics}
The stringency of bisimulation relations is apparent in the MDP case: if two
states have bisimilar transition dynamics, that is we have that $\forall a$ and
$\forall C \in \stateSpace/R, \, \transitionFn^a_x(C)=\transitionFn^a_y(C)$, but
$|r^a_x-r^a_y|=\varepsilon>0$, then $x$ and $y$ are in different equivalence
classes.  This motivates us to introduce a metric analogue of bisimulation. We will write $\mathcal{M}(\stateSpace)$ to represent the space of bounded pseudometrics on $\stateSpace$.

\begin{theorem}[\citet{Ferns04}]
Define $\mathcal{F}:\mathcal{M}(\stateSpace)\to \mathcal{M}(\stateSpace)$ as 
\[\mathcal{F}(d)(x,y) = \max_{a\in \actionSpace}\left( \,|r^a_x-r^a_y |+\gamma\, \mathcal{W}(d)(\transitionFn^a_x,\transitionFn^a_y) \right).  \]
Then $\mathcal{F}$ is a contraction in $\|\cdot\|_\infty$ with modulus $\gamma$, and hence exhibits a unique fixed point $d_{\sim}$, which we denote the bisimulation metric.  
\end{theorem}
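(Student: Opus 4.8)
The plan is to obtain the fixed point via Banach's theorem, mirroring the structure used for $T^\pi_k$ in \cref{lem:kernelOperatorContraction} and \cref{lem:kernelComplete}. Concretely, I would (i) verify that $\mathcal{F}$ is well-defined, i.e. $\mathcal{F}(d)\in\mathcal{M}(\stateSpace)$ whenever $d\in\mathcal{M}(\stateSpace)$; (ii) show $(\mathcal{M}(\stateSpace),\|\cdot\|_\infty)$ is complete; (iii) show $\mathcal{F}$ is a $\gamma$-contraction; and (iv) invoke Banach's fixed point theorem to conclude.

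For (i), non-negativity and symmetry of $\mathcal{F}(d)$ are immediate from those of $|r^a_x-r^a_y|$ and of $\W$, and boundedness is automatic since $\stateSpace$ is finite. Vanishing on the diagonal holds because $\mathcal{F}(d)(x,x)=\max_a(0+\gamma\,\W(d)(\transitionFn^a_x,\transitionFn^a_x))=0$, using the diagonal coupling together with the fact that $d$ vanishes on its diagonal. The triangle inequality is the only nontrivial axiom: fixing $x,y,z$ and letting $a^\star$ attain the maximum defining $\mathcal{F}(d)(x,z)$, I would bound $|r^{a^\star}_x-r^{a^\star}_z|$ using the triangle inequality for $|\cdot|$ and $\W(d)(\transitionFn^{a^\star}_x,\transitionFn^{a^\star}_z)$ using the triangle inequality for the Kantorovich metric, then split into two action-indexed terms each dominated by the corresponding maximum, yielding $\mathcal{F}(d)(x,z)\le \mathcal{F}(d)(x,y)+\mathcal{F}(d)(y,z)$. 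For (ii), I would mirror \cref{lem:kernelComplete}: as $\stateSpace$ is finite, $\R^{\stateSpace\times\stateSpace}$ is a finite-dimensional, hence complete, normed space, so it suffices to note that $\mathcal{M}(\stateSpace)$ is closed under $\|\cdot\|_\infty$, which follows since each pseudometric axiom is preserved under uniform limits.

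For (iii), the key lemma is that the Kantorovich functional is non-expansive in its base metric: for fixed $\mu,\nu$ and any coupling $\lambda\in\Lambda(\mu,\nu)$ we have $|\int d_1\,\mathrm{d}\lambda-\int d_2\,\mathrm{d}\lambda|\le\|d_1-d_2\|_\infty$, and passing to infima over $\lambda$ (using $|\inf f-\inf g|\le\sup|f-g|$) gives $|\W(d_1)(\mu,\nu)-\W(d_2)(\mu,\nu)|\le\|d_1-d_2\|_\infty$. Combining this with $|\max_a f(a)-\max_a g(a)|\le\max_a|f(a)-g(a)|$ (the reward terms cancel) yields $|\mathcal{F}(d_1)(x,y)-\mathcal{F}(d_2)(x,y)|\le\gamma\max_a|\W(d_1)(\transitionFn^a_x,\transitionFn^a_y)-\W(d_2)(\transitionFn^a_x,\transitionFn^a_y)|\le\gamma\|d_1-d_2\|_\infty$, and taking the maximum over $(x,y)$ gives the contraction. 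Banach's fixed point theorem then delivers the unique fixed point $d_\sim$.

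The main obstacle is establishing the two structural properties of the Kantorovich metric on which the argument rests — its triangle inequality (needed in (i)) and its non-expansiveness in the base metric (needed in (iii)) — since both require reasoning through the infimum over couplings. The triangle inequality relies on the gluing lemma, which composes couplings of $(\mu,\eta)$ and $(\eta,\nu)$ into a coupling of $(\mu,\nu)$, while the non-expansiveness requires the pointwise bound to hold uniformly in $\lambda$ before taking infima. Everything else — the $\max$ manipulations and the completeness argument — is routine and directly parallels the proofs already given for $T^\pi_k$.
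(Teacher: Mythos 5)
Your proof is correct: the paper does not actually prove this statement — it is quoted as background from \citet{Ferns04} — and your argument (well-definedness of $\mathcal{F}$ on pseudometrics via the gluing lemma, completeness of $(\mathcal{M}(\stateSpace),\|\cdot\|_\infty)$ by closedness in $\R^{\stateSpace\times\stateSpace}$, $\gamma$-contraction via non-expansiveness of the Kantorovich functional in its base metric, then Banach) is the standard proof given in that reference. It also mirrors exactly the template the paper itself uses for the kernel operator in \cref{lem:kernelOperatorContraction} and \cref{lem:kernelComplete}, so there is nothing to flag.
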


Justification for the term bisimulation metric follows from the fact that the kernel (the kernel of a pseudometric $d$ is the set of pairs of points deemed `equivalent', formally the set $\{x,y: d(x,y)=0\}$) of $d_\sim$ is a bisimulation relation.

\begin{proposition}[\citet{Ferns04}]
$V^*$ is $1$-Lipschitz with respect to $d_\sim$, that is for any $\stateOne,\stateTwo\in\stateSpace$,
\[|V^*(x)-V^*(y)|\leq  d_\sim(x,y).\] 
\end{proposition}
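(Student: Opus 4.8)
The plan is to exploit the characterisation of $d_\sim$ as the unique fixed point of the $\gamma$-contraction $\mathcal{F}$, combined with a closed-invariant-set argument. Define
\[ S = \big\{ d \in \mathcal{M}(\stateSpace) \,:\, |V^*(x) - V^*(y)| \leq d(x,y) \text{ for all } x,y \in \stateSpace \big\}, \]
i.e.\ the set of bounded pseudometrics with respect to which $V^*$ is $1$-Lipschitz. First I would verify that $S$ is nonempty and closed in $(\mathcal{M}(\stateSpace), \|\cdot\|_\infty)$. Nonemptiness follows by taking $d_0(x,y) = c\,\mathbbm{1}[x \neq y]$ with $c = \sup_{u,v}|V^*(u)-V^*(v)| < \infty$ (finite since rewards are bounded), which is a bounded metric lying in $S$. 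Closedness is immediate: if $d_n \to d$ in $\|\cdot\|_\infty$ with each $d_n \in S$, then $|V^*(x)-V^*(y)| \leq d_n(x,y) \to d(x,y)$ pointwise, so $d \in S$.

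The heart of the argument is to show $\mathcal{F}(S) \subseteq S$. Assume $d \in S$, fix $x,y$, and suppose without loss of generality that $V^*(x) \geq V^*(y)$. Let $a^*$ attain the maximum in the Bellman optimality equation at $x$, so that $V^*(x) = r^{a^*}_x + \gamma\,\E_{\xPrime \sim \transitionFn^{a^*}_x}[V^*(\xPrime)]$, while $V^*(y) \geq r^{a^*}_y + \gamma\,\E_{\yPrime \sim \transitionFn^{a^*}_y}[V^*(\yPrime)]$ by optimality at $y$. Subtracting gives
\[ V^*(x) - V^*(y) \leq |r^{a^*}_x - r^{a^*}_y| + \gamma\,\Big| \E_{\xPrime \sim \transitionFn^{a^*}_x}[V^*(\xPrime)] - \E_{\yPrime \sim \transitionFn^{a^*}_y}[V^*(\yPrime)] \Big|. \]
To control the transition term I would use that $d \in S$ is precisely the statement that $V^*$ is $1$-Lipschitz with respect to $d$: for any coupling $\lambda \in \Lambda(\transitionFn^{a^*}_x, \transitionFn^{a^*}_y)$,
\[ \Big| \E_{\xPrime}[V^*(\xPrime)] - \E_{\yPrime}[V^*(\yPrime)] \Big| \leq \E_{(\xPrime,\yPrime)\sim\lambda}\big[ |V^*(\xPrime) - V^*(\yPrime)| \big] \leq \E_{(\xPrime,\yPrime)\sim\lambda}[ d(\xPrime,\yPrime) ], \]
and taking the infimum over couplings yields the bound $\W(d)(\transitionFn^{a^*}_x, \transitionFn^{a^*}_y)$. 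Combining, $V^*(x)-V^*(y) \leq |r^{a^*}_x - r^{a^*}_y| + \gamma\,\W(d)(\transitionFn^{a^*}_x, \transitionFn^{a^*}_y) \leq \mathcal{F}(d)(x,y)$, since the right-hand side of $\mathcal{F}$ is a maximum over all actions; hence $\mathcal{F}(d) \in S$.

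To conclude, start from any $d_0 \in S$ and iterate $d_{n+1} = \mathcal{F}(d_n)$. By invariance every $d_n \in S$, and since $\mathcal{F}$ is a $\gamma$-contraction with fixed point $d_\sim$, we have $d_n \to d_\sim$ in $\|\cdot\|_\infty$; closedness of $S$ then forces $d_\sim \in S$, which is exactly the claim. I expect the main obstacle to be the transition-term bound: the key observation is that membership in $S$ is equivalent to $1$-Lipschitzness of $V^*$, which is what allows the difference of expectations under $\transitionFn^{a^*}_x$ and $\transitionFn^{a^*}_y$ to be dominated by $\W(d)$ via the coupling (primal) definition of the Kantorovich metric.
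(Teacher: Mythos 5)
Your proof is correct, and it is worth noting that the paper itself states this proposition without proof, citing \citet{Ferns04}; the original argument there proceeds by a simultaneous induction along the iterates, showing $|V_n(x)-V_n(y)|\leq d_n(x,y)$ where $V_{n+1}$ is obtained by value iteration and $d_{n+1}=\mathcal{F}(d_n)$, and then passing both sequences to their limits (this is also the style this paper adopts for its own Theorem 10). Your route is different in packaging: you keep $V^*$ fixed throughout, define the set $S$ of bounded pseudometrics dominating $|V^*(x)-V^*(y)|$, and establish that $S$ is nonempty, closed in $\|\cdot\|_\infty$, and $\mathcal{F}$-invariant, so that Banach iteration from any point of $S$ forces $d_\sim\in S$. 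The two key ingredients are the same in both proofs --- the difference-of-maxima bound via the optimal action $a^*$ at the larger-valued state, and the domination of $\bigl|\E_{\transitionFn^{a^*}_x}[V^*]-\E_{\transitionFn^{a^*}_y}[V^*]\bigr|$ by $\W(d)(\transitionFn^{a^*}_x,\transitionFn^{a^*}_y)$ through an arbitrary coupling followed by an infimum --- and your invariance step is exactly the inductive step of the classical proof. What your version buys is that it avoids tracking the value-iteration sequence and its convergence, working with the Bellman optimality equation for $V^*$ directly; the only extra obligations it incurs (nonemptiness via the scaled discrete metric $c\,\mathbbm{1}[x\neq y]$, and closedness of $S$ under uniform limits) are handled correctly, including the finiteness of $c=\sup_{u,v}|V^*(u)-V^*(v)|$ from bounded rewards and $\gamma<1$. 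I see no gap.
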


\subsection{\texorpdfstring{$\pi$}{pi}-bisimulation metrics}

Bisimulation considers equivalence across all possible actions, which is a strong notion of equivalence.  In many settings, in a given state an agent may not be concerned with the behaviour under every possible action, but instead only with the actions which it may take under a given policy.  On-policy bisimulation \citep{Castro20} was introduced to address this.  The definition is a straightforward modification of bisimulation, adapted to a given policy.

\begin{definition}[On-policy bisimulation relations]
Let $\pi$ be a fixed policy.  An equivalence relation $R$ on $\stateSpace$ is a $\pi$-bisimulation relation if 
\[xRy\implies r^\pi_x=r^\pi_y \text{ and }\forall C \in \stateSpace/R, \transitionFn^\pi_x(C)=\transitionFn^\pi_y(C).\]
\end{definition}

\begin{remark}
It is important to note that while the definitions of bisimulation relations and $\pi$-bisimulation relations appear very similar, they have
intrinsic differences.  In particular, two states which are bisimilar need not be
$\pi$-bisimilar, and two states which are $\pi$-bisimilar need not be bisimilar.  To see
this intuitively, consider two states which are bisimilar, then any action from
either state obtains the same reward
and transitions to bisimulation equivalence classes 
with equal probabilities.  But a given policy may select actions differently between
the two states so that the expected rewards under the policy are different between the
states, and in particular the two states are not $\pi$-bisimilar.  On the other hand, two
states may have different dynamics across different actions, and hence not be bisimilar,
but the policy can balance the actions such that the states are $\pi$-bisimilar. 
\end{remark}

The $\pi$-bisimilarity equivalence relation, like ordinary bisimulation, is sensitive to
small changes in the system parameters; so defining a metric in place of a relation is the
natural next step.

\begin{theorem}[\citet{Castro20}]
For a policy $\pi$, define $\mathcal{F}^\pi:\mathcal{M}(\stateSpace)\to \mathcal{M}(\stateSpace)$ as 
\[\mathcal{F}^\pi(d)(x,y) =  |r^\pi_x-r^\pi_y |+\gamma \mathcal{W}(d)(\transitionFn^\pi_x,\transitionFn^\pi_y) .  \]
Then $\mathcal{F}^\pi$ is a contraction in $\|\cdot\|_\infty$ with modulus $\gamma$, and hence admits a unique
fixed point $d^\pi_{\sim}$, which we define to be the $\pi$-bisimulation metric.   
\end{theorem}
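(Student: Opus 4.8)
The plan is to verify the three hypotheses of Banach's fixed point theorem for $\mathcal{F}^\pi$ acting on $(\mathcal{M}(\stateSpace),\|\cdot\|_\infty)$: that $\mathcal{F}^\pi$ maps this space into itself, that it is a $\gamma$-contraction, and that the space is complete. The structure closely mirrors the arguments already used for $T^\pi_k$ in \cref{lem:kernelOperatorContraction} and \cref{lem:kernelComplete}, with the role of the kernel lift $k_2$ now played by the Kantorovich lift $\mathcal{W}$ (see \cref{sec:kantorovich}).

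First I would check that $\mathcal{F}^\pi(d)$ is again a bounded pseudometric whenever $d$ is. Non-negativity and symmetry are immediate, since $|r^\pi_x-r^\pi_y|$ is symmetric and non-negative and $\mathcal{W}(d)$ inherits symmetry and non-negativity from $d$; the zero-self-distance property $\mathcal{F}^\pi(d)(x,x)=0$ follows because the diagonal coupling witnesses $\mathcal{W}(d)(\transitionFn^\pi_x,\transitionFn^\pi_x)=0$, and boundedness is clear since $\mathcal{W}(d)\leq\|d\|_\infty$. The only axiom requiring real care is the triangle inequality, for which I would invoke the standard fact that $\mathcal{W}(d)$ is itself a metric on $\mathscr{P}(\stateSpace)$ and hence satisfies the triangle inequality; combining this with the triangle inequality for the scalar reward term $|r^\pi_x-r^\pi_y|$ yields the triangle inequality for $\mathcal{F}^\pi(d)$.

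The crux of the proof, and the step I expect to be the main obstacle, is the contraction estimate. After subtracting, the reward terms cancel and we are left with bounding $\gamma\,|\mathcal{W}(d_1)(\transitionFn^\pi_x,\transitionFn^\pi_y)-\mathcal{W}(d_2)(\transitionFn^\pi_x,\transitionFn^\pi_y)|$. The key lemma I would establish is that the Kantorovich lift is $1$-Lipschitz in its base distance with respect to $\|\cdot\|_\infty$: for any $\mu,\nu$ and any coupling $\lambda\in\Lambda(\mu,\nu)$ one has $\int d_1\,\mathsf{d}\lambda\leq\int d_2\,\mathsf{d}\lambda+\|d_1-d_2\|_\infty$, and taking the infimum over couplings gives $\mathcal{W}(d_1)(\mu,\nu)\leq\mathcal{W}(d_2)(\mu,\nu)+\|d_1-d_2\|_\infty$; swapping the roles of $d_1$ and $d_2$ supplies the matching bound, so that $|\mathcal{W}(d_1)(\mu,\nu)-\mathcal{W}(d_2)(\mu,\nu)|\leq\|d_1-d_2\|_\infty$. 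Substituting $\mu=\transitionFn^\pi_x$, $\nu=\transitionFn^\pi_y$ and taking the maximum over $(x,y)$ then yields $\|\mathcal{F}^\pi(d_1)-\mathcal{F}^\pi(d_2)\|_\infty\leq\gamma\,\|d_1-d_2\|_\infty$.

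Finally, completeness of $(\mathcal{M}(\stateSpace),\|\cdot\|_\infty)$ follows exactly as in \cref{lem:kernelComplete}: since $\stateSpace$ is finite, $\R^{\stateSpace\times\stateSpace}$ is a finite-dimensional, hence complete, normed space, and the defining properties of a bounded pseudometric (non-negativity, symmetry, zero self-distance, the triangle inequality, and a uniform bound) are all preserved under uniform limits, so $\mathcal{M}(\stateSpace)$ is a closed, and therefore complete, subset. With the three hypotheses in hand, Banach's fixed point theorem delivers a unique fixed point $d^\pi_\sim$, completing the proof.
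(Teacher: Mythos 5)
The paper states this theorem purely as imported background from \citet{Castro20} and gives no proof of it, so there is no in-paper argument to compare yours against; the closest analogues are the paper's own proofs of \cref{lem:kernelOperatorContraction} and \cref{lem:kernelComplete} for the kernel operator, whose structure you correctly mirror. On its own merits your proof is sound and complete: the self-map verification (with the triangle inequality inherited from the fact that $\mathcal{W}(d)$ is a pseudometric on $\mathscr{P}(\stateSpace)$ whenever $d$ is, and boundedness from $\mathcal{W}(d)\leq\|d\|_\infty$), the key non-expansiveness estimate $|\mathcal{W}(d_1)(\mu,\nu)-\mathcal{W}(d_2)(\mu,\nu)|\leq\|d_1-d_2\|_\infty$ obtained by bounding along an arbitrary coupling and taking infima, the completeness of $(\mathcal{M}(\stateSpace),\|\cdot\|_\infty)$ via closedness in the finite-dimensional space $\R^{\stateSpace\times\stateSpace}$, and the final appeal to Banach's fixed point theorem. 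This is essentially the standard argument from the original sources \citep{Ferns04,Castro20}, with the minor difference that those works typically justify non-expansiveness of the Kantorovich lift through its dual linear-programming formulation, whereas your primal coupling argument is more elementary and equally valid; the only step you invoke rather than prove is the triangle inequality for $\mathcal{W}(d)$ (the gluing lemma for couplings), which is a reasonable citation-level fact.
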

It is straightforward to see that the kernel of $d^\pi_\sim$ is an on-policy bisimulation
relation, justifying its name. Akin to bisimulation metrics, $\pi$-bisimulation metrics possess desirable continuity properties when it comes to \emph{policy} value functions. 

\begin{proposition}[\citet{Castro20}]
Let $\pi$ be any policy, then $V^\pi$ is $1$-Lipschitz with respect to $d^\pi_{\sim}$, that is for any $\stateOne,\stateTwo\in\stateSpace$,
\[|V^\pi(x)-V^\pi(y)|\leq d^\pi_{\sim}(x,y).\] 
\end{proposition}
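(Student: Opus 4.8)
The plan is to prove the bound by a value-iteration argument, comparing the fixed-point iterates of the Bellman operator $T^\pi$ and the metric operator $\mathcal{F}^\pi$. First I would introduce the two sequences $V_0\equiv 0$, $V_{n+1}=T^\pi V_n$ and $d_0\equiv 0$, $d_{n+1}=\mathcal{F}^\pi(d_n)$. Since $T^\pi$ and $\mathcal{F}^\pi$ are both $\gamma$-contractions in $\|\cdot\|_\infty$, Banach's fixed point theorem gives $V_n\to V^\pi$ and $d_n\to d^\pi_\sim$ uniformly. It then suffices to establish, for every $n$, the statement that $V_n$ is $1$-Lipschitz with respect to $d_n$, namely $|V_n(x)-V_n(y)|\leq d_n(x,y)$ for all $x,y\in\stateSpace$, and then pass to the limit.

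The heart of the argument is the inductive step. The base case $n=0$ is immediate since both sides vanish. Assuming $|V_n(x)-V_n(y)|\leq d_n(x,y)$, I would expand $V_{n+1}$ using the Bellman recursion and apply the triangle inequality to obtain
\[|V_{n+1}(x)-V_{n+1}(y)|\leq |r^\pi_x-r^\pi_y|+\gamma\left|\E_{\xPrime\sim\piStateOneTransitionFn}[V_n(\xPrime)]-\E_{\yPrime\sim\piStateTwoTransitionFn}[V_n(\yPrime)]\right|.\]
The key step is to bound the second term by $\gamma\,\W(d_n)(\piStateOneTransitionFn,\piStateTwoTransitionFn)$. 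For this I would take $\lambda$ to be an optimal coupling of $\piStateOneTransitionFn$ and $\piStateTwoTransitionFn$ for the metric $d_n$ (which exists by the discussion of the Kantorovich metric in \cref{sec:kantorovich}), rewrite the difference of expectations as $\E_{(\xPrime,\yPrime)\sim\lambda}[V_n(\xPrime)-V_n(\yPrime)]$, and then apply Jensen's inequality together with the induction hypothesis:
\[\left|\E_{(\xPrime,\yPrime)\sim\lambda}[V_n(\xPrime)-V_n(\yPrime)]\right|\leq \E_{(\xPrime,\yPrime)\sim\lambda}[|V_n(\xPrime)-V_n(\yPrime)|]\leq \E_{(\xPrime,\yPrime)\sim\lambda}[d_n(\xPrime,\yPrime)]=\W(d_n)(\piStateOneTransitionFn,\piStateTwoTransitionFn).\]

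Combining these, the right-hand side becomes exactly $|r^\pi_x-r^\pi_y|+\gamma\,\W(d_n)(\piStateOneTransitionFn,\piStateTwoTransitionFn)=\mathcal{F}^\pi(d_n)(x,y)=d_{n+1}(x,y)$, which closes the induction and shows $V_{n+1}$ is $1$-Lipschitz with respect to $d_{n+1}$. Taking $n\to\infty$ and using the uniform convergence $V_n\to V^\pi$, $d_n\to d^\pi_\sim$ then yields the claimed bound $|V^\pi(x)-V^\pi(y)|\leq d^\pi_\sim(x,y)$. An alternative would be to bypass the explicit coupling by invoking Kantorovich--Rubinstein duality, since the induction hypothesis states precisely that $V_n$ is admissible in the dual supremum defining $\W(d_n)$; I expect the coupling route to be slightly cleaner, as it avoids appealing to the dual formulation. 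The main subtlety throughout is the correct use of the optimal coupling to relate the difference of expectations to the Kantorovich distance, while the contraction and limiting arguments are routine.
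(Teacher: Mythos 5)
Your proof is correct. The paper itself states this proposition without proof (it is a background result cited from \citet{Castro20}), and your argument---induction over the coupled iterates of $T^\pi$ and $\mathcal{F}^\pi$, using an optimal coupling together with the marginal property and Jensen's inequality to bound the difference of next-state expectations by $\gamma\,\W(d_n)(\piStateOneTransitionFn,\piStateTwoTransitionFn)$, then passing to the limit---is essentially the standard proof given in that reference (and in \citet{Ferns04} for the bisimulation case), so there is nothing to flag.
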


\subsubsection{Learning bisimulation metrics}\label{sec:learningBisim}
While bisimulation metrics come from a rich theoretical background, they lack application in practice due to the difficulty of learning them in online settings, which is desirable for many representation learning purposes. If $\transitionFn$ and $\rewardFn$ were known exactly, then $\mathcal{F}^\pi$ can be repeatedly applied in a dynamic programming fashion, and will converge as it is a contractive map. However, when $\transitionFn$ and $\rewardFn$ are unknown and only samples are available, learning $d_\pi^\sim$ becomes troublesome, as estimates for $\mathcal{W}$ are generally biased and result in learning different fixed points \citep{ferns06methods, comanici2012fly}.

\subsection{Deep bisimulation for control}

\citet{zhang2021invariant} propose a method of learning $\pi$-bisimulation metrics in representation space. They train a Gaussian dynamics model $\hat{\transitionFn}$ to approximate $\transitionFn$, and learn an encoder $\phi:\stateSpace\to \R^d$. They train $\phi$ so that representation distances approximate $\pi$-bisimulation distances, and use gradient descent to train
\[\|\phi(x)-\phi(y)\|_1 \approx |r^\pi_x-r^\pi_y| +\gamma \mathcal{W}_2(\|\cdot\|_2)(\widehat{\piStateOneTransitionFn},\,\widehat{\piStateTwoTransitionFn}). \]
The choice of $\mathcal{W}_2(\|\cdot\|_2)$ and Gaussian transitions $\hat{\transitionFn}$ is for computational efficiency, as 
\[\mathcal{W}_2(\|\cdot\|_2)(\mathcal{N}(\mu_1,\Sigma_1),\,\mathcal{N}(\mu_2,\Sigma_2))=\|\mu_1-\mu_2\|_2^2+\left\|\Sigma_1^{1/2}-\Sigma_2^{1/2}\right\|^2_F,\]
where $\|\cdot\|_F$ is the Frobenius norm. As noted by \citet{kemertas2021towards}, this computational advantage widened the theory-practice gap, as (i) it was not proven whether a $\pi$-bisimulation existed when $\mathcal{W}_2$ was used (this was since proven in \citet{kemertas2022approximate}), (ii) the $L_1$ norm was used for representation distances but the $L_2$ norm was used for the base metric of $\mathcal{W}_2$, and (iii) a dynamics model was used instead of the ground truth dynamics.

\subsection{Background on reproducing kernel Hilbert spaces}\label{sec:appendixKernelBackground}

We begin by reviewing mathematical background covering vector spaces, reproducing kernel Hilbert spaces, the MMD, and its equivalence to the energy distance.  

\subsubsection{Hilbert spaces}
A (real) normed space is a vector space $V$ with a function $\|\cdot\|: V\to \R$, which satisfies the following for all $x,y\in V$, $\alpha \in \R$:
\begin{itemize}
    \item $\|x\|\geq 0$ \null\hfill \emph{Positivity}
    \item $\|x\|= 0\iff x=0$ \null\hfill \emph{Identity of indiscernibles}
    \item $\|\alpha x\|=|\alpha|\|x\|$ \null\hfill \emph{Absolute homogeneity with respect to scalar multiplication}
    \item $\|x+y\|\leq \|x\|+\|y\|$ \null\hfill \emph{Triangle inequality}
\end{itemize}
A normed space is a stronger notion than a metric space, since a norm induces a metric $d$ through $d(x,y)=\|x-y\|$. An inner product space is a stronger notion of a normed space, which is described as a vector space $V$ with a function $\langle \cdot ,\cdot \rangle : V\times V\to \R$ such that for all $x,y,z \in V$, $\alpha,\beta \in \R$:
\begin{itemize}
    \item $\langle x, y\rangle=\langle  y,x\rangle$ \null\hfill \emph{Symmetry}
    \item $\langle x, \alpha y+\beta z \rangle =  \alpha \langle x,y \rangle+\beta \langle x,  z \rangle$ \null\hfill \emph{Linearity in the first argument}
    \item $\langle x,x \rangle \geq 0$ and $\langle x,x \rangle=0 \iff x=0$ \null\hfill \emph{Positive definiteness}
\end{itemize}
An inner product induces a normed space through $\|x\|=\langle x,x\rangle^{1/2}$. If an inner product space induces a normed space whose topology is complete, then the space $(V, \langle\cdot,\cdot\rangle)$ is referred to as a \emph{Hilbert space}. A normed space whose topology is complete is referred to as a \emph{Banach space}, and we remark that Hilbert spaces are a proper subset of Banach spaces.

Hilbert spaces have many desirable properties, and one which will become important for the following theory is the \emph{Riesz representation theorem} \citep{riesz1907}. Given a Hilbert space $V$, a map $T:V\to \R$ is linear if: 
\[T(\alpha x+ \beta y)=\alpha \,T(x)+\beta \, T(y) \text{ for all }x,y\in V,\, \alpha,\beta \in \R.\]
Continuity is easy to verify for linear maps: a linear map $T$ is continuous if and only if $T$ is bounded, meaning that there exists $C\in \R$ such that 
\[\|T(x)\| \leq C \|x\|, \text{ for all }x\in V.\]
The set of all continuous linear operators on $V$ is known as the dual space of $V$, and often referred to as $V^\star$. The Riesz representation theorem states that if $V$ is a Hilbert space, then $V$ and $V^\star$ are isometrically isomorphic. Equivalently, this means that for any linear operator $T:V\to \R$, there exists a unique $x_T\in V$ such that
\[\langle x,x_T\rangle = T(x) \text{ for all }x\in V.\]
The interested reader can refer to any text on functional analysis for further details, such as \citet{rudin1974functional}.

\subsubsection{Reproducing kernel Hilbert spaces and the MMD}\label{sec:rkhsDefinition}
Let $\stateSpace$ be a finite set and $\cH$ be a Hilbert space of real functions on $\stateSpace$. For a point $x\in \stateSpace$, the evaluation functional $L_x: \cH\to\R$ is defined by
\[L_x(f)=f(x).\]
If $L_x$ is a continuous functional for all $x\in \stateSpace$, we say that $\cH$ is a \emph{reproducing kernel Hilbert space} (RKHS) \citep{Scholkopf2018,aronszajn50reproducing}. Suppose $\cH$ is a reproducing kernel Hilbert space, then for each $x\in\stateSpace$, $L_x$ is linear and continuous, and the Riesz representation theorem implies that there exists a unique $k_x\in \cH$ such that 
\[L_x(f) = \langle f, k_x \rangle_{\cH}.\]
Since $k_x\in \cH$, we can write
\[k_x(y)=L_y(k_x)=\langle  k_x, k_y \rangle_{\cH}.\]
This is used to define the \emph{reproducing kernel} $k$ of $\mathcal{H}$ as $k(x,y)=\langle  k_x, k_y \rangle_{\cH}$. We will sometimes write $k_\cH$ to emphasize the dependence of the kernel on the Hilbert space. One can note that the functions $k_x$ and $k_y$ above can be recovered as the kernel fixed at a single point, that is $k_x=k(x,\cdot)\in \cH$, and $k_y=k(y,\cdot)\in \cH$. This is where the \emph{reproducing property} comes from, as we see that $k$ `reproduces' itself:
\[k(x,y)=\langle  k(x,\cdot), k(y,\cdot) \rangle_{\cH}.\]

In the previous paragraphs, we began with a Hilbert space of functions whose evaluation functional was continuous and obtained a reproducing kernel for this space. On the other hand, in \cref{sec:rkhs} we took the opposite direction: we began with a positive definite kernel on a set and constructed a Hilbert space of functions, the equivalence of these two approaches is known as the \emph{Moore-Aronszajn theorem} \citep{aronszajn50reproducing}. 

Let us recall the definition of the MMD introduced earlier in the main text:

\mmdDefn*

The MMD can also be seen as arising from a lifting of kernels on $\stateSpace$ onto kernels on $\mathscr{P}(\stateSpace)$ \citep{Guilbart79}. Given a kernel $k$ on $\stateSpace$, define $K(\mu,\nu)$ for $\mu,\nu\in \mathscr{P}(\stateSpace)$ as
\[K(\mu,\nu)=\langle \Phi(\mu),\Phi(\nu) \rangle_{\cH_k}=\int_{\stateSpace\times\stateSpace}k(x,y)\,d(\mu\otimes\nu)(x,y).\]
It is immediate that $K$ retains all properties of being a positive definite kernel as it arises from the inner product $\langle \cdot,\cdot \rangle_{\cH_{k}}$. The MMD can then be seen as the metric $\rho_k$ on $\mathscr{P}(\stateSpace)$. We remark that the MMD with $K$ allows one to metrize $\mathscr{P}(\mathscr{P}(\stateSpace))$, but we do not need this in this work.

One can also show that the MMD is an integral probability metric \citep{muller97ipms}, since we can show that
\[\text{MMD}(k)(\mu,\nu)=\sup_{f\in \cH_k: \|f\|_{\cH_k}\leq 1}\left|\int_\stateSpace fd\mu-\int_\stateSpace fd\nu\right|.\]

To see that this corresponds to the MMD as defined above, one can write out
\begin{align*}
    \sup_{f\in \cH_k: \|f\|_{\cH_k}\leq 1}\left|\int_\stateSpace fd\mu-\int_\stateSpace fd\nu\right|&=\sup_{f\in \cH_k: \|f\|_{\cH_k}\leq 1}\left|\langle f, \Phi(\mu)\rangle_{\cH_k}-\langle f, \Phi(\nu)\rangle_{\cH_k}\right|\\
    &=\sup_{f\in \cH_k: \|f\|_{\cH_k}\leq 1}\left|\langle f, \Phi(\mu)-\Phi(\nu)\rangle_{\cH_k}\right|\\
    &= \|\Phi(\mu)-\Phi(\nu)\|_{\cH_k},
\end{align*}
where we used the following fact for general Hilbert spaces $\cH$: $\sup_{x: \|x\|_\cH\leq 1}\langle x, y\rangle_\cH = \|y\|_\cH$, which follows from the Cauchy-Schwarz inequality.

\newpage
\section{Extra empirical results}

\begin{figure}[!h]
\centering
\includegraphics[width=0.99\linewidth]{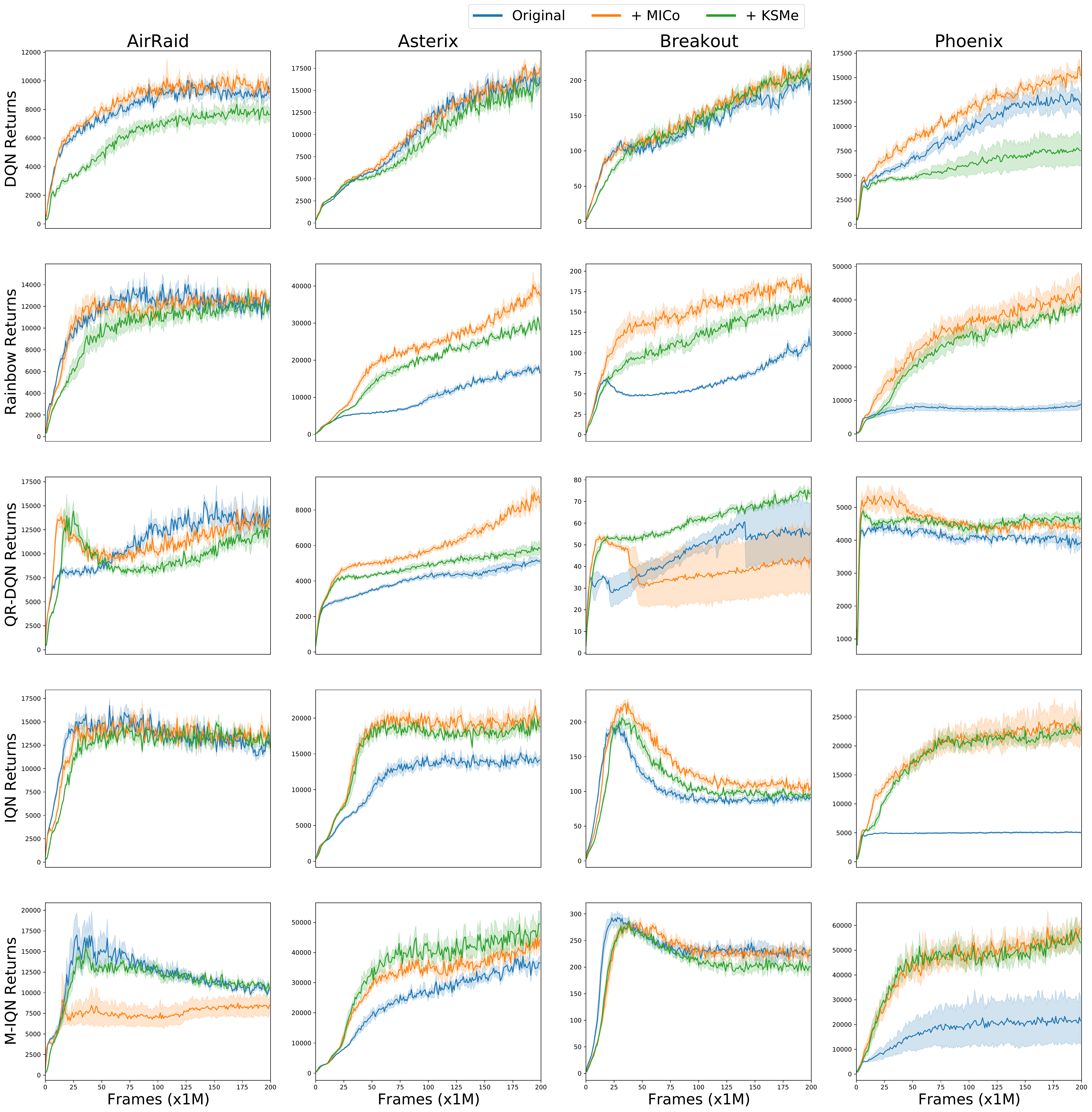}
\label{fig:allGames}
\caption{Comparison of adding KSMe versus MICo on all the Dopamine \citep{castro18dopamine} value-based agents, on four representative games. Solid lines represent the average over 5 independent runs, while the shaded areas represent 75\% confidence intervals.}
\end{figure}

\end{document}